\documentclass{article}

    \PassOptionsToPackage{numbers}{natbib}



\usepackage[final]{neurips_2021}


\usepackage[utf8]{inputenc} 
\usepackage[T1]{fontenc}    
\usepackage{hyperref}       
\usepackage{nicefrac}       

\usepackage{cite}
\usepackage{amsmath,amssymb,amsfonts,amsthm}
\usepackage{bm}
\usepackage{algorithmic}
\usepackage{graphicx}
\usepackage{textcomp}
\usepackage{xcolor}
\usepackage{bbm}
\usepackage{dsfont}
\usepackage{subcaption}
\usepackage{url}
\usepackage{hyperref}
\usepackage{cleveref}
\usepackage[font=small]{caption}
\usepackage[ruled,vlined]{algorithm2e}
\usepackage{booktabs,makecell,multirow,hhline,threeparttable} 
\usepackage{xcolor,colortbl,pgf}
\definecolor{Color1}{RGB}{240, 240, 240}
\usepackage{makecell}
\usepackage{xspace}
\usepackage{units}
\usepackage{verbatim}
\usepackage{soul}
\usepackage{textcomp} 


\newcommand{\hb}{HBaR\xspace}
\newcommand{\nol}{M}
\newcommand{\reals}{\mathbb{R}}

\newcommand{\loss}{\mathcal{L}}
\newcommand{\aloss}{\loss_r}
\newcommand{\hbarloss}{\tilde{\loss}}
\newcommand{\hbaraloss}{\tilde{\loss}_r}
\newcommand{\HSIC}{\mathop{\mathrm{HSIC}}}
\newcommand{\dx}{d_X}
\newcommand{\dy}{k}
\newcommand{\dz}{d_Z}
\newcommand{\dzj}{d_{Z_j}}
\newcommand{\com}[1]{\textcolor{black}{#1}}

\newcommand{\fcom}[1]{\textcolor{black}{#1}}
\newcommand{\eqcom}{\color{black}}

\newtheorem{theorem}{Theorem}

\newtheorem{lemma}[theorem]{Lemma}
\newtheorem{assumption}{Assumption}

\newcommand{\E}{\mathbb{E}}

\newenvironment{packeditemize}{\begin{list}{$\bullet$}{\setlength{\itemsep}{0pt}\addtolength{\labelwidth}{1pt}\setlength{\leftmargin}{\labelwidth}\setlength{\listparindent}{\parindent}\setlength{\parsep}{1pt}\setlength{\topsep}{3pt}}}{\end{list}}


\title{Revisiting Hilbert-Schmidt Information Bottleneck for Adversarial Robustness}

%

\author{%
  Zifeng Wang\thanks{Equal contribution.}\\
  Northeastern University\\
  \texttt{zifengwang@ece.neu.edu} \\
  \And
  Tong Jian$^*$\\
  Northeastern University\\
  \texttt{jian@ece.neu.edu} \\
  \AND
  Aria Masoomi\\
  Northeastern University\\
  \texttt{masoomi.a@northeastern.edu} \\
   \And
  Stratis Ioannidis\\
  Northeastern University\\
  \texttt{ioannidis@ece.neu.edu} \\
  \And
  Jennifer Dy\\
  Northeastern University\\
  \texttt{jdy@ece.neu.edu} \\
}

\begin{document}

\maketitle

\begin{abstract}
  We investigate the HSIC (Hilbert-Schmidt independence criterion) bottleneck as a regularizer for learning an adversarially robust deep neural network classifier. \fcom{ In addition to the usual cross-entropy loss, we add regularization terms for every intermediate layer to ensure that the latent representations retain useful information for output prediction while reducing redundant information.} We show that the HSIC bottleneck enhances robustness to adversarial attacks both theoretically and experimentally. \fcom{In particular, we prove that the HSIC bottleneck regularizer reduces the sensitivity of the classifier to adversarial examples.} Our experiments on multiple benchmark datasets and architectures demonstrate that incorporating an HSIC bottleneck regularizer attains competitive natural accuracy and improves adversarial robustness, both with and without adversarial examples during training. \fcom{Our code and adversarially robust models are publicly available.\footnote{\texttt{\url{https://github.com/neu-spiral/HBaR}}}}
\end{abstract}

\section{Introduction} \label{sec:intro}
Adversarial attacks \citep{goodfellow2014explaining, madry2017towards, moosavi2016deepfool, carlini2017towards,autoattack} to deep neural networks (DNNs) have received considerable attention recently. Such attacks are intentionally crafted to change  prediction outcomes, e.g, by adding  visually imperceptible perturbations to the original, natural examples \citep{szegedy2013intriguing}. Adversarial robustness, i.e., the ability of a trained model to maintain its predictive power under such attacks, is an important property for many safety-critical applications \citep{adv_self_driving, adv_health, adv_surveillance}. The most common approach to construct adversarially robust models is via adversarial training \citep{yan2018deep, zhang2019theoretically, wang2019improving}, i.e., training the model over adversarially constructed samples.

%


Alemi et al.~\citep{alemi2016deep} propose using the so-called \emph{Information Bottleneck} (IB) \citep{tishby2000information, tishby2015deep} to ehnance adversarial robustness. Proposed by  Tishby and Zaslavsky \citep{tishby2015deep}, the information bottleneck expresses a tradeoff between (a) the mutual information of the input and latent  layers vs.~(b) the mutual information between latent layers and the output.  Alemi et al. show empirically that using IB as a learning objective for DNNs indeed leads to better adversarial robustness. Intuitively, the IB objective increases the entropy between input and latent layers; in turn, this also increases the model's robustness, as it makes latent layers less sensitive to  input perturbations.

Nevertheless, mutual information is notoriously expensive to compute. The  Hilbert-Schmidt independence criterion (HSIC) has been used as a tractable, efficient substitute in a variety of machine learning tasks \citep{knet, wu2020deep, wu2019solving}. Recently,  Ma et al.~\citep{ma2020hsic} also exploited this relationship to propose an \emph{HSIC bottleneck} (HB), as a variant to the more classic (mutual-information based) information bottleneck, though not in the context of adversarial robustness.

We revisit the HSIC bottleneck, studying its adversarial robustness properties. In contrast to both Alemi et al.~\citep{alemi2016deep} and Ma et al.~\citep{ma2020hsic}, we use the HSIC bottleneck as a regularizer in addition to commonly used losses for DNNs (e.g., cross-entropy). Our proposed approach, HSIC-Bottleneck-as-Regularizer (\hb) can be used in conjunction with adversarial 
examples; even without adversarial training, it is able to improve a classifier's robustness. It also significantly outperforms previous IB-based methods for robustness, as well as the method proposed by Ma et al. 

Overall, we make the following contributions:
\begin{packeditemize}
    \item[1.] We  apply the HSIC bottleneck as a regularizer for the purpose of adversarial robustness.
    \item[2.] We provide a theoretical motivation for  the constituent terms of the \hb penalty, proving that it indeed constrains the output perturbation produced by adversarial attacks. 
    \item[3.] We show that \hb can be naturally combined with a broad array of  state of the art adversarial training  methods, consistently improving their robustness.
    \item[4.] We empirically show that this phenomenon persists  even for weaker methods. In particular, \hb can  even enhance the adversarial robustness of plain SGD, without access to adversarial examples. 
\end{packeditemize}

The remainder of this paper is structured as follows. We  review  related work in Sec.~\ref{sec:related_work}. In Sec.~\ref{sec:background}, we  discuss the standard setting of adversarial robustness and HSIC. In Sec.~\ref{sec:method}, we provide a theoretical justification that HBaR reduces the sensitivity of the classifier to adversarial examples. Sec.~\ref{sec:experiments} includes our experiments; we conclude in Sec.~\ref{sec:conclusion}.

\begin{figure}[!t]
    \centering
    \includegraphics[width=0.7\columnwidth]{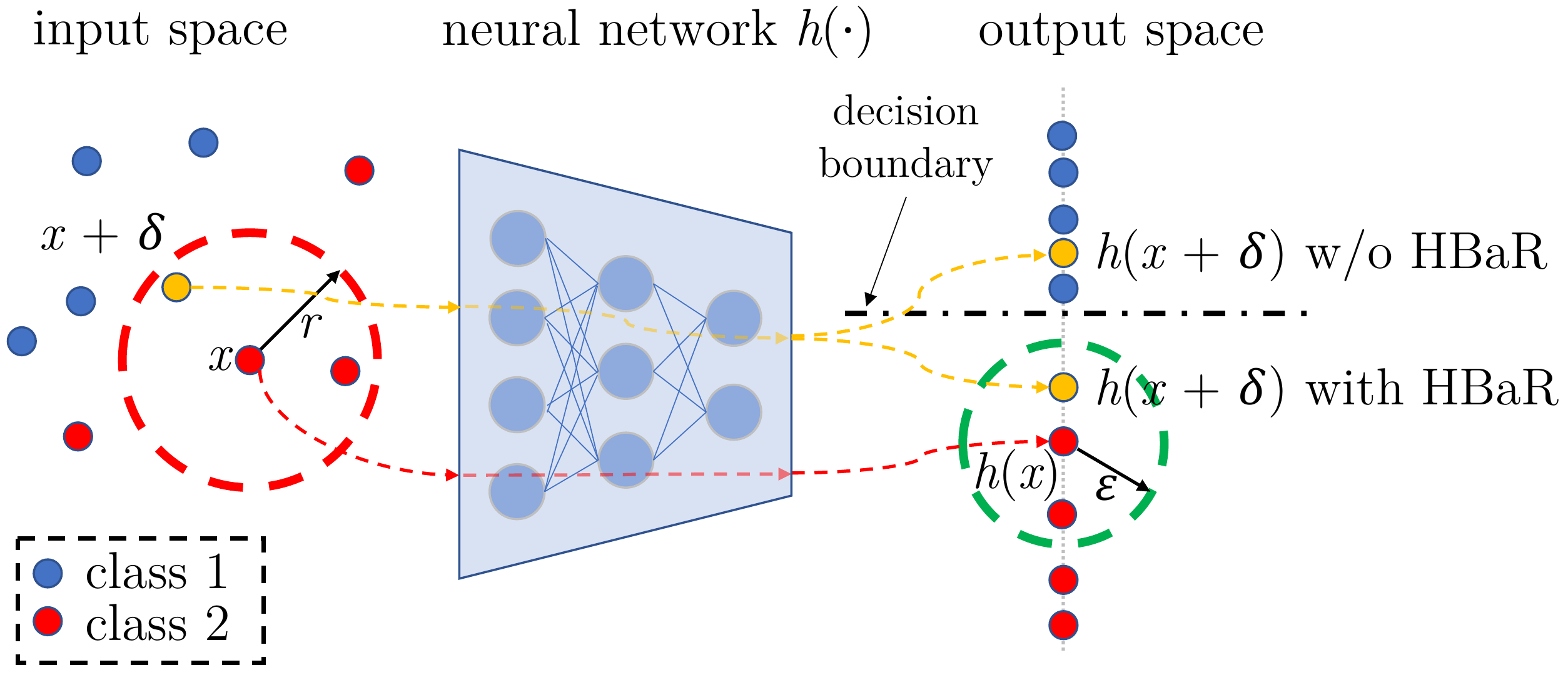}
    \caption{Illustration of \hb\ for adversarial robustness. A neural network trained with \hb gives a more constrained prediction w.r.t. perturbed inputs. Thus, it is less sensitive to adversarial examples.}
    \label{fig:hbar}
\end{figure}

\section{Related Work} \label{sec:related_work}
\textbf{Adversarial Attacks.} 
Adversarial attacks often add a constrained perturbation to  natural inputs with the goal of maximizing classification loss.  Szegedy et al.~\citep{szegedy2013intriguing} learn a perturbation  via box-constrained L-BFGS that misleads the classifier but minimally distort  the input. FGSM, proposed by Goodfellow et al. \citep{goodfellow2014explaining}, is a one step adversarial attack perturbing the input based on the sign of the gradient of the loss. PGD \citep{kurakin2016adversarial, madry2017towards}  generates adversarial examples through multi-step projected gradient descent optimization. DeepFool \citep{moosavi2016deepfool} is an iterative attack strategy, which perturbs the input towards the direction of the decision boundaries. CW  \citep{carlini2017towards} applies a rectifier function regularizer to generate adversarial examples near the original input. \com{ AutoAttack (AA) \citep{autoattack} is an ensemble of parameter-free attacks, that also deals with common issues like gradient masking \citep{gradient_masking} and fixed step sizes \citep{madry2017towards}.}

\textbf{Adversarial Robustness.}  A common approach to obtaining robust models is \textit{adversarial training}, i.e., training models over adversarial examples generated via the aforementioned attacks. For example, Madry et al.~\citep{madry2017towards} show that training with adversarial examples generated by PGD  achieves good robustness under different attacks.
DeepDefense \citep{yan2018deep}   penalizes the norm of adversarial perturbations. TRADES \citep{zhang2019theoretically} 
minimizes the difference between the predictions of natural and adversarial examples to get a smooth decision boundary. MART \citep{wang2019improving} pays more attention to adversarial examples from misclassified natural examples and adds a KL-divergence term between natural and adversarial samples to the cross-entropy loss.
%
\emph{We show that our proposed method \hb can be combined with several such state-of-the-art defense methods and  boost their performance.} 



\textbf{Information Bottleneck.} The information bottleneck (IB)  \citep{tishby2000information, tishby2015deep} expresses a tradeoff in  latent representations between   information useful for output prediction   and  information retained about the input. IB has been employed to explore the training dynamics in deep learning models \citep{shwartz2017opening, saxe2019information} as well  as a learning objective  \citep{alemi2016deep, amjad2018not}. \fcom{Fischer \citep{fischer2020conditional} proposes a conditional entropy bottleneck (CEB) based on IB and observes its robust generalization ability empirically.} Closer to us, Alemi et al.~\citep{alemi2016deep} propose a  variational information bottleneck (VIB) \fcom{for supervised learning}. They 
 empirically show that training VIB on natural examples  provides good generalization and adversarial robustness.
We show that \hb can be combined with  various adversarial defense methods enhancing their robustness, but also outperforms VIB \citep{alemi2016deep} when given access only to natural samples. Moreover, \emph{we provide theoretical guarantees on how \hb bounds the output perturbation induced by adversarial attacks.}

\noindent\textbf{Mutual Information vs.~HSIC.} Mutual information is difficult to compute in practice. To address this, Alemi et al.~\citep{alemi2016deep}   estimate IB via variational inference. Ma et al.~\citep{ma2020hsic} replaced mutual information by the Hilbert Schmidt Independence Criterion (HSIC) and named this the \emph{HSIC Bottleneck} (HB). 
Like Ma et al.~\citep{ma2020hsic}, we utilize HSIC to estimate IB.
However, our method is different from Ma et al.~\citep{ma2020hsic} in several aspects. First, they use HB to train the neural network stage-wise, layer-by-layer, without backpropagation, while we use HSIC bottleneck as a regularization in addition to cross-entropy and optimize the parameters jointly by backpropagation. Second, they only evaluate the model performance on classification accuracy, while we demonstrate adversarial robustness. 
Finally, we show that \hb further enhances robustness to adversarial examples both theoretically and experimentally. Greenfeld et al.~\citep{greenfeld2020robust}  use HSIC between the residual of the prediction and the input data as a learning objective for model robustness on covariate distribution shifts.  Their focus is on robustness to distribution shifts, whereas our work focuses on robustness to adversarial examples, on which \hb outperforms their proposed objective.


\section{Background} \label{sec:background}
\subsection{Adversarial Robustness}
In  standard $\dy$-ary classification, we are given a dataset $\mathcal{D} = \{(x_i, y_i)\}_{i= 1}^{n}$, where $x_i \in \reals^{\dx}, y_i \in \{0,1\}^{\dy}$ are i.i.d.~samples drawn from joint distribution $P_{XY}$. A learner trains a neural network $h_\theta:\reals^{\dx}\to\reals^{k}$ parameterized by weights $\theta\in \reals^{d_{\theta}}$ to predict $Y$ from $X$ by minimizing \begin{align}\loss(\theta)= \mathbb{E}_{XY}[\ell(h_{\theta}(X), Y)]\approx\frac{1}{n} \sum_{i=1}^n \ell(h_{\theta}(x_i), y_i)
, \label{eq:loss}
\end{align}
where $\ell:\reals^k \times \reals^k \to\reals$ is a loss function,  e.g., cross-entropy. 
We aim to find a model $h_\theta$ that has high prediction accuracy but is also \emph{adversarially robust}:  the model should maintain high prediction accuracy against a constrained adversary, that can perturb input samples in a restricted fashion. 
Formally, prior to submitting a sample $x\in \reals^{\dx}$ to the classifier, an adversary may perturb $x$ by an arbitrary $\delta\in \mathcal{S}_r$, where $\mathcal{S}_r\subseteq \reals^{\dx}$ is the $\ell_{\infty}$-ball of radius $r$, i.e., \begin{align} \label{def:S_r}
\mathcal{S}_r=B(0,r)=\{\delta \in  \reals^{\dx}: \|\delta\|_{\infty}\leq r\}.\end{align}
The \emph{adversarial robustness} \citep{madry2017towards} of a model $h_{\theta}$ is measured by the expected loss attained by such adversarial examples, i.e.,
\begin{equation}
    \label{eq:adv_robustness}
 \begin{split} \aloss(\theta)=  \mathbb{E}_{X Y} \left[\max_{\delta \in \mathcal{S}_r}\ell\left(h_{\theta}(X+\delta), Y\right)\right] \approx  \frac{1}{n} \sum_{i=1}^n \max_{\delta\in \mathcal{S}_r}\ell(h_{\theta}(x_i+\delta), y_i).\end{split}
\end{equation}

An adversarially robust neural network $h_\theta$ can be obtained via \emph{adversarial training}, i.e., by minimizing the adversarial robustness loss in \eqref{eq:adv_robustness} empirically over the training set $\mathcal{D}$. In practice, this amounts to training via stochastic gradient descent (SGD) over adversarial examples $x_i+\delta$ (see, e.g.,  \citep{madry2017towards}). In each epoch, $\delta$ is generated on a per sample basis via an inner optimization over $\mathcal{S}_r$, e.g., via projected gradient descent (PGD) on $-\loss$. 

\subsection{Hilbert-Schmidt Independence Criterion (HSIC)}
The Hilbert-Schmidt Independence Criterion (HSIC) is a statistical dependency measure introduced by Gretton et al.~\citep{gretton2005measuring}. HSIC is the Hilbert-Schmidt norm of the cross-covariance operator between the distributions in Reproducing Kernel Hilbert Space (RKHS). Similar to Mutual Information (MI), HSIC captures non-linear dependencies between random variables.  $\HSIC(X, Y)$ is defined as:
\fcom{
\begin{align}
\begin{split}
\HSIC(X, Y&) 
= \mathbb{E}_{X Y X^{\prime} Y^{\prime}}\left[k_{X}\left(X, X^{\prime}\right) k_{Y^{\prime}}\left(Y, Y^{\prime}\right)\right] \\
&+\mathbb{E}_{X X^{\prime}}\left[k_{X}\left(X, X^{\prime}\right)\right] \mathbb{E}_{YY^{\prime}}\left[k_{Y}\left(Y, Y^{\prime}\right)\right] \\
&-2 \mathbb{E}_{X Y}\left[\mathbb{E}_{X^{\prime}}\left[k_{X}\left(X, X^{\prime}\right)\right] \mathbb{E}_{Y^{\prime}}\left[k_{Y}\left(Y, Y^{\prime}\right)\right]\right],
\end{split}
\end{align}%
}%
where $X'$, $Y'$ are independent copies of $X$, $Y$, respectively, and $k_{X}$, $k_{Y}$ are kernels. 

In practice, we often approximate HSIC empirically. Given $n$ i.i.d.~samples $\{(x_i, y_i)\}_{i=1}^{n}$  drawn from $P_{XY}$, 
we estimate HSIC via:
\begin{equation}
    \label{eq:empirical_hsic}
    \widehat{\HSIC}(X, Y)={(n-1)^{-2}} \operatorname{tr}\left(K_{X} H K_{Y} H\right),
\end{equation}
where $K_X$ and $K_Y$ are kernel matrices with entries $K_{X_{ij}}=k_{X}(x_i, x_j)$ and $K_{Y_{ij}}=k_{Y}(y_i, y_j)$, respectively, and $H = \mathbf{I}- \frac{1}{n} \mathbf{1} \mathbf{1}^\top$ is a centering matrix.

\section{Methodology} \label{sec:method}
In this section, we present our method, HSIC bottleneck as regularizer (\hb) as a means to enhance a classifier's robustness. 
The effect of \hb\ for adversarial robustness is illustrated in Figure~\ref{fig:hbar}; the HSIC bottleneck penalty reduces the sensitivity of the classifier to adversarial examples. We provide a theoretical justification for this below, in Theorems~\ref{thm:main_theorem}~and~\ref{thm:new_theorem}, but also validate the efficacy of the HSIC bottleneck extensively with experiments in Section~\ref{sec:experiments}.

\subsection{HSIC Bottleneck as Regularizer for Robustness}
Given a feedforward neural network $h_\theta:\reals^{\dx}\to\reals^{k}$ parameterized by $\theta$ with $\nol$  layers, and an input r.v.~$X$, we denote by $Z_j\in\reals^{\dzj}$, $j\in \{1,\ldots,\nol\}$, the output of the $j$-th layer under input $X$ (i.e., the $j$-th latent representation). 
We define our \hb\  learning objective as follows:
\begin{align}
\label{eq:obj}
\begin{split}
\hbarloss(\theta) =  \loss(\theta)
    + \lambda_{x} &\sum_{j=1}^{\nol} \HSIC(X, Z_j) -\lambda_{y} \sum_{j=1}^{\nol} \HSIC(Y, Z_j), 
\end{split}
\end{align}
where $\loss$ is the standard loss given by Eq.~\eqref{eq:loss} and $\lambda_{x}, \lambda_{y}\in \reals_+$ are balancing hyperparameters. 

Together, the second and third terms in Eq.~\eqref{eq:obj} form the HSIC bottleneck penalty. As HSIC measures dependence between two random variables, minimizing $\HSIC(X, Z_i)$ corresponds to removing redundant or noisy information contained in $X$.
Hence, this term  also naturally reduces the influence of an adversarial attack, i.e., a perturbation added on the input data. This is intuitive, but we also provide theoretical justification in the next subsection.
Meanwhile, maximizing $\HSIC(Y, Z_i)$ encourages this lack of sensitivity to the input to happen while retaining the discriminative nature of the classifier, captured by dependence to useful information w.r.t. the output label $Y$. 
Note that minimizing $\HSIC(X, Z_i)$ alone would also lead to the loss of useful information, so it is necessary to keep the $\HSIC(Y, Z_i)$ term to make sure $Z_i$ is informative enough of $Y$. 


The overall algorithm is described in Alg.~\ref{alg:hsic}. In practice, we perform Stochastic Gradient Descent (SGD) over $\hbarloss$: both $\loss$ and HSIC can be evaluated empirically over batches. For the latter, we use the estimator \eqref{eq:empirical_hsic}, restricted over the current batch. 
As we have $m$ samples in a mini-batch, the complexity of calculating the empirical HSIC \eqref{eq:empirical_hsic} is $O(m^2d_{\bar{Z}})$ \citep{song2012feature} for a single layer, where $d_{\bar{Z}}=\max_{j}d_{Z_j}$. Thus, the overall complexity for~\eqref{eq:obj} is $O(Mm^2d_{\bar{Z}})$. This computation is highly parallelizable, thus, the additional computation time of \hb is small when compared to training a neural network via cross-entropy only.

\subsection{Combining \hb with Adversarial Examples}\label{sec:combine-hb}
\hb can also be naturally  applied in combination with adversarial training. For $r>0$ the magnitude of the perturbations introduced in adversarial examples, one can optimize the following objective instead of $\hbarloss(\theta)$ in Eq. \eqref{eq:obj}:
\begin{align}
\begin{split}    \hbaraloss(\theta) =  \aloss(\theta)
    + \lambda_{x} \sum_{j=1}^{\nol} \HSIC(X, Z_j) -\lambda_{y} \sum_{j=1}^{\nol} \HSIC(Y, Z_j),\end{split}
\end{align}
where $\aloss$ is the adversarial loss given by Eq.~\eqref{eq:adv_robustness}. This can be used instead of $\loss$ in Alg.~\ref{alg:hsic}. Adversarial examples need to be used in the computation of the gradient of the loss $\aloss$ in each minibatch; these need to be computed on a per sample basis, e.g., via PGD over $\mathcal{S}_r$, at additional computational cost. Note that the natural samples $(x_i,y_i)$ in a batch are used to compute the HSIC bottleneck regularizer.

The \hb penalty can similarly be combined with other adversarial learning methods and/or used with different means for selecting adversarial examples, other than PGD. We illustrate this in Section~\ref{sec:experiments}, where we combine \hb with state-of-the-art adversarial learning methods  TRADES \citep{zhang2019theoretically} and MART \citep{wang2019improving}. \\

\begin{algorithm}[!t]
\SetAlgoLined
\textbf{Input:} input sample tuples $\{(x_i, y_i)\}_{i=1}^{n}$, kernel function $k_x, k_y, k_z$, a neural network $h_\theta$ parameterized by $\theta$, mini-batch size $m$, learning rate $\alpha$.\\
\textbf{Output:} parameter of classifier $\theta$\\
 \While{$\theta$ has not converged}{
  Sample a mini-batch of size $m$ from input samples. \\
  Forward Propagation: calculate $z_i$ and $h_\theta(x)$.\\
  Compute kernel matrices for $X$, $Y$ and $Z_i$ using $k_x, k_y, k_z$ respectively inside mini-batch. \\
  Compute $\hbarloss(\theta)$ via \eqref{eq:obj}, where $\HSIC$ is evaluated empirically via \eqref{eq:empirical_hsic}.\\
  Backward Propagation: $\theta \leftarrow \theta - \alpha \nabla \hbarloss(\theta)$.
 }
 \caption{Robust Learning with HBaR}
 \label{alg:hsic}
\end{algorithm}

\subsection{\hb Robustness Guarantees} \label{sec:HBAR_theorem}
We provide  here a formal justification for the use of \hb  to enhance robustness: we prove that regularization terms $\HSIC(X, Z_j)$, $j=1,\ldots,\nol$  lead to classifiers which are less sensitive to input perturbations. 
For simplicity, we focus on the case where $k=1$ (i.e., binary classification). Let $Z\in \reals^{\dz}$ be the latent representation at some arbitrary intermediate layer of the network. That is, $Z=Z_j$, for some $j\in \{1,\ldots,\nol\}$; we omit the subscript $j$ to further reduce notation clutter. Then $h_\theta= (g \circ f)$, where $f: \reals^{\dx} \rightarrow \reals^{\dz}$ maps the inputs to this  intermediate layer, and $g: \reals^{\dz} \rightarrow \reals$ maps the intermediate layer to the final layer. Then, $Z = f(X)$ and $g(Z) = h_\theta(X)\in \reals$ are the latent and final outputs, respectively.
Recall that, in \hb, $\HSIC(X,Z)$ is associated with kernels $k_X$, $k_Z$. 
We make the following  technical assumptions:
\begin{assumption}\label{asm:cont}
Let $\mathcal{X}\subseteq \reals^{\dx}$, $\mathcal{Z}\subseteq \reals^{\dz} $ be the supports of random variables $X$, $Z$, respectively. We assume that both $h_\theta$ and $g$ are continuous and bounded functions in  $\mathcal{X}$, $\mathcal{Z}$, respectively, i.e.:
\begin{align}
    \label{eq:fi}
    h_\theta \in C(\mathcal{X}), 
    g \in C(\mathcal{Z}).
\end{align}
 Moreover, we assume that all functions $h_\theta$ and $g$ we consider are uniformly bounded, i.e., there exist $0 < M_\mathcal{X},M_\mathcal{Z} < \infty $ such that:
\begin{align} 
M_{\mathcal{X}} = \max_{h_\theta \in C(\mathcal{X})} \|h_\theta\|_\infty \quad\text{and} \quad M_{\mathcal{Z}} = \max_{g \in C(\mathcal{Z})} \|g\|_\infty. \label{eq:fbound} \end{align}
\end{assumption}
The continuity stated in Assumption~\ref{asm:cont} is natural, if all activation functions are continuous.  
Boundedness follows if, e.g., $\mathcal{X}$, $\mathcal{Z}$ are closed and bounded (i.e., compact), or if activation functions are bounded (e.g., softmax, sigmoid, etc.). 
\begin{assumption}\label{asm:universal}We assume kernels
$k_X$, $k_Z$ are universal with respect to functions $h_\theta$ and $g$ that satisfy Assumption \ref{asm:cont}, i.e., if $\mathcal{F}$ and $\mathcal{G}$ are the induced RKHSs for kernels $k_X$ and $k_Z$, respectively, then for any  $h_\theta,g$ that satisfy Assumption  \ref{asm:cont} 
and any $\varepsilon>0$ there exist functions $h'\in \mathcal{F}$ and $g'\in \mathcal{G}$ such that $||h_\theta-h'||_{\infty} \leq \varepsilon$ and $||g-g'||_{\infty} \leq \varepsilon$. Moreover, functions in $\mathcal{F}$ and $\mathcal{G}$ are uniformly bounded, i.e., there exist  $0<M_{\mathcal{F}}, M_{\mathcal{G}}<\infty $ such that for all $h'\in \mathcal{F}$ and all $g'\in \mathcal{G}$:
\begin{align}
M_{\mathcal{F}} = \max _{f'\in\mathcal{F}} \|f'\|_{\infty}\quad \text{and} \quad M_{\mathcal{G}} = \max _{g'\in\mathcal{G}} \|g'\|_{\infty}. \label{eq:kbound} 
\end{align}
\end{assumption}
We note that several kernels used in practice are universal, including, e.g., the Gaussian  and Laplace kernels.
Moreover, given that functions that satisfy Assumption~\ref{asm:cont} are uniformly bounded by \eqref{eq:fbound}, such kernels can indeed remain universal while  satisfying \eqref{eq:kbound} via an appropriate rescaling.
 
Our first   result shows that $\HSIC(X,Z)$ \emph{at any intermediate layer $Z$} bounds the \emph{output} variance:
\begin{theorem} \label{thm:main_theorem}     
Under Assumptions~\ref{asm:cont} and~\ref{asm:universal}, 
we have:
\begin{equation} \label{eq:main_theorem_v2}
    \operatorname{HSIC}(X, Z) \geq
    \frac{M_{\mathcal{F}}M_{\mathcal{G}}}{M_{\mathcal{X}}M_{\mathcal{Z}}} \sup_{\theta}\operatorname{Var}(h_{\theta}(X)).
\end{equation}
\end{theorem}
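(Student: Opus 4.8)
The plan is to realize $\HSIC(X,Z)$ as the squared Hilbert--Schmidt norm of the cross-covariance operator $C_{XZ}:\mathcal{G}\to\mathcal{F}$ induced by the kernels $k_X,k_Z$, and then to lower-bound it by testing that operator against good RKHS approximants of $h_\theta$ and $g$. The basic facts I would start from are the two identities
\[
\operatorname{Cov}(a(X),b(Z)) = \langle a,\, C_{XZ}\, b\rangle_{\mathcal F},\qquad \HSIC(X,Z)=\|C_{XZ}\|_{\mathrm{HS}}^2,
\]
valid for all $a\in\mathcal F$, $b\in\mathcal G$. Since the Hilbert--Schmidt norm dominates the operator norm, the pairing on the left, suitably normalized, is controlled by $\HSIC(X,Z)$; this is the variational handle that lets me extract a lower bound from any convenient choice of test functions.

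The conceptual heart of the argument is the decomposition $h_\theta = g\circ f$ with $Z=f(X)$, which yields the identity $h_\theta(X)=g(Z)$ and hence $\operatorname{Cov}(h_\theta(X),g(Z))=\operatorname{Var}(h_\theta(X))$. So I would choose the test functions to track $h_\theta$ and $g$: by the universality in Assumption~\ref{asm:universal}, for every $\varepsilon>0$ there exist $h'\in\mathcal F$, $g'\in\mathcal G$ with $\|h_\theta-h'\|_\infty\le\varepsilon$ and $\|g-g'\|_\infty\le\varepsilon$. Feeding (appropriately rescaled copies of) $h'$ and $g'$ into the variational bound and letting $\varepsilon\to 0$ replaces the covariance of the test functions by $\operatorname{Cov}(h_\theta(X),g(Z))=\operatorname{Var}(h_\theta(X))$; the exchange of the limit with the expectation defining the covariance is justified by the uniform boundedness of Assumption~\ref{asm:cont}.

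Two bookkeeping points then finish the reduction. First, the normalizing factors come from forcing $h'$ and $g'$ into the relevant unit balls: rescaling $h_\theta$ (resp.\ $g$), whose sup-norms are bounded by $M_{\mathcal X}$ (resp.\ $M_{\mathcal Z}$), against the admissible RKHS radii governed by $M_{\mathcal F}$ (resp.\ $M_{\mathcal G}$) is exactly what produces the prefactor $M_{\mathcal F}M_{\mathcal G}/(M_{\mathcal X}M_{\mathcal Z})$; this is where the "appropriate rescaling'' of the kernels noted after Assumption~\ref{asm:universal} is used. Second, because $Z=f(X)$ depends only on the encoder $f$, the quantity $\HSIC(X,Z)$ does not depend on the head $g$ at all, so the inequality holds simultaneously for every admissible $g$ (equivalently, every $\theta$ sharing the encoder $f$); I can therefore pass to $\sup_\theta \operatorname{Var}(h_\theta(X))$ on the right-hand side.

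The step I expect to be the main obstacle is precisely this constant extraction: the clean operator-theoretic inequality naturally controls a \emph{covariance normalized by RKHS norms}, whereas Assumptions~\ref{asm:cont}--\ref{asm:universal} give control only of \emph{sup-norms} through the $M_\bullet$ constants. Converting the former into the latter so that the bound lands on exactly $\tfrac{M_{\mathcal F}M_{\mathcal G}}{M_{\mathcal X}M_{\mathcal Z}}\operatorname{Var}(h_\theta(X))$ requires carefully reconciling the two notions of norm (via the universal-approximation estimate together with the rescaling), and handling the approximation error uniformly in $\varepsilon$. Once this normalization is in place, taking $\varepsilon\to 0$ and the supremum over $\theta$ delivers \eqref{eq:main_theorem_v2}.
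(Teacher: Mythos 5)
Your proposal is correct and follows essentially the same route as the paper's proof: both lower-bound $\operatorname{HSIC}(X,Z)$ by the covariance pairing over the RKHSs (the paper cites this as a lemma of Greenfeld and Shalit, which is proved exactly by your Hilbert--Schmidt-norm-versus-operator-norm argument), both extract the constant $M_{\mathcal F}M_{\mathcal G}/(M_{\mathcal X}M_{\mathcal Z})$ by rescaling between the RKHS sup-norm bounds and the bounds $M_{\mathcal X}, M_{\mathcal Z}$, and both conclude via the decomposition $h_\theta = g\circ f$, which turns $\operatorname{Cov}[h_\theta(X), g(Z)]$ into $\operatorname{Var}[h_\theta(X)]$, before taking the supremum over $\theta$. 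The only difference is bookkeeping: you handle universality by $\varepsilon$-approximation and a limit, whereas the paper asserts that the rescaled RKHSs coincide with the bounded continuous classes $C_b(\mathcal X)$, $C_b(\mathcal Z)$ and substitutes $h_\theta$, $g$ directly --- your limiting treatment is, if anything, the more careful rendering of universality-as-density.
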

\com{
The proof of Theorem~\ref{thm:main_theorem} is in Appendix~\ref{sec:supp-proof-thm1} in the supplement. We use a result by Greenfeld and Shalit \citep{greenfeld2020robust} that links $\operatorname{HSIC}(X,Z)$ to the supremum of the covariance of bounded continuous functionals  over $\mathcal{X}$ and $\mathcal{Z}$.} 
Theorem~\ref{thm:main_theorem} indicates that  the regularizer $\HSIC(X,Z)$  at any intermediate layer  naturally suppresses the variability of the  output, i.e., the classifier prediction $h_\theta(X)$.
To see this, observe that by Chebyshev's inequality \citep{papoulis1989probability} the distribution of $h_\theta(X)$ concentrates around its mean when $\operatorname{Var}(h_{\theta}(X))$ approaches $0$.  As a result, bounding $\HSIC(X,Z)$ inherently also bounds the (global) variability of the classifier (across all parameters $\theta$). This observation motivates us to 
also maximize $\operatorname{HSIC} (Y, Z)$ to recover essential information useful for classification: 
if we want to achieve good adversarial robustness as well as good predictive accuracy, we have to strike a balance between $\operatorname{HSIC} (X, Z)$ and $\operatorname{HSIC} (Y, Z)$. This  perfectly aligns with the intuition behind the information bottleneck \citep{tishby2015deep} and the well-known accuracy-robustness trade off \citep{madry2017towards, zhang2019theoretically, tsipras2018robustness, raghunathan2020understanding}. 
We also confirm this experimentally: we observe that both  additional terms (the standard loss and $\HSIC(Y,Z)$) are necessary for ensuring good prediction performance in practice (see Table~\ref{tab:ablation}).

Most importantly, by further assuming that features are normal, we can show that HSIC  bounds the power of an arbitrary adversary, as defined in Eq.~\eqref{eq:adv_robustness}:
\begin{theorem} \label{thm:new_theorem}
\eqcom
Assume that $X \sim \mathcal{N}(0, \sigma^2 \mathbf{I})$. Then, under Assumptions~\ref{asm:cont} and~\ref{asm:universal},   we have:\footnote{Recall that for  functions $f,g:\reals\to\reals$ we have $f=o(g)$ if $\lim_{r\to 0}\frac{f(r)}{g(r)}=0$.} 
\begin{equation}
     \frac{ r \sqrt{-2 \log o(1)} \dx M_{\mathcal{Z}}}{ \sigma M_{\mathcal{F}}M_{\mathcal{G}}}\operatorname{HSIC}(X, Z) + o(r) \geq \E [|h_\theta(X+\delta) - h_\theta(X)|], \quad \text{for all}~\delta\in \mathcal{S}_r.
\end{equation}
\end{theorem}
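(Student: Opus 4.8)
The plan is to control the expected output perturbation by the output variance $\operatorname{Var}(h_\theta(X))$, which Theorem~\ref{thm:main_theorem} already bounds by $\HSIC(X,Z)$, and to pay for the input perturbation and the Gaussianity through the explicit $r$, $\dx$, $\sigma$ and tail factors. Concretely, it suffices to establish the intermediate estimate
\begin{equation*}
\E[|h_\theta(X+\delta)-h_\theta(X)|]\le \frac{r\,\dx\sqrt{-2\log o(1)}}{\sigma\,M_{\mathcal{X}}}\operatorname{Var}(h_\theta(X))+o(r),
\end{equation*}
because substituting $\operatorname{Var}(h_\theta(X))\le\sup_{\theta}\operatorname{Var}(h_\theta(X))\le \frac{M_{\mathcal{X}}M_{\mathcal{Z}}}{M_{\mathcal{F}}M_{\mathcal{G}}}\HSIC(X,Z)$ from Theorem~\ref{thm:main_theorem} cancels $M_{\mathcal{X}}$ and introduces exactly the factor $\tfrac{M_{\mathcal{Z}}}{M_{\mathcal{F}}M_{\mathcal{G}}}$ appearing in the claim. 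Thus the last line of the proof is a one-step substitution, and all the work is in the intermediate estimate.

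For that estimate I would first isolate the bulk of the Gaussian. Fix a radius $R=\sigma\sqrt{-2\log\varepsilon}$ with $\varepsilon=o(1)$ and split the expectation over $\{\|X\|_\infty\le R\}$ and its complement. On the complement I bound $|h_\theta(X+\delta)-h_\theta(X)|\le 2M_{\mathcal{X}}$ using \eqref{eq:fbound} and use the Gaussian tail $\mathbb{P}(\|X\|_\infty>R)\le \dx\,e^{-R^2/2\sigma^2}=\dx\varepsilon$; the calibration of $R$ to the tail level is precisely what produces the $\sqrt{-2\log o(1)}$ factor. On the bulk I would expand to first order, $h_\theta(X+\delta)-h_\theta(X)=\langle\nabla h_\theta(X),\delta\rangle+o(r)$, and use $\|\delta\|_\infty\le r$ to get $|\langle\nabla h_\theta(X),\delta\rangle|\le r\sum_{i=1}^{\dx}|\partial_i h_\theta(X)|$, which is the origin of the $r\,\dx$ factor. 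The Gaussian law is then exploited through Stein's identity, $\mathbb{E}[\partial_i h_\theta(X)]=\sigma^{-2}\operatorname{Cov}(X_i,h_\theta(X))$, so that Cauchy--Schwarz with $\operatorname{Var}(X_i)=\sigma^2$ converts the first-order term into the variance of $h_\theta(X)$ and supplies the $1/\sigma$ dependence. Collecting the bulk and tail contributions and absorbing everything of order $o(r)$ yields the intermediate estimate.

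The main obstacle is the absolute value inside the expectation: Jensen's inequality only lower-bounds $\E[|{\cdot}|]$ by $|\E[{\cdot}]|$, so the signed covariance produced by Stein's identity does not directly dominate it. Reconciling the two is the crux, and it is where the smoothness enforced by Assumption~\ref{asm:universal} is essential — the bounded universal kernels force the approximating RKHS functions (through which $h_\theta$ and $g$ are handled) to have controlled derivatives, ruling out highly oscillatory functions for which a variance bound on the gradient would fail. A second, more technical obstacle is making the first-order remainder uniformly $o(r)$ over the unbounded Gaussian support; this is exactly what the bulk/tail split above is designed to handle, trading the uncontrolled remainder on the tail against the vanishing Gaussian measure of that region.
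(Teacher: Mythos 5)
Your proposal shares the paper's main tools (Stein's identity, a truncation at level $R=\sigma\sqrt{-2\log o(1)}$, and Taylor plus H\"older to produce the factor $r\dx$), but the step the entire proof rests on --- the intermediate estimate bounding the perturbation by $\operatorname{Var}(h_\theta(X))/M_{\mathcal{X}}$ so that Theorem~\ref{thm:main_theorem} can be substituted --- is not established by your sketch and cannot be obtained the way you describe. Stein's identity followed by Cauchy--Schwarz gives
\begin{equation*}
\E\Bigl[\tfrac{\partial}{\partial x_i}h_\theta(X)\Bigr]=\sigma^{-2}\operatorname{Cov}[X_i,h_\theta(X)]\le\sigma^{-1}\sqrt{\operatorname{Var}(h_\theta(X))},
\end{equation*}
i.e.\ the \emph{square root} of the variance. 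Since $\operatorname{Var}(h_\theta(X))\le M_{\mathcal{X}}^2$, we have $\sqrt{\operatorname{Var}(h_\theta(X))}\ge\operatorname{Var}(h_\theta(X))/M_{\mathcal{X}}$, so your intermediate estimate is strictly stronger than what this chain delivers; and if you instead carry the square root through, substituting Theorem~\ref{thm:main_theorem} produces a bound proportional to $\sqrt{\HSIC(X,Z)}$, which is not the linear-in-$\HSIC$ statement being proved. The idea you are missing is that the paper does not route through the variance or Theorem~\ref{thm:main_theorem} at all: it bounds $\operatorname{Cov}[t_i(X),h_\theta(X)]$ \emph{directly} by $\frac{RM_{\mathcal{Z}}}{M_{\mathcal{F}}M_{\mathcal{G}}}\HSIC(X,Z)$, where $t_i$ is the $i$-th coordinate truncated at $\pm R$, by re-running the sup-of-covariances machinery behind Theorem~\ref{thm:main_theorem} (Lemma~\ref{thm:hsic_cov_1} with the rescaling Lemma~\ref{'le:equality in sup'}) with the class of continuous functions bounded by $R$ in the first slot (see \eqref{eq:main_for_thm2}). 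In particular, the factor $\sqrt{-2\log o(1)}$ is the sup norm $R$ of the truncated coordinate entering that rescaling; it is not generated by a Gaussian tail union bound as in your bulk/tail split.

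Two further steps fail as sketched. First, your truncation is on the integration domain, but Stein's identity (Lemma~\ref{thm:stein}) requires the full Gaussian expectation: $\E\bigl[\partial_i h_\theta(X)\,\mathbf{1}\{\|X\|_\infty\le R\}\bigr]$ is not $\sigma^{-2}\operatorname{Cov}[X_i,h_\theta(X)]$, so your bulk estimate and your Stein step never connect; moreover the tail contribution $2M_{\mathcal{X}}\dx\varepsilon$ carries no factor of $r$, so calibrating $\varepsilon=o(1)$ leaves an error that is $o(1)$ but not $o(r)$, which cannot be absorbed into the statement. The paper avoids both problems by truncating \emph{inside the covariance}: all expectations stay over $\reals^{\dx}$, the substitution error $\operatorname{Cov}[X_i,h_\theta(X)]-\operatorname{Cov}[t_i(X),h_\theta(X)]\le\frac{2M_{\mathcal{X}}\sigma}{\sqrt{2\pi}}e^{-R^2/(2\sigma^2)}$ is controlled by Lemma~\ref{lemma: cov_diff}, Stein is applied to the untruncated covariance, and this error inherits the factor $r$ from the Taylor/H\"older step, so that $R=\sigma\sqrt{-2\log o(1)}$ indeed makes it $o(r)$. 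Second, you correctly identify the sign problem --- Stein controls $\E[\partial_i h_\theta(X)]$ while Taylor needs $\E[|\partial_i h_\theta(X)|]$ --- but your resolution is invalid: Assumption~\ref{asm:universal} is a sup-norm density and boundedness statement and imposes no control on derivatives of $h_\theta$ or of RKHS elements (nothing in the assumptions excludes functions like $\epsilon\sin(x_1/\epsilon)$, which have tiny variance but order-one expected absolute gradient), so it cannot ``rule out highly oscillatory functions.'' The paper's mechanism is purely algebraic: repeat the whole chain with $-h_\theta$ in place of $h_\theta$ and sum over $i=1,\dots,\dx$ (equations \eqref{eq:abs_partial_bound}--\eqref{eq:total_abs_partial_bound}). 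Without correct replacements for these steps, the proposal does not prove the theorem.
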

The proof of Theorem~\ref{thm:new_theorem} can also be found in Appendix~\ref{sec:supp-proof-thm2} in the supplement. We again use the result by Greenfeld and Shalit \citep{greenfeld2020robust} along with Stein's Lemma~\citep{liu1994siegel}, that relates covariances of Gaussian r.v.s and their functions to expected gradients. In particular, we apply Stein's Lemma to the bounded functionals considered by Greenfeld and Shalit by using a truncation argument. 
Theorem~\ref{thm:new_theorem} implies that $\HSIC(X, Z)$ indeed bounds the output perturbation produced by an arbitrary adversary: suppressing HSIC sufficiently can ensure that the adversary cannot alter the output significantly, in expectation.  In particular, if 
$    \operatorname{HSIC}(X, Z) = o\left(\frac{\sigma M_{\mathcal{F}}M_{\mathcal{G}}}{ \sqrt{-2 \fcom{\log o(1)}} \dx M_{\mathcal{Z}}}\right),$
then 
 $   \lim_{r\to 0} \sup_{\delta \in  \mathcal{S}_r} {\E [|h_\theta(X+\delta) - h_\theta(X)|]}/{r} =0,$
i.e., the output is almost constant under small input perturbations. 

\section{Experiments}\label{sec:experiments}
 
\subsection{Experimental Setting}\label{sec:experient-setup}
We experiment with three standard datasets, MNIST \citep{mnist}, CIFAR-10 \citep{cifar} and CIFAR-100 \citep{cifar}.
~We use a 4-layer LeNet \citep{madry2017towards} for MNIST, ResNet-18 \citep{he2016deep} and WideResNet-28-10 \citep{wideresnet} for CIFAR-10, and WideResNet-28-10 \citep{wideresnet} for CIFAR-100. We use cross-entropy as loss $\loss(\theta)$. \com{Licensing information for all existing assets can be found in Appendix~\ref{sec:licensing} in the supplement.}

\noindent\textbf{Algorithms.}
We compare \emph{\hb} to the following non-adversarial learning algorithms: \emph{Cross-Entropy (CE)}, \emph{Stage-Wise HSIC Bottleneck (SWHB)} \citep{ma2020hsic}, \emph{XIC} \citep{greenfeld2020robust}, and \emph{Variational Information Bottleneck (VIB)} \citep{alemi2016deep}. We also incorporate \hb to several adversarial learning algorithms, as described in Section~\ref{sec:combine-hb}, and compare against the original methods, without the \hb penalty. The adversarial methods we use are: \emph{Projected Gradient Descent (PGD)} \citep{madry2017towards}, \emph{TRADES} \citep{zhang2019theoretically}, and \emph{MART} \citep{wang2019improving}. Further details and  parameters can be found in Appendix~\ref{sec:supp-algorithms} in the supplement. 




\noindent\textbf{Performance Metrics.} 
For all methods, we evaluate the obtained model $h_\theta$ via the following metrics: (a) \emph{Natural} (i.e., clean test data) accuracy, and adversarial robustness via test accuracy under (b) \emph{FGSM}, the fast gradient sign attack \citep{goodfellow2014explaining}, (c) \emph{PGD}$^m$, the PGD attack with $m$ steps used for the internal PGD optimization \citep{madry2017towards}, (d) \emph{CW}, the CW-loss within the PGD framework \citep{cw}, and (e) \emph{AA}, AutoAttack \citep{autoattack}. All five metrics are reported in percent (\%) accuracy. Following prior   literature, we set step size to 0.01 and radius $r=0.3$ for MNIST, and step size as $2/255$ and $r=8/255$ for CIFAR-10 and CIFAR-100. All attacks happen during the test phase and have full access to model parameters (i.e., are white-box attacks). All experiments are carried out on a Tesla V100 GPU with 32 GB memory and 5120 cores.

\subsection{Results}

\begin{table*}[!t]
    \centering
    \setlength{\extrarowheight}{.2em}
    \setlength{\tabcolsep}{3pt}
    \small
    \caption{Natural test accuracy (in \%), adversarial robustness ((in \%) on FGSM, PGD, CW, and AA attacked test examples) on MNIST and CIFAR-100 of \textbf{[row i, iii, v] adversarial learning baselines} and \textbf{[row ii, iv, vi] combining \hb with each correspondingly}. Each result is the average of five runs.}
    \label{tab:adv-competing}
    \resizebox{1\textwidth}{!}{
    \begin{tabular}{||c || c c c c c c||c c c c c c||}
        \hline
        \multirow{2}{*}{Methods} & \multicolumn{6}{c||}{MNIST by LeNet} & \multicolumn{6}{c||}{CIFAR-100 by WideResNet-28-10} \\
        \cline{2-13}
        & Natural & FGSM & PGD$^{20}$ & PGD$^{40}$ & CW & AA & Natural & FGSM & PGD$^{10}$ & PGD$^{20}$ & CW & AA \\
        \hline
        \hline
         PGD            & 98.40 & 93.44 & 94.56 & 89.63 & 91.20 & 86.62 & 59.91 & 29.85 & 26.05 & 25.38 & 22.28 & 20.91 \\
        \hb + PGD       & \textbf{98.66} & \textbf{96.02} & \textbf{96.44} & \textbf{94.35} & \textbf{95.10} & \textbf{91.57}
                        & \textbf{63.84} & \textbf{31.59} & \textbf{27.90} & \textbf{27.21} & \textbf{23.23} & \textbf{21.61} \\
        \hline
        \hline
        TRADES          & \textbf{97.64} & 94.73 & 95.05 & 93.27 & 93.05 & 89.66 & 60.29 & 34.19 & 31.32 & 30.96 & 28.20 & 26.91 \\
        \hb + TRADES    & \textbf{97.64} & \textbf{95.23} & \textbf{95.17} & \textbf{93.49} & \textbf{93.47}& \textbf{89.99} 
                        & \textbf{60.55} & \textbf{34.57} & \textbf{31.96} & \textbf{31.57} & \textbf{28.72}& \textbf{27.46
                        } \\
        \hline
        \hline
        MART            & \textbf{98.29} & 95.57 & 95.23 & 93.55 & 93.45 & 88.36 & 58.42 & 32.94 & 29.17 & 28.19 & 27.31 & 25.09 \\
        \hb + MART      & 98.23 & \textbf{96.09} & \textbf{96.08} & \textbf{94.64} & \textbf{94.62} & \textbf{89.99} 
                        & \textbf{58.93} & \textbf{33.49} & \textbf{30.72} & \textbf{30.16} & \textbf{28.89}& \textbf{25.21} \\
        \hline
    \end{tabular}}
\end{table*}

\begin{table*}[!t]
    \centering
    \setlength{\extrarowheight}{.2em}
    \setlength{\tabcolsep}{3pt}
    \small
    \caption{Natural test accuracy (in \%), adversarial robustness ((in \%) on FGSM, PGD, CW, and AA attacked test examples) on CIFAR-10 by ResNet-18 and WideResNet-28-10 of \textbf{[row i, iii, v] adversarial learning baselines} and \textbf{[row ii, iv, vi] combining \hb with each correspondingly}. Each result is the average of five runs.}
    \label{tab:adv-competing-cifar10}
    \resizebox{1\textwidth}{!}{
    \begin{tabular}{||c || c c c c c c ||c c c c c c||}
        \hline
        \multirow{2}{*}{Methods} & \multicolumn{6}{c||}{CIFAR-10 by ResNet-18} & \multicolumn{6}{c||}{CIFAR-10 by WideResNet-28-10} \\
        \cline{2-13}
        & Natural & FGSM & PGD$^{10}$ & PGD$^{20}$ & CW & AA & Natural & FGSM & PGD$^{10}$ & PGD$^{20}$ & CW & AA \\
        \hline
        \hline
         PGD            & 84.71 & 55.95 & 49.37 & 47.54 & 41.17 & 43.42 & 86.63 & 58.53 & 52.21 & 50.59 & 49.32 & 47.25 \\
        \hb + PGD       & \textbf{85.73} & \textbf{57.13} & \textbf{49.63} & \textbf{48.32} & \textbf{41.80} & \textbf{44.46} & \textbf{87.91} & \textbf{59.69} & \textbf{52.72} & \textbf{51.17} & \textbf{49.52} & \textbf{47.60} \\
        \hline
        \hline
        TRADES          & 84.07 & 58.63 & 53.21 & 52.36 & 50.07 & 49.38 & \textbf{85.66} & 61.55 & 56.62 & 55.67 & 54.02 & 52.71 \\
        \hb + TRADES    & \textbf{84.10} & \textbf{58.97} & \textbf{53.76} & \textbf{52.92} & \textbf{51.00} & \textbf{49.43} & 85.61 & \textbf{62.20} & \textbf{57.30} & \textbf{56.51} & \textbf{54.89} & \textbf{53.53} \\
        \hline
        \hline
        MART            & 82.15 & 59.85 & 54.75 & 53.67 & 50.12 & 47.97 & \textbf{85.94} & 59.39 & 51.30 & 49.46 & 47.94 & 45.48 \\
        \hb + MART      & \textbf{82.44} & \textbf{59.86} & \textbf{54.84} & \textbf{53.89} & \textbf{50.53} & \textbf{48.21} & 85.52 & \textbf{60.54} & \textbf{53.42} & \textbf{51.81} & \textbf{49.32} & \textbf{46.99} \\
        \hline
    \end{tabular}}
\end{table*}

\noindent\textbf{Combining \hb with Adversarial Examples.}
We show how \hb can be used to improve robustness when used as a regularizer, as described in \Cref{sec:combine-hb}, along with state-of-the-art adversarial learning methods. We run each experiment by five times and report the mean natural test accuracy and adversarial robustness of all models on MNIST, CIFAR-10, and CIFAR-100 datasets by four architectures in Table \ref{tab:adv-competing} and Table \ref{tab:adv-competing-cifar10}. 
Combined with all adversarial training baselines, \hb \emph{consistently improves adversarial robustness against all types of attacks on all datasets}. \com{The resulting improvements are larger than 2 standard deviations (that range between 0.05-0.2) in most cases}; we report the \com{results with standard deviations} in Appendix~\ref{sec:supp-errorbar} in the supplement.  Although natural accuracy is generally restricted by the trade-off between robustness and accuracy \citep{zhang2019theoretically}, we observe that incorporating \hb comes with an actual improvement over natural accuracy in most cases.

\begin{figure*}[!t]
\begin{tabular}{c c c}
   \centering
   \small
   \includegraphics[width=0.3\columnwidth]{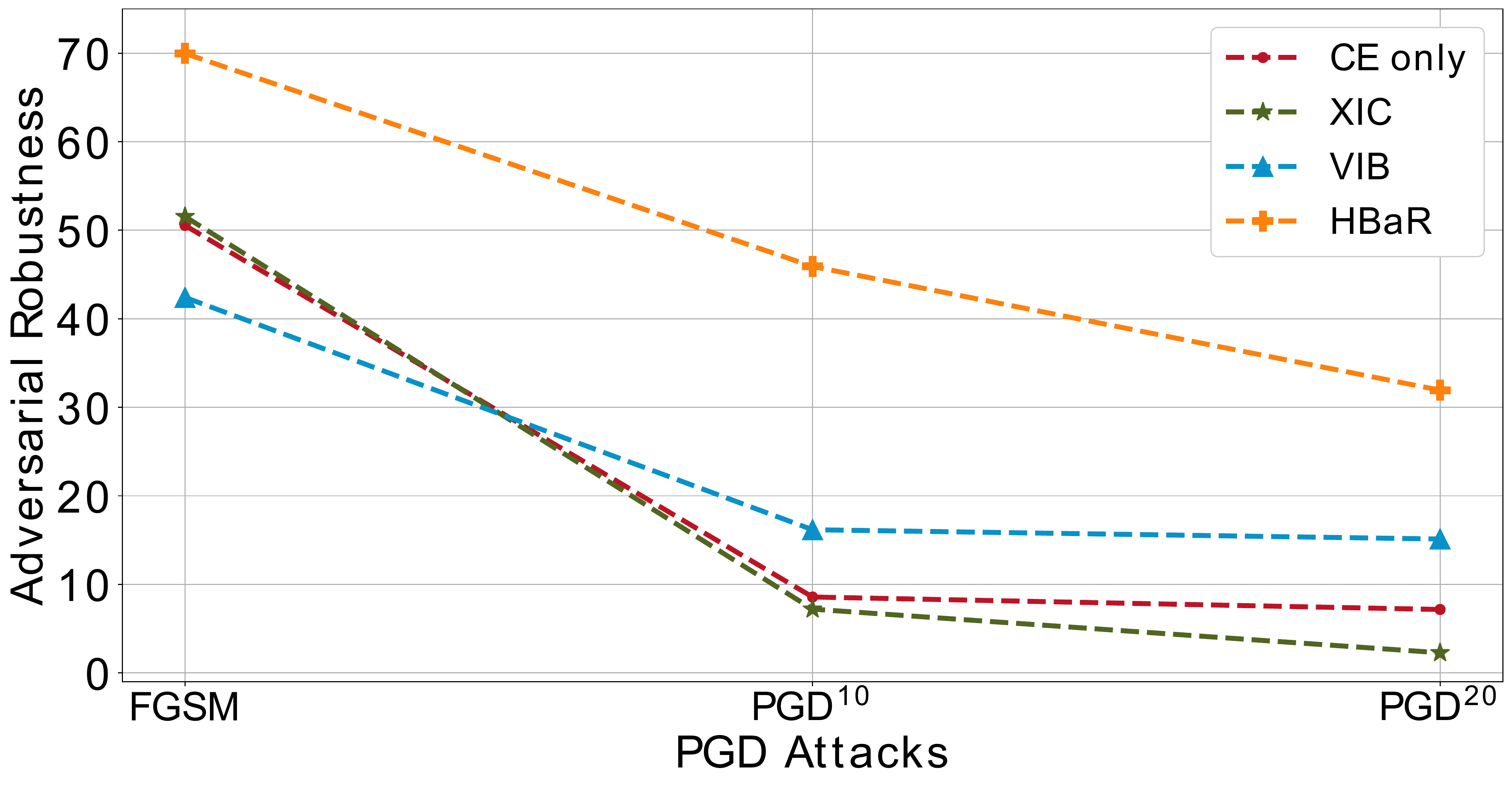} &
   \includegraphics[width=0.3\columnwidth]{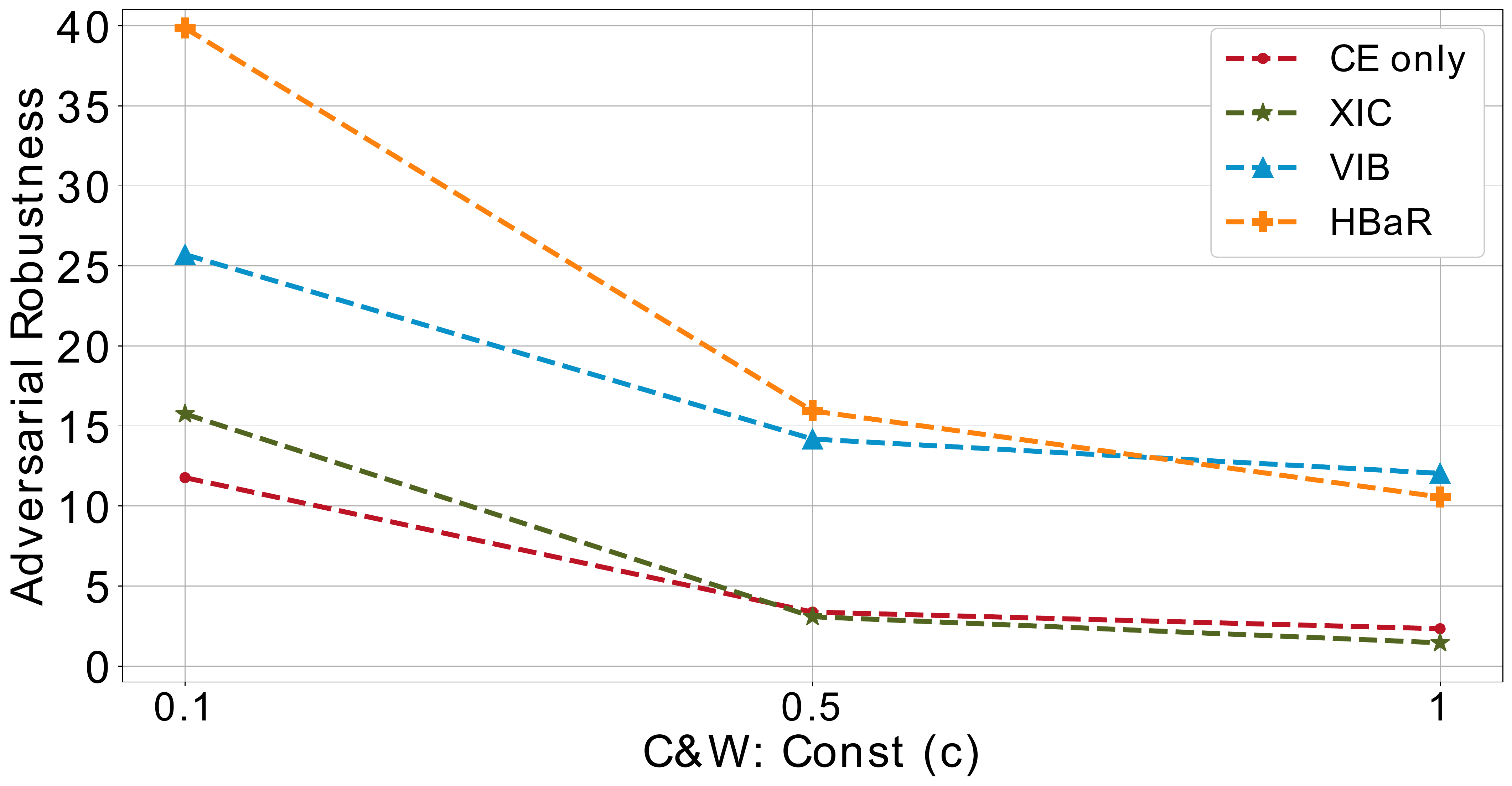} &
   \includegraphics[width=0.3\columnwidth]{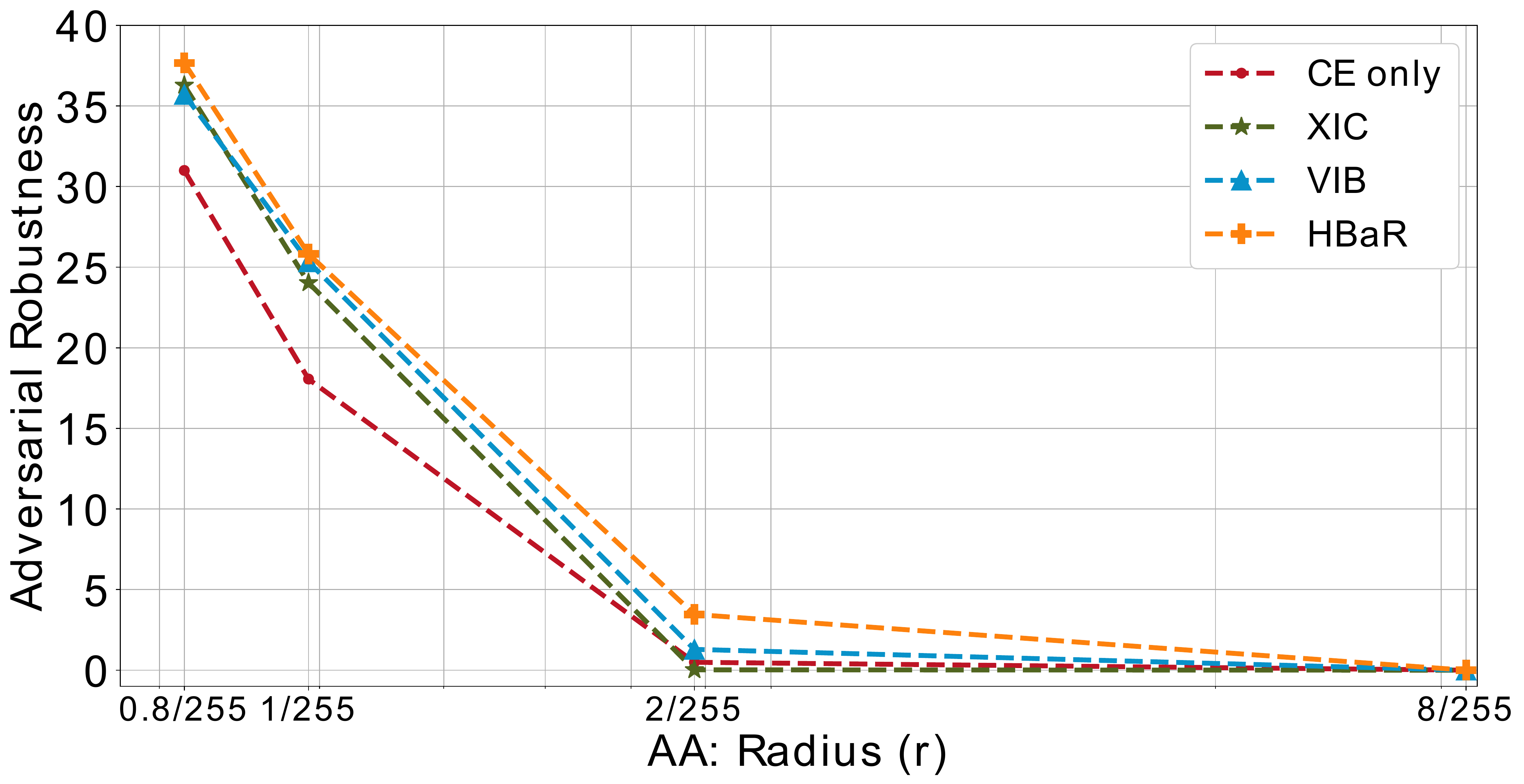} \\
   (a) PGD attacks & (b) CW attack & (c) AA using different radius \\
\end{tabular}
	\caption{CIFAR-10 by ResNet-18: Adversarial robustness of \textbf{IB-based baselines} and \textbf{proposed \hb} under (a) PGD attacks, (b) CW attack by various of constant $c$, and (c) AA using different radius. Interestingly\com{, while achieving the highest adversarial robustness under almost all cases}, \hb achieves natural accuracy (95.27\%) comparable to CE (95.32\%) \fcom{which is much higher than VIB (92.35\%), XIC (92.93\%) and SWHB (59.18\%).}}
	\label{fig:hsic-competing}
\end{figure*}

\noindent\textbf{Adversarial Robustness Analysis without Adversarial Training.}\label{sec:experiments-hsiccompare}
Next, we show that \hb can achieve modest robustness even without adversarial examples during training. We evaluate the robustness of \hb on CIFAR-10 by ResNet-18 against various adversarial attacks, and compare \hb with other information bottleneck penalties without adversarial training in Figure \ref{fig:hsic-competing}. Specifically, we compare the robustness of \hb with other IB-based methods under various attacks and hyperparameters. Our proposed \hb achieves the best \com{overall} robustness against all three types of attacks while attaining competitive natural test accuracy. Interestingly, \hb achieves natural accuracy (95.27\%) comparable to CE (95.32\%) which is much higher than VIB (92.35\%), XIC (92.93\%) and SWHB (59.18\%). \fcom{We observe SWHB underperforms \hb on CIFAR-10 for both natural accuracy and robustness.} One possible explanation may be that when the model is deep, minimizing HSIC without backpropagation, as in SWHB, does not suffice to transmit the learned information across layers. Compared to SWHB, HBaR backpropagates over the HSIC objective through each intermediate layer and computes gradients only once in each batch, \fcom{improving accuracy and robustness while reducing computational cost significantly. }

\begin{figure*}[!t]
   \centering
   \includegraphics[width=0.9\columnwidth]{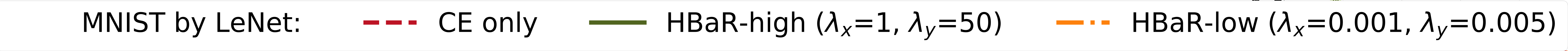} \\
   \vspace{-2pt}
   \begin{tabular}{c c c c}
    	\includegraphics[width=0.22\columnwidth]{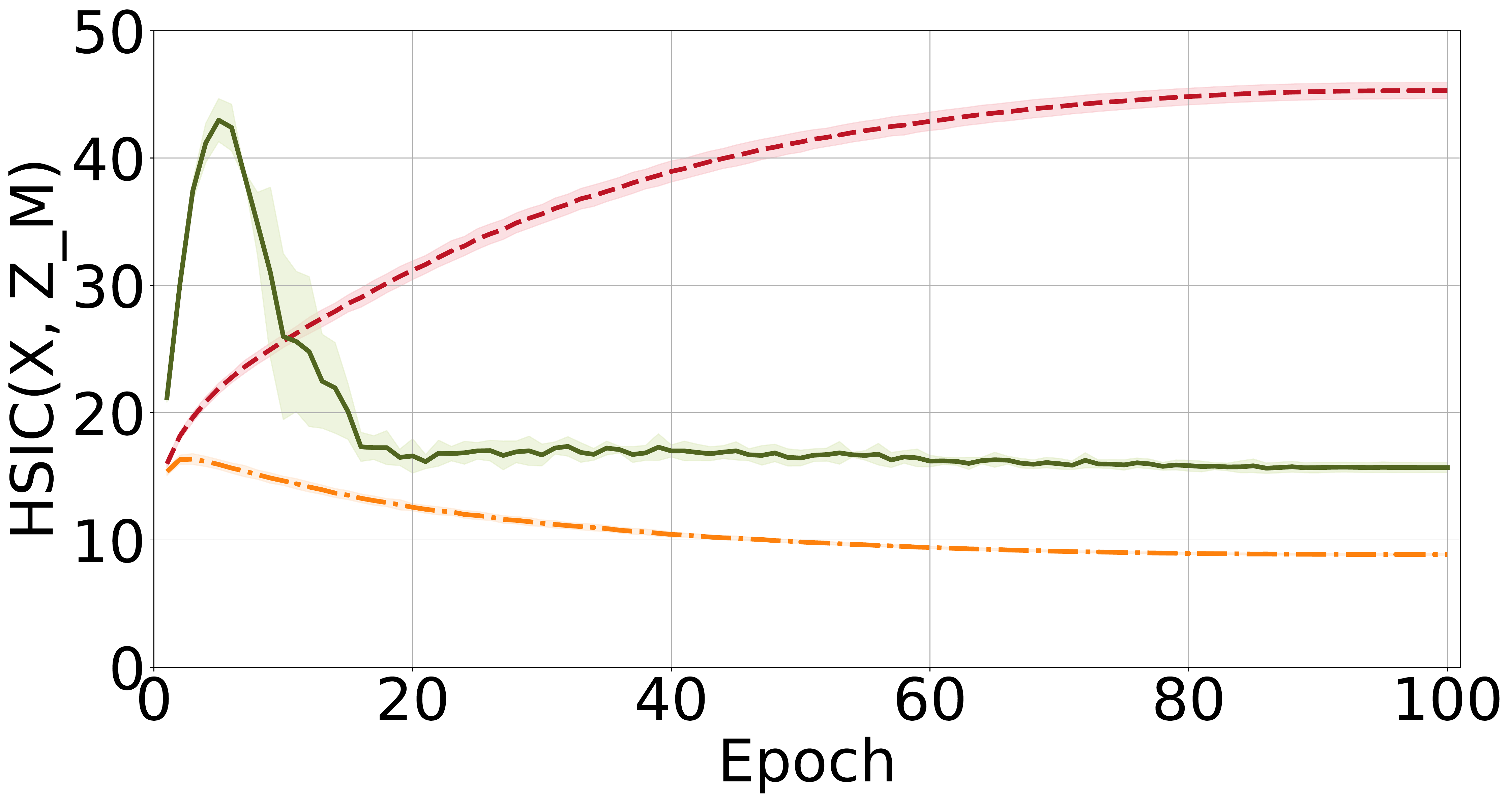} &
    	\includegraphics[width=0.22\columnwidth]{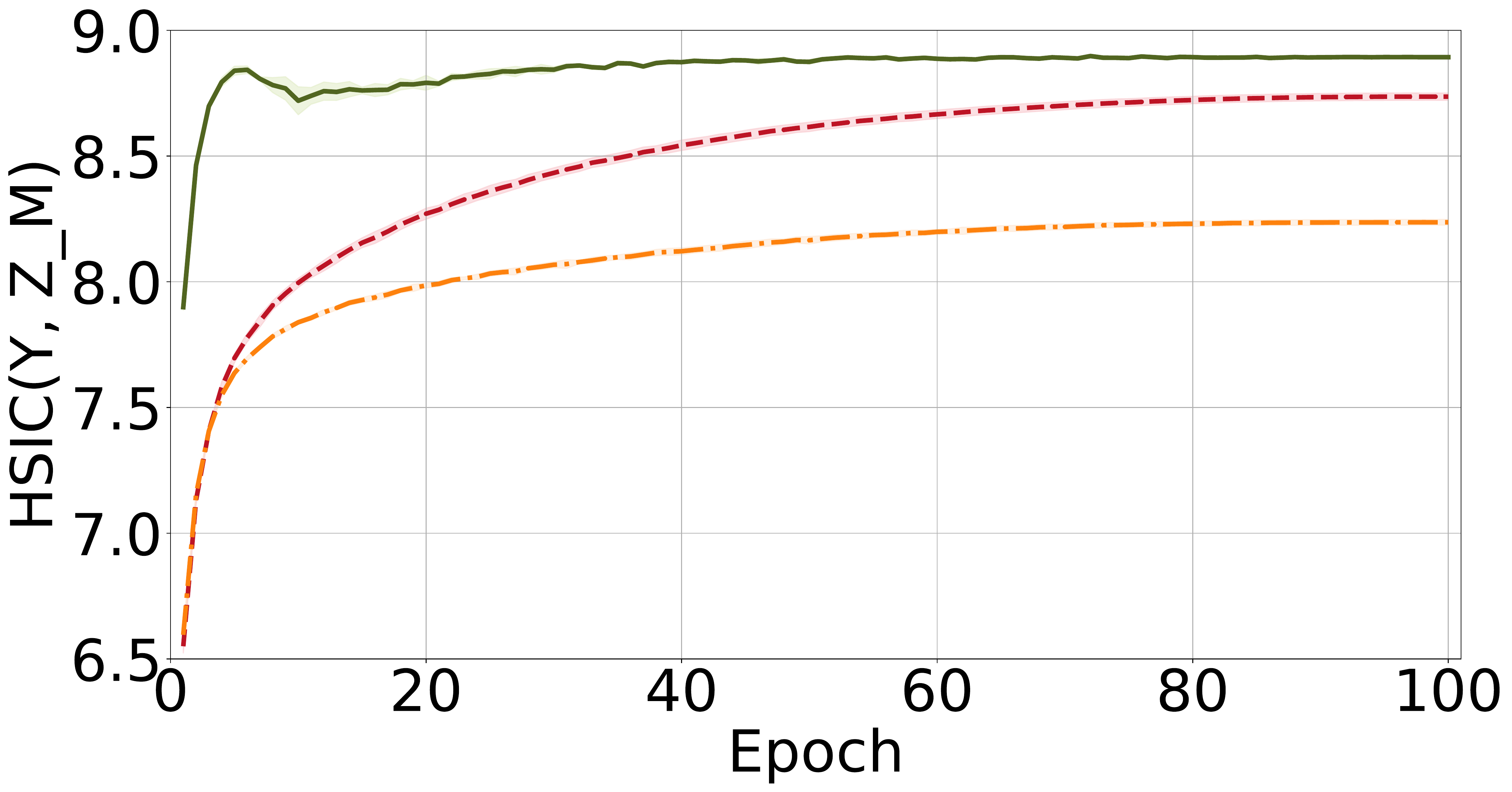} &
    	\includegraphics[width=0.22\columnwidth]{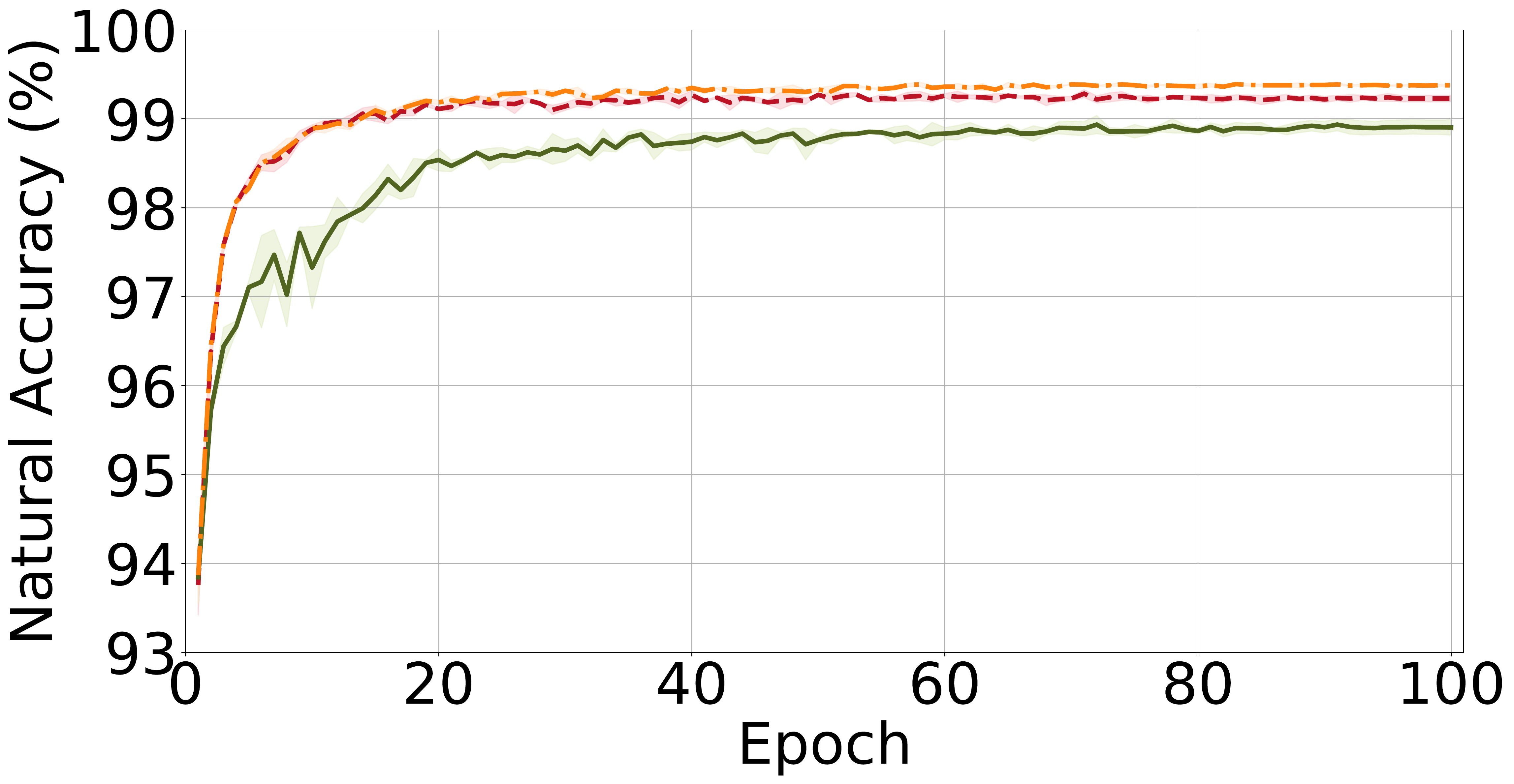} &
    	\includegraphics[width=0.22\columnwidth]{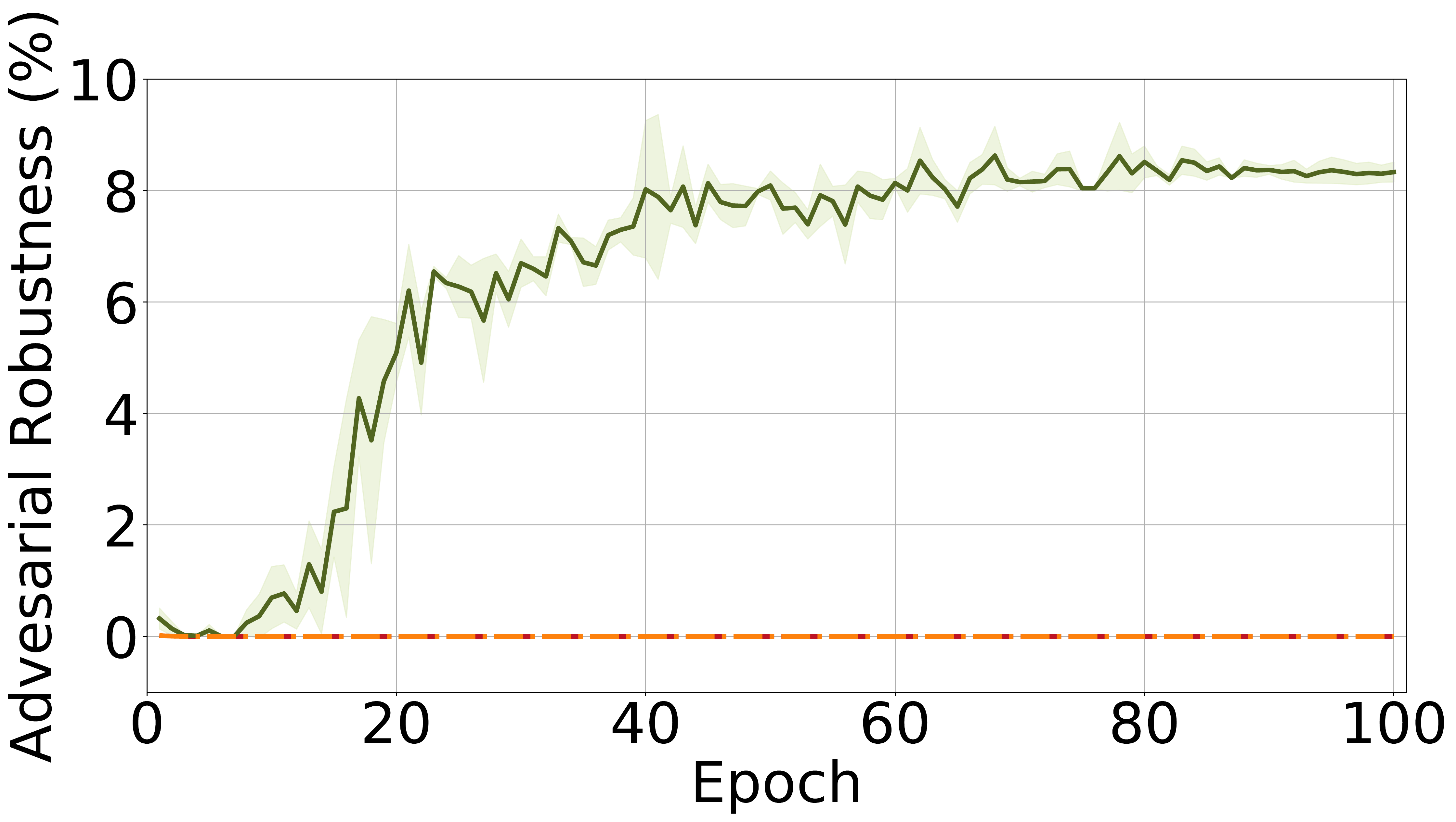} \\
	\end{tabular}
	\vspace{4pt}
	
	\includegraphics[width=\columnwidth]{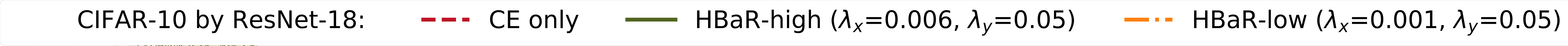} \\
	\vspace{-2pt}
    \begin{tabular}{c c c c}
        \includegraphics[width=0.22\columnwidth]{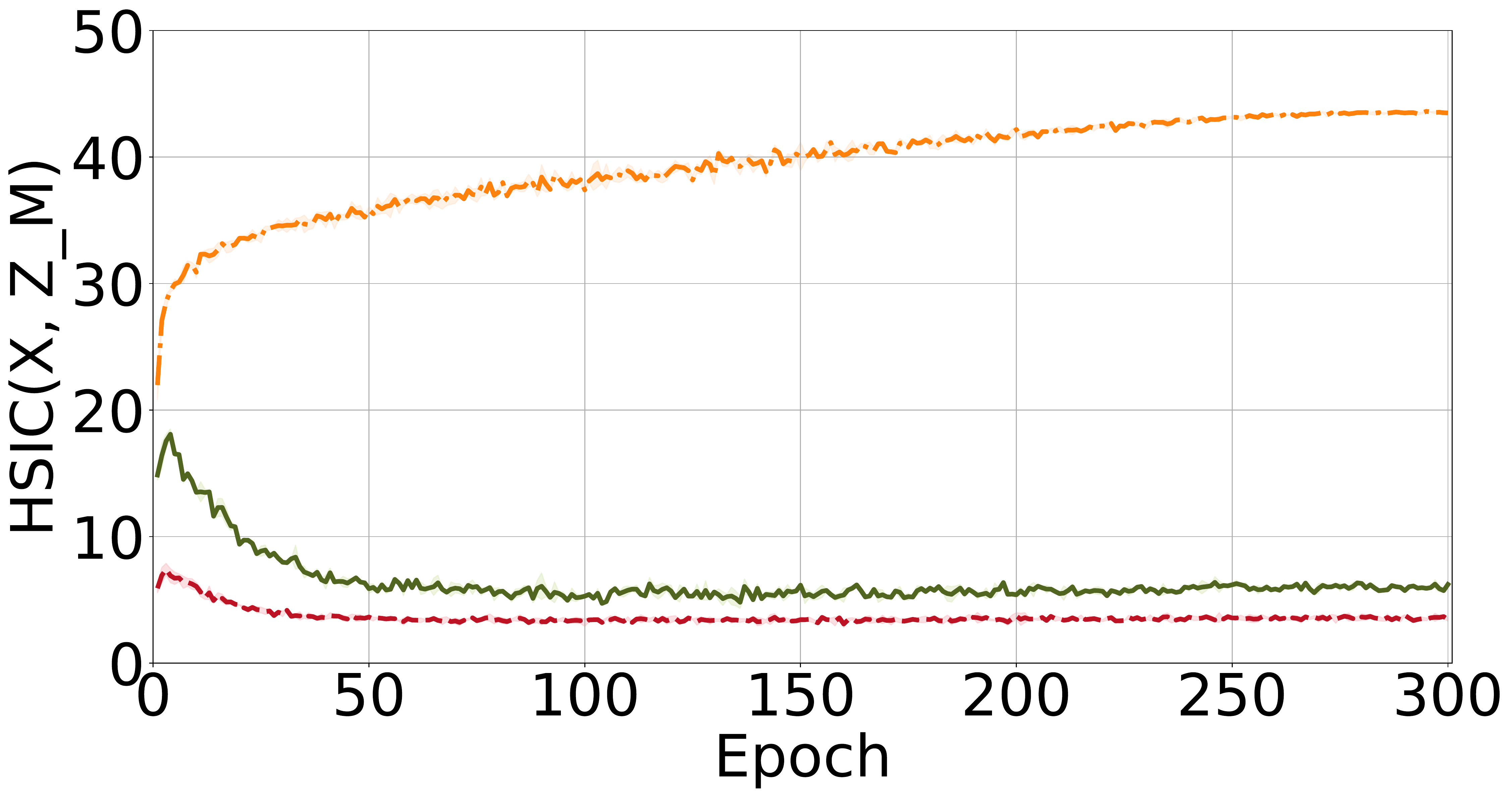} &
    	\includegraphics[width=0.22\columnwidth]{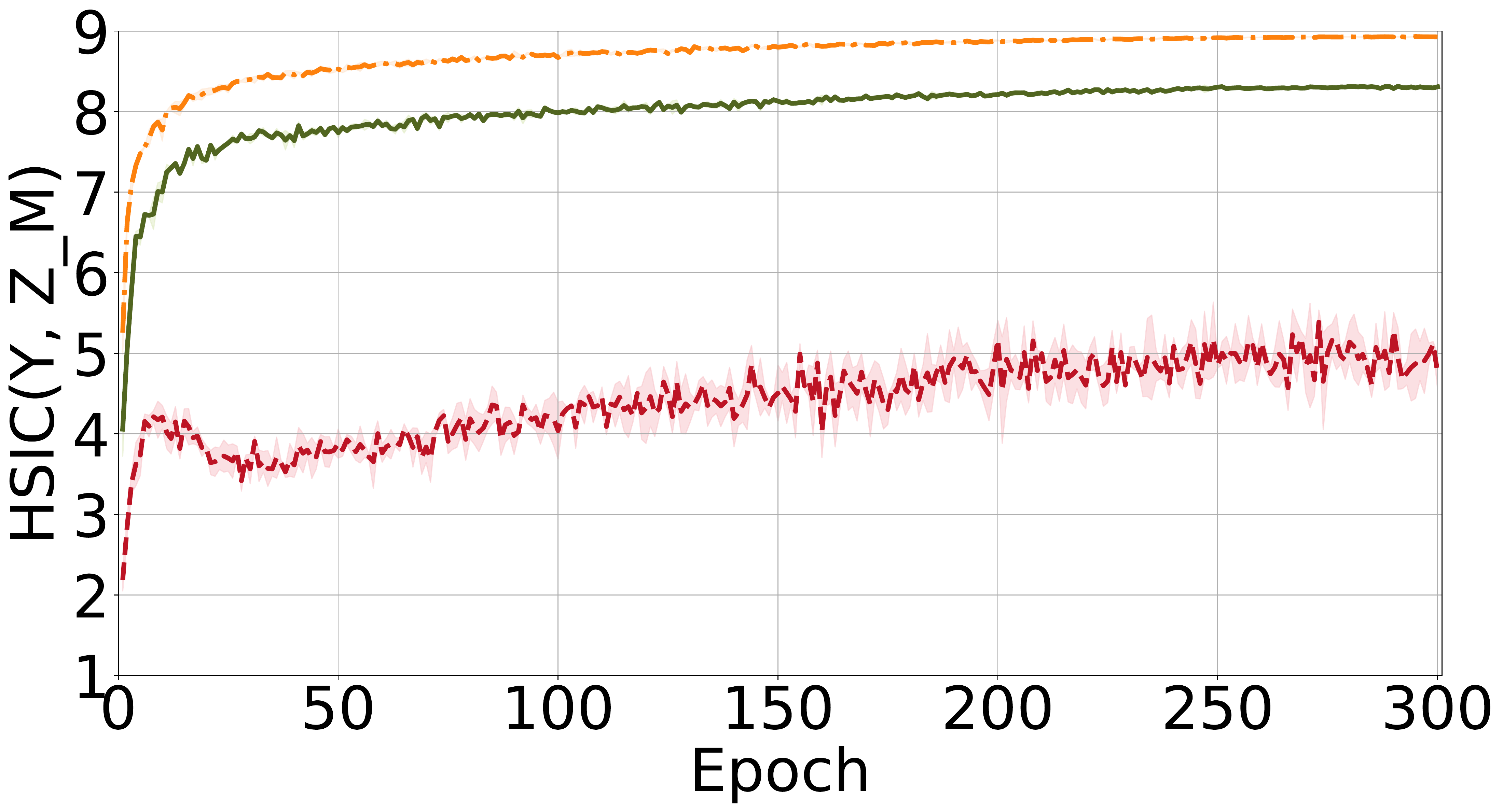} &
    	\includegraphics[width=0.22\columnwidth]{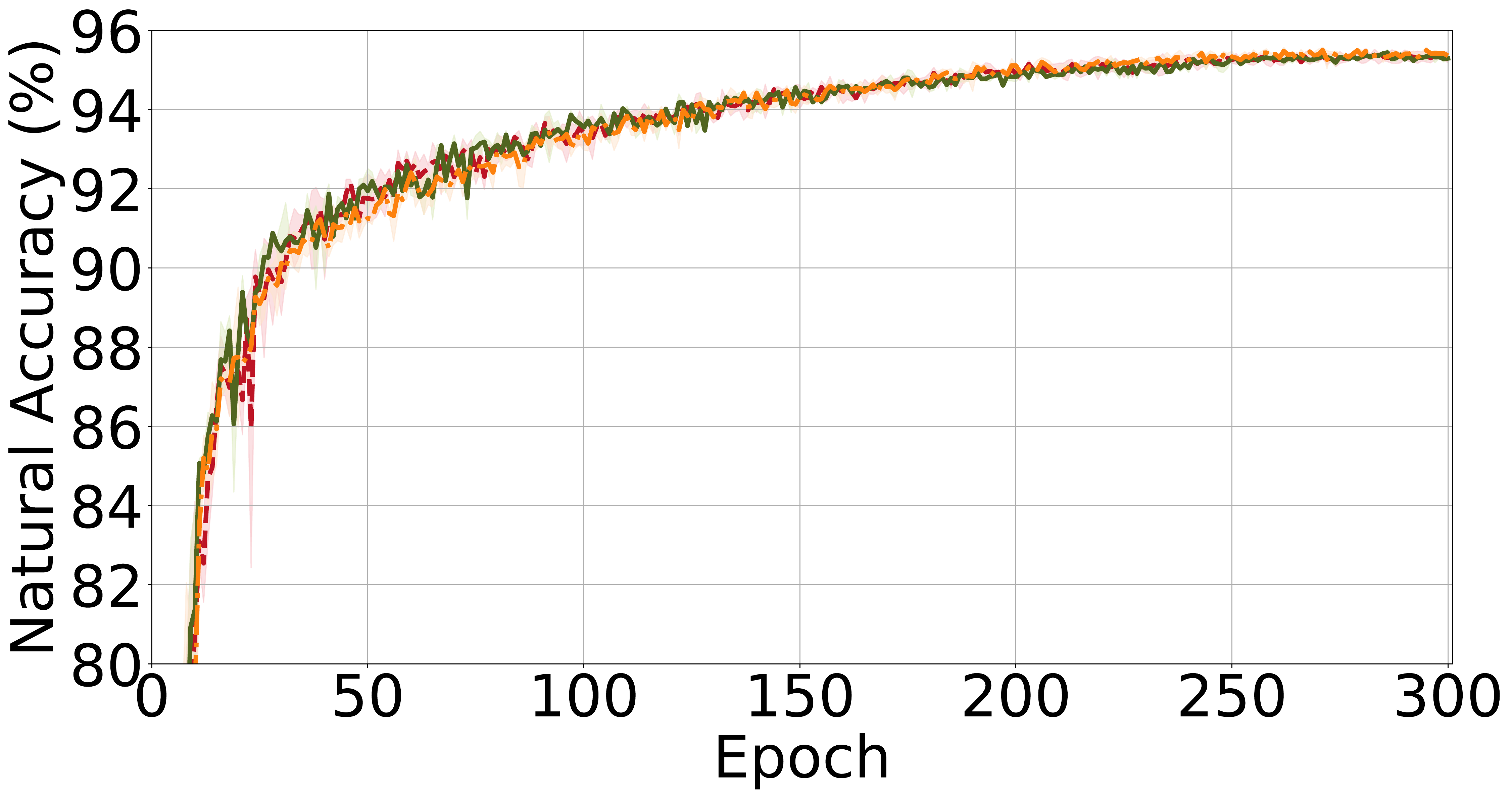} &
    	\includegraphics[width=0.22\columnwidth]{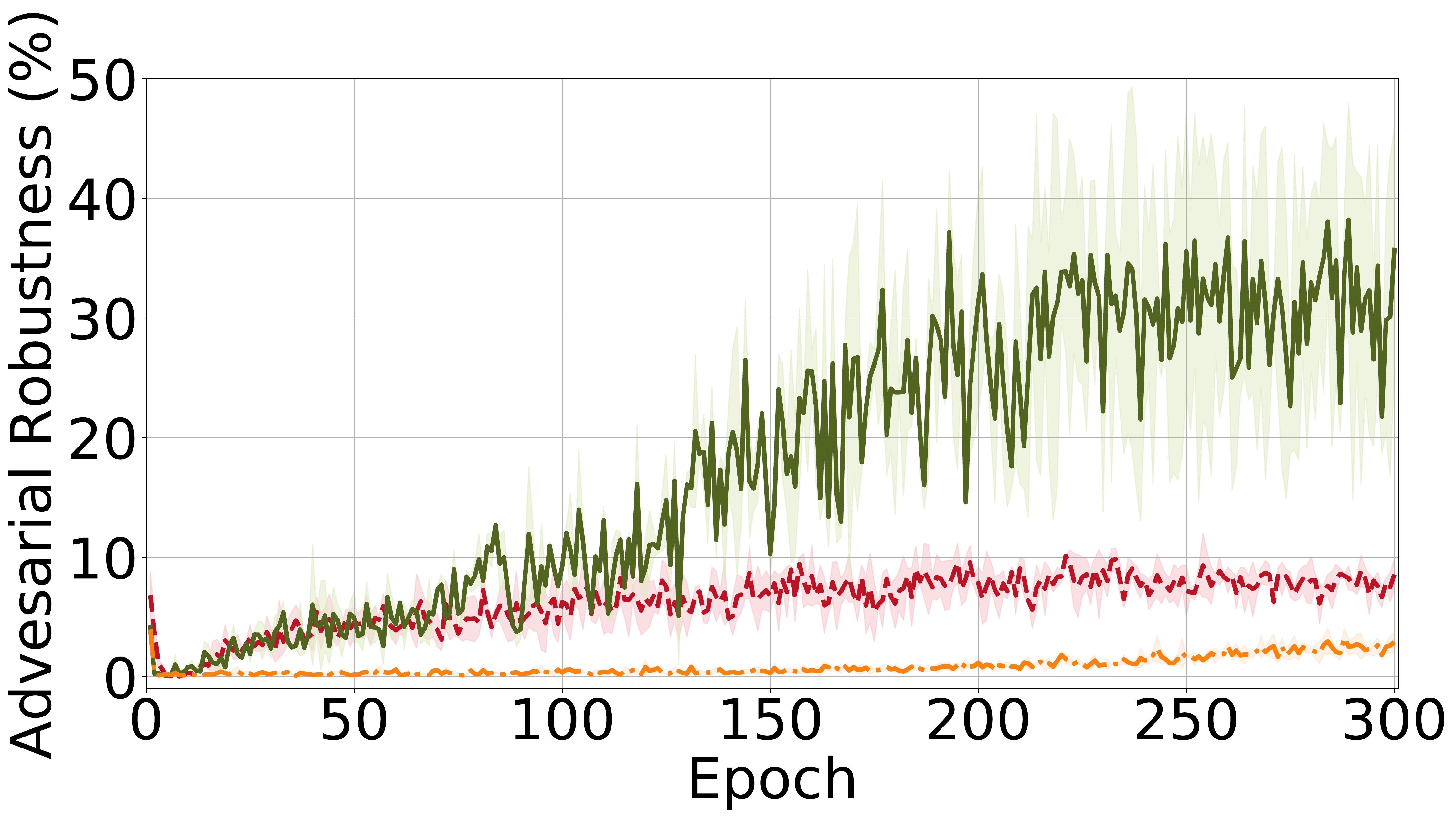} \\
    	(a) HSIC$(X,Z_M)$ & (b) HSIC$(Y,Z_M)$ & (c) Natural Accuracy & (d) Adv. Robustness \\
	\end{tabular}
	\caption{Visualization of the \hb quantities (a) HSIC$(X,Z_M)$, (b) HSIC$(X,Z_M)$, (c) natural test accuracy, and (d) adversarial robustness against PGD attack (PGD$^{40}$ and PGD$^{20}$ on MNIST and CIFAR-10, respectively) as a function of training epochs, on  MNIST  by LeNet (top) and CIFAR-10 by ResNet (bottom). Different colored lines correspond to CE, \hb-high (\hb with high weights $\lambda$), and \hb-low (\hb\ method small weighs $\lambda$). \hb-low parameters are selected so that the values of the loss $\loss$ and each of the $\HSIC$ terms are close after the first epoch.}
	\label{fig:metrics_versus_epochs}
\end{figure*}

\begin{figure*}[!t]
   \centering
   \small
   \includegraphics[width=\columnwidth]{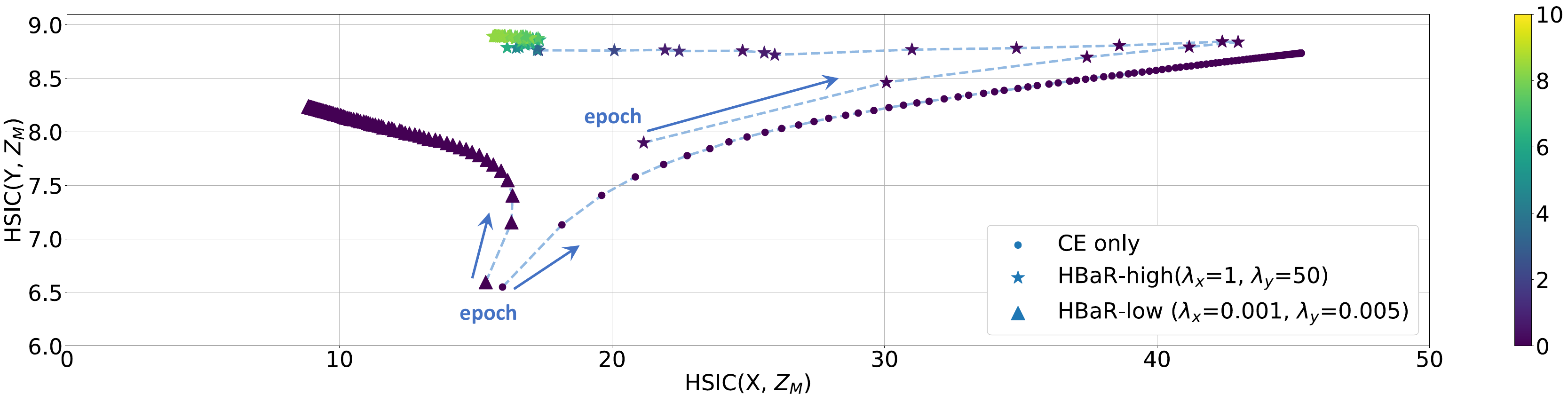} \\
   (a) MNIST by LeNet \\
   \includegraphics[width=\columnwidth]{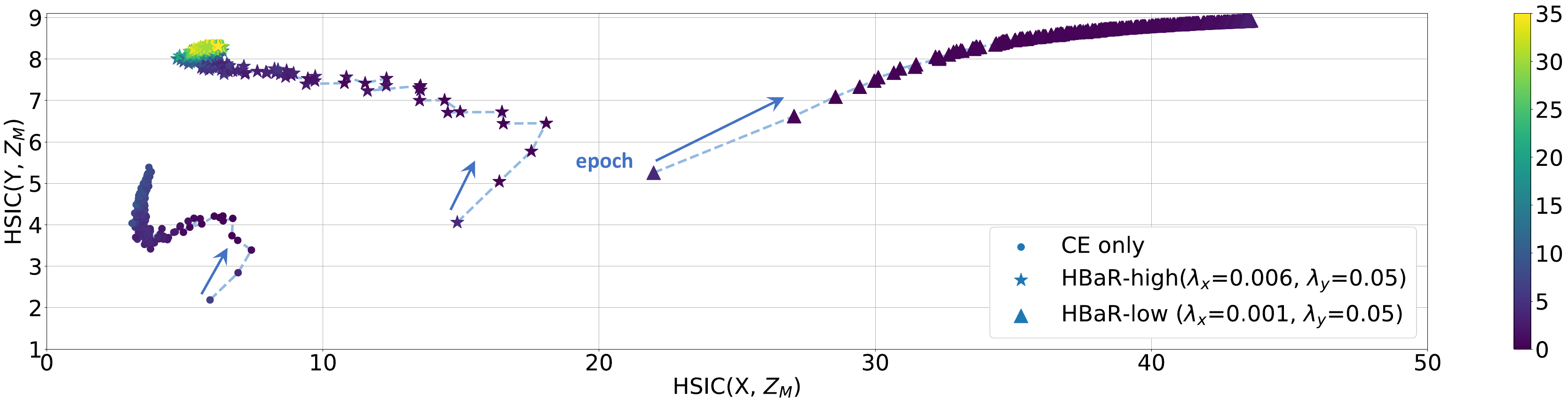} \\
	(b) CIFAR-10 by ResNet-18 \\
	\caption{HSIC plane dynamics versus adversarial robustness. The x-axis plots HSIC between the last intermediate layer $Z_M$ and the input $X$, while the y-axis plots HSIC between $Z_M$ and the output $Y$. The color scale indicates adversarial robustness against PGD attack (PGD$^{40}$ and PGD$^{20}$ on MNIST and CIFAR-10, respectively). The arrows indicate dynamic direction w.r.t. training epochs. Each marker in the figures represents a different setting: \textbf{dots}, \textbf{stars}, and \textbf{triangles} represent CE-only, \hb-high, and \hb-low, respectively, compatible with the definition in Figure \ref{fig:metrics_versus_epochs}.}
	\label{fig:hsic_plain}
\end{figure*}

\noindent\textbf{Synergy between HSIC Terms.} Focusing on $Z_\nol$,  the last latent layer, Figure~\ref{fig:metrics_versus_epochs} shows the evolution per epoch of: (a) $\operatorname{HSIC}(X,Z_\nol)$, (b) $\operatorname{HSIC}(Y,Z_\nol)$, (c) natural accuracy (in \%), and (d) adversarial robustness (in \%) under PGD attack on MNIST and CIFAR-10. Different lines correspond to CE, \hb-high (\hb with high weights $\lambda$), and \hb-low (\hb\ method small weighs $\lambda$). \hb-low parameters are selected so that the values of the loss $\loss$ and each of the $\HSIC$ terms are close after the first epoch.
Figure~\ref{fig:metrics_versus_epochs}(c) illustrates that all three settings achieve good natural accuracy on both datasets. However, in  Figure~\ref{fig:metrics_versus_epochs}(d), only \hb-high, that puts sufficient weight on $\HSIC$ terms, attains relatively high adversarial robustness. In  Figure~\ref{fig:metrics_versus_epochs}(a), we see that CE leads to high $\HSIC(X,Z_\nol)$ for the shallow LeNet, but low in the (much deeper) ResNet-18, even lower than \hb-low. Moreover, we also see that the best performer in terms of adversarial robustness, \hb-high, lies in between the other two w.r.t.~$\operatorname{HSIC}(X, Z_M)$. Both of these observations indicate the importance of the $\operatorname{HSIC}(Y, Z_M)$  penalty:  minimizing $\operatorname{HSIC}(X, Z_M)$ appropriately leads to good adversarial robustness, but coupling learning to labels via the third term is integral to maintaining useful label-related information in latent layers, thus resulting in good adversarial robustness. Figure~\ref{fig:metrics_versus_epochs}(b) confirms this, as \hb-high achieves relatively high $\operatorname{HSIC}(Y, Z_M)$ on both datasets. 
    
Figure~\ref{fig:hsic_plain} provides another perspective of the same experiments via the learning dynamics on the HSIC plane. We again observe that the best performer in terms of robustness \hb-high lies in between the other two methods, crucially attaining a much higher $\operatorname{HSIC}(Y, Z_M)$ than \hb-low. Moreover, for both \hb methods, we clearly observe the two distinct optimization phases first observed by \fcom{Shwartz-Ziv and Tishby}~\citep{shwartz2017opening} in the context of the mutual information bottleneck:  the \textit{fast empirical risk minimization phase}, where the neural network tries to learn a meaningful representation by increasing $\operatorname{HSIC}(Y, Z_\nol)$ regardless of information redundancy ($\operatorname{HSIC}(X, Z_M)$ increasing), and the \textit{representation compression phase}, where the neural network turns its focus onto compressing the latent representation by minimizing $\operatorname{HSIC}(X, Z_M)$, while  maintaining highly label-related information. Interestingly, the \hb penalty produces the two-phase behavior even though our networks use ReLU activation functions; Shwartz et al.~\citep{shwartz2017opening} only observed these two optimization phases on neural networks with tanh activation functions, a phenomenon further confirmed by Saxe et al.~\citep{saxe2019information}.  

\begin{table*}[!t]
    \centering
    \setlength{\extrarowheight}{.2em}
    \setlength{\tabcolsep}{0.9pt}
    \caption{Ablation study on \hb. Rows [i-iv] indicate the effect of removing each component of the learning objective defined in Eq.\eqref{eq:obj} (row [v]). We evaluate each objective over $\operatorname{HSIC}(X,Z_M)$, $\operatorname{HSIC}(Y,Z_M)$, natural test accuracy (in \%), and adversarial robustness (in \%) against PGD$^{40}$ and PGD$^{20}$ on MNIST and CIFAR-10 respectively. We set $\lambda_x$ as 1 and 0.006, $\lambda_y$ as 50 and 0.05, for MNIST and CIFAR-10 respectively.}
    \label{tab:ablation}
    \resizebox{1\textwidth}{!}{
    \begin{tabular}{||c | c || c c | c c || c c | c c ||}
        \hline
        \multirow{3}{*}{Rows} & \multirow{3}{*}{Objectives} & \multicolumn{4}{c||}{MNIST by LeNet} & \multicolumn{4}{c||}{CIFAR-10 by ResNet-18} \\
        \cline{3-10}
        & & \multicolumn{2}{c|}{HSIC} & \multirow{2}{*}{Natural} & \multirow{2}{*}{PGD$^{40}$} & \multicolumn{2}{c|}{HSIC} & \multirow{2}{*}{Natural} & \multirow{2}{*}{PGD$^{20}$} \\
        & &$(X,Z_M)$&$(Y,Z_M)$& & &$(X,Z_M)$&$(Y,Z_M)$& &\\
        \hline
        \hline
        [i] & $\loss(\theta)$ & 45.29 & 8.73 & 99.23 & 0.00 & 3.45 & 4.76 & 95.32 & 8.57 \\
        
        [ii] & $\lambda_{x} \sum_j\operatorname{HSIC}(X, Z_j) -\lambda_{y} \sum_j\operatorname{HSIC}(Y, Z_j)$ 
        & 16.45 & 8.65 & 30.08 & 9.47 & 44.37 & 8.72 & 19.30  & 8.58 \\
        
        [iii] & $\loss(\theta)+\lambda_{x} \sum_j\operatorname{HSIC}(X, Z_j)$
        & 0.00  & 0.00 & 11.38 & 10.00 & 0.00 & 0.00 & 10.03 & 10.10 \\
        
        [iv] & $\loss(\theta) -\lambda_{y} \sum_j\operatorname{HSIC}(Y, Z_j)$
        & 56.38 & 9.00 & 99.33 & 0.00 & 43.71 & 8.93 & 95.50 & 1.90 \\
        \hline
        
        [v] & $\loss(\theta) +\lambda_{x} \sum_j\operatorname{HSIC}(X, Z_j) -\lambda_{y} \sum_j\operatorname{HSIC}(Y, Z_j)$
        & 15.68 & 8.89 & 98.90 & 8.33 & 6.07  & 8.30 & 95.35 & 34.85\\
        \hline
    \end{tabular}
    }
\end{table*}

\noindent\textbf{Ablation Study.} 
Motivated by the above observations, we turn our attention to how the three
terms in the loss function in Eq.~\eqref{eq:obj} affect \hb. 
As illustrated in Table~\ref{tab:ablation}, removing any part leads to either a significant natural accuracy or robustness degradation. Specifically, using $\loss(\theta)$ only (row [i]) lacks adversarial robustness; removing $\loss(\theta)$ (row [ii]) or the penalty on $Y$ (row [iii]) degrades natural accuracy significantly \fcom{(a similar result was also observed in \citep{amjad2018not})}; finally, 
removing the penalty on $X$ 
improves the natural accuracy while degrading adversarial robustness. The three terms combined together by proper hyperparameters $\lambda_x$ and $\lambda_y$ (row [v]) achieve both high natural accuracy and adversarial robustness. We provide a comprehensive ablation study on the sensitivity of $\lambda_x$ and $\lambda_y$ and draw conclusions in Appendix~\ref{sec:supp-ablation} in the supplement (Tables~\ref{tab:mnist-weight} and \ref{tab:cifar-weight}).\\

\section{Conclusions} \label{sec:conclusion}
We investigate the HSIC bottleneck as regularizer (HBaR) as a means to enhance adversarial robustness. We theoretically prove that HBaR suppresses the sensitivity of the classifier to adversarial examples while retaining its discriminative nature. 
One limitation of our method is that the robustness gain is modest when training with only natural examples. Moreover, a possible negative societal impact is overconfidence in adversarial robustness: over-confidence in the \emph{adversarially-robust} models produced by \hb as well as other defense methods may lead to overlooking their potential failure on newly-invented attack methods; this should be taken into account in safety-critical applications like healthcare~\citep{adv_health} or security~\citep{adv_surveillance}. \com{We extend the discussion on the limitations and potential negative societal impacts of our work in Appendix~\ref{sec:supp-limit}~and~\ref{sec:supp-impact}, respectively, in the supplement.}


\section{Acknowledgements}
\fcom{The authors gratefully acknowledge support by the National Science Foundation under grants CCF-1937500 and CNS-2112471, and the National Institutes of Health under grant NHLBI U01HL089856.}

\clearpage
\bibliographystyle{plain}






\appendix
\newpage
\onecolumn

\section{Checklist}


\begin{enumerate}

\item For all authors...
\begin{enumerate}
  \item Do the main claims made in the abstract and introduction accurately reflect the paper's contributions and scope?
    \answerYes{}
  \item Did you describe the limitations of your work?
    \answerYes{See Section~\ref{sec:conclusion} and Appendix~\ref{sec:supp-limit} in the supplement.}
  \item Did you discuss any potential negative societal impacts of your work?
    \answerYes{See Section~\ref{sec:conclusion} and Appendix~\ref{sec:supp-impact} in the supplement.}
  \item Have you read the ethics review guidelines and ensured that your paper conforms to them?
    \answerYes{}
\end{enumerate}

\item If you are including theoretical results...
\begin{enumerate}
  \item Did you state the full set of assumptions of all theoretical results?
    \answerYes{See Section~\ref{sec:HBAR_theorem}, Assumption~\ref{asm:cont}~and~\ref{asm:universal}.}
	\item Did you include complete proofs of all theoretical results?
    \answerYes{See Appendix~\ref{sec:supp-proof-thm1} and~\ref{sec:supp-proof-thm2} in the supplement.}
\end{enumerate}

\item If you ran experiments...
\begin{enumerate}
  \item Did you include the code, data, and instructions needed to reproduce the main experimental results (either in the supplemental material or as a URL)?
    \answerYes{See Section~\ref{sec:experient-setup}. We provide code and instructions to reproduce the main experimental results for our proposed method in the supplement.}
  \item Did you specify all the training details (e.g., data splits, hyperparameters, how they were chosen)?
    \answerYes{See Section~\ref{sec:experient-setup} and Appendix~\ref{sec:supp-algorithms} in the supplement.}
	\item Did you report error bars (e.g., with respect to the random seed after running experiments multiple times)?
    \answerYes{See Figure~\ref{fig:metrics_versus_epochs} in the main text and Figure~\ref{fig:hsic-adv-variance} in the supplement.}
	\item Did you include the total amount of compute and the type of resources used (e.g., type of GPUs, internal cluster, or cloud provider)?
    \answerYes{See Section~\ref{sec:experient-setup}.}
\end{enumerate}

\item If you are using existing assets (e.g., code, data, models) or curating/releasing new assets...
\begin{enumerate}
  \item If your work uses existing assets, did you cite the creators?
    \answerYes{See Section~\ref{sec:experient-setup}.}
  \item Did you mention the license of the assets?
    \answerYes{See Section~\ref{sec:experient-setup}~and Appendix~\ref{sec:licensing} in the supplement.}
  \item Did you include any new assets either in the supplemental material or as a URL?
    \answerYes{We provide code for our proposed method in the supplement}.
  \item Did you discuss whether and how consent was obtained from people whose data you're using/curating?
    \answerNA{}
  \item Did you discuss whether the data you are using/curating contains personally identifiable information or offensive content?
    \answerNA{}
\end{enumerate}

\item If you used crowdsourcing or conducted research with human subjects...
\begin{enumerate}
  \item Did you include the full text of instructions given to participants and screenshots, if applicable?
    \answerNA{}
  \item Did you describe any potential participant risks, with links to Institutional Review Board (IRB) approvals, if applicable?
    \answerNA{}
  \item Did you include the estimated hourly wage paid to participants and the total amount spent on participant compensation?
    \answerNA{}
\end{enumerate}

\end{enumerate}

\section{Proof of Theorem \ref{thm:main_theorem}} \label{sec:supp-proof-thm1}

\begin{proof}

The following lemma holds:
\begin{lemma}
\label{thm:hsic_cov_1}
\citep{gretton2005measuring, greenfeld2020robust}
Let $X$, $Z$ be random variables residing in metric spaces $\mathcal{X}$, $\mathcal{Z}$, respectively. Let also $\mathcal{F}, \mathcal{G}$ be the two separable RKHSs on $\mathcal{X}, \mathcal{Z}$ induced by $k_X$ and $k_Z$, respectively. Then, the following inequality holds:
\begin{equation}
    \operatorname{HSIC}(X, Z) \geq \sup _{s \in \mathcal{F}, t \in \mathcal{G}} \operatorname{Cov}[s(X), t(Z)].
\end{equation}
\end{lemma}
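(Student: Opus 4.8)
The plan is to establish the inequality through the operator-theoretic characterization of HSIC. First I would recall (following Gretton et al.~\citep{gretton2005measuring}) that $\HSIC(X,Z)$ equals the squared Hilbert--Schmidt norm of the cross-covariance operator $C_{XZ}:\mathcal{G}\to\mathcal{F}$, defined by $C_{XZ}=\E[(\phi(X)-\mu_X)\otimes(\psi(Z)-\mu_Z)]$, where $\phi(x)=k_X(x,\cdot)$ and $\psi(z)=k_Z(z,\cdot)$ are the canonical feature maps into $\mathcal{F}$, $\mathcal{G}$ and $\mu_X=\E[\phi(X)]$, $\mu_Z=\E[\psi(Z)]$. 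Separability of $\mathcal{F}$ and $\mathcal{G}$ guarantees countable orthonormal bases, so that the Hilbert--Schmidt norm $\|C_{XZ}\|_{HS}$, and hence $\HSIC(X,Z)=\|C_{XZ}\|_{HS}^2$, are well defined.

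The second step is the key identity linking the bilinear form of $C_{XZ}$ to a covariance. Using the reproducing property $s(x)=\langle s,\phi(x)\rangle_{\mathcal{F}}$ and $t(z)=\langle t,\psi(z)\rangle_{\mathcal{G}}$, I would compute, for any $s\in\mathcal{F}$ and $t\in\mathcal{G}$,
\[
\langle s, C_{XZ}\, t\rangle_{\mathcal{F}} = \E\big[(s(X)-\E s(X))(t(Z)-\E t(Z))\big] = \operatorname{Cov}[s(X),t(Z)].
\]
The interchange of expectation and inner product here is justified by boundedness of the feature maps together with the uniform boundedness of the functions in Assumption~\ref{asm:cont}, which also ensures the relevant second moments exist.

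Third, I would invoke the standard fact that the Hilbert--Schmidt norm dominates the operator norm, $\|C_{XZ}\|_{HS}\ge\|C_{XZ}\|_{\mathrm{op}}$, together with the variational description $\|C_{XZ}\|_{\mathrm{op}}=\sup_{\|s\|_{\mathcal{F}}\le1,\,\|t\|_{\mathcal{G}}\le1}\langle s,C_{XZ}\,t\rangle_{\mathcal{F}}$. Combining this with the covariance identity of the previous step yields $\HSIC(X,Z)=\|C_{XZ}\|_{HS}^2\ge\|C_{XZ}\|_{\mathrm{op}}^2\ge\sup\,\operatorname{Cov}[s(X),t(Z)]$, where the supremum ranges over the function classes under the paper's uniform-boundedness convention.

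The main obstacle I anticipate is reconciling the normalization of the supremum with the stated form of the bound. The operator-norm argument naturally produces $\sqrt{\HSIC(X,Z)}\ge\|C_{XZ}\|_{\mathrm{op}}=\sup_{\text{unit balls}}\operatorname{Cov}[s(X),t(Z)]$, so care is needed to pass to the rescaled classes $\mathcal{F}$, $\mathcal{G}$ on which functions are uniformly bounded by $M_{\mathcal{F}},M_{\mathcal{G}}$ (Assumption~\ref{asm:universal}), and to absorb the resulting constants, which is precisely what the factor $M_{\mathcal{F}}M_{\mathcal{G}}/(M_{\mathcal{X}}M_{\mathcal{Z}})$ in Theorem~\ref{thm:main_theorem} later accounts for. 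I would therefore lean on the precise formulation of Greenfeld and Shalit~\citep{greenfeld2020robust}, whose rescaling of the RKHS balls keeps the supremum finite and makes the inequality hold in the displayed form.
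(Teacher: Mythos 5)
Your operator-theoretic setup is correct through the third step: $\HSIC(X,Z)=\|C_{XZ}\|_{HS}^2$, the bilinear identity $\langle s, C_{XZ}\,t\rangle_{\mathcal{F}}=\operatorname{Cov}[s(X),t(Z)]$, and $\|C_{XZ}\|_{HS}\ge\|C_{XZ}\|_{\mathrm{op}}$ are all standard. The gap is the final chain. Since the supremum of $\operatorname{Cov}[s(X),t(Z)]$ over the RKHS \emph{unit balls} equals $\|C_{XZ}\|_{\mathrm{op}}$ exactly, your claimed inequality $\|C_{XZ}\|_{\mathrm{op}}^2\ge\sup\operatorname{Cov}$ amounts to asserting $\|C_{XZ}\|_{\mathrm{op}}^2\ge\|C_{XZ}\|_{\mathrm{op}}$, i.e.\ $\|C_{XZ}\|_{\mathrm{op}}\ge 1$, which is false in general --- and it fails precisely in the regime the paper cares about, where the regularizer drives $\sqrt{\HSIC}=\|C_{XZ}\|_{HS}\ge\|C_{XZ}\|_{\mathrm{op}}$ toward zero, making the squared quantity the \emph{smaller} one. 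What your argument actually establishes, self-contained, is the classical bound
\begin{equation*}
\sqrt{\HSIC(X,Z)}\;\ge\;\sup_{\|s\|_{\mathcal{F}}\le 1,\ \|t\|_{\mathcal{G}}\le 1}\operatorname{Cov}[s(X),t(Z)]
\end{equation*}
(equivalently $\HSIC\ge\operatorname{COCO}^2$, the constrained covariance), which differs from the displayed inequality both by a square and by the normalization of the function classes. Rescaling by constants cannot repair this: the supremum in Lemma~\ref{thm:hsic_cov_1} ranges over classes that Assumption~\ref{asm:universal} bounds only in \emph{sup norm} (by $M_{\mathcal{F}}$, $M_{\mathcal{G}}$), and a sup-norm bound does not control the RKHS norm --- only the reverse inequality $\|f\|_\infty\le\kappa\,\|f\|_{\mathcal{F}}$ holds for bounded kernels --- so the variational characterization of the operator norm simply does not reach those classes.

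You partially recognize this obstacle yourself, and your proposed fix is to ``lean on the precise formulation of Greenfeld and Shalit.'' But that is a citation, not a proof, and it is in fact exactly what the paper does: Lemma~\ref{thm:hsic_cov_1} is stated with references \citep{gretton2005measuring,greenfeld2020robust} and no proof, and the paper's appendix only proves the downstream rescaling machinery (the lemmas involving $\tilde{\mathcal{F}},\tilde{\mathcal{G}}$ and $\hat{\mathcal{F}},\hat{\mathcal{G}}$) that converts the cited bound into Theorem~\ref{thm:main_theorem}. So the net comparison is: the paper treats the lemma as a black box, while your attempted independent derivation correctly reconstructs the spectral facts but cannot close to the displayed form --- the square/normalization mismatch is a genuine obstruction on the route you chose, not bookkeeping that the later factor $M_{\mathcal{F}}M_{\mathcal{G}}/(M_{\mathcal{X}}M_{\mathcal{Z}})$ can absorb.
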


Lemma \ref{thm:hsic_cov_1} shows that HSIC bounds the supremum of the covariance between any pair of functions in the RKHS, $\mathcal{F}, \mathcal{G}$. Assumption~\ref{asm:universal} states that functions in $\mathcal{F}$ and $\mathcal{G}$ are uniformly bounded by  $M_{\mathcal{F}}> 0$ and $M_{\mathcal{G}}>0 $, respectively. 
Let $\mathcal{\tilde{F}}$ and $\mathcal{\tilde{G}}$ be the restriction of $\mathcal{F}$ and $\mathcal{G}$ to functions in the unit ball of the respective RKHSs through rescaling, i.e.:
\begin{equation}\label{'eq: unit_ball'}
\begin{split}
    \mathcal{\tilde{F}} = \left\{  \frac{h}{M_{\mathcal{F}}} : h \in \mathcal{F}\right\}
    \quad \text{and} \quad
    \mathcal{\tilde{G}} = \left\{ \frac{g}{M_{\mathcal{G}}} : g \in \mathcal{G}\right\}.
\end{split}
\end{equation}
The following lemma links the covariance of the functions in the original RKHSs to their normalized version:
\begin{lemma}\label{'le:A_2'}
\citep{greenfeld2020robust}
Suppose $\mathcal{F}$ and $\mathcal{G}$ are RKHSs over $\mathcal{X}$ and $\mathcal{Z}$, s.t.
$\|s\|_{\infty} \leq M_{\mathcal{F}}$ for all $s \in \mathcal{F}$ and $\|t\|_{\infty} \leq M_{\mathcal{G}}$ for all $t \in \mathcal{G}$.
Then the following holds:
\begin{align}
\begin{split}
    \sup _{s \in \mathcal{F}, t \in \mathcal{G}} \operatorname{Cov}[s(X), &t(Z)] = 
     M_{\mathcal{F}}M_{\mathcal{G}} \sup _{s \in \tilde{\mathcal{F}}, t \in \tilde{\mathcal{G}}} \operatorname{Cov}[s(X), t(Z)].
\end{split}
\end{align}
\end{lemma}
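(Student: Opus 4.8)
The plan is to prove Lemma~\ref{'le:A_2'} by a direct rescaling argument that exploits the positive homogeneity (bilinearity) of the covariance together with the bijective correspondence between each RKHS and its normalized counterpart. By the definition of the rescaled spaces, every $s\in\mathcal{F}$ can be written uniquely as $s=M_{\mathcal{F}}\,\tilde s$ with $\tilde s=s/M_{\mathcal{F}}\in\tilde{\mathcal{F}}$, and every $t\in\mathcal{G}$ as $t=M_{\mathcal{G}}\,\tilde t$ with $\tilde t=t/M_{\mathcal{G}}\in\tilde{\mathcal{G}}$. Since $0<M_{\mathcal{F}},M_{\mathcal{G}}<\infty$ by Assumption~\ref{asm:universal}, the maps $s\mapsto s/M_{\mathcal{F}}$ and $t\mapsto t/M_{\mathcal{G}}$ are genuine bijections $\mathcal{F}\to\tilde{\mathcal{F}}$ and $\mathcal{G}\to\tilde{\mathcal{G}}$.

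First I would record the pointwise identity for a fixed pair $(s,t)$. Substituting $s=M_{\mathcal{F}}\tilde s$ and $t=M_{\mathcal{G}}\tilde t$ and using that $\operatorname{Cov}[\cdot,\cdot]$ is bilinear, so that multiplicative constants factor out of each argument, gives
\begin{equation}
\operatorname{Cov}[s(X),t(Z)]=\operatorname{Cov}[M_{\mathcal{F}}\tilde s(X),\,M_{\mathcal{G}}\tilde t(Z)]=M_{\mathcal{F}}M_{\mathcal{G}}\,\operatorname{Cov}[\tilde s(X),\tilde t(Z)].
\end{equation}
These covariances are well defined and finite because the functions are uniformly bounded, so the underlying expectations exist. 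Next I would take the supremum over $s\in\mathcal{F}$, $t\in\mathcal{G}$ on both sides. Since $(s,t)\mapsto(\tilde s,\tilde t)$ is a bijection between $\mathcal{F}\times\mathcal{G}$ and $\tilde{\mathcal{F}}\times\tilde{\mathcal{G}}$, the resulting supremum ranges exactly over $\tilde{\mathcal{F}}\times\tilde{\mathcal{G}}$, and because $M_{\mathcal{F}}M_{\mathcal{G}}>0$ is a fixed positive constant it can be pulled outside the supremum (using $\sup_x c\,\phi(x)=c\sup_x\phi(x)$ for $c>0$), which yields exactly the claimed equality.

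The argument is essentially routine, so there is no substantive obstacle; the only points requiring care are that the rescaling constants are strictly positive and finite, which is what makes the normalized spaces well defined and lets the positive constant cross the supremum without reversing its direction, and that the covariances are finite, which follows from the uniform boundedness in Assumptions~\ref{asm:cont}~and~\ref{asm:universal}. The conceptual content is simply that normalizing each RKHS to the ``bounded-by-one'' scale factors out the product $M_{\mathcal{F}}M_{\mathcal{G}}$; combined with Lemma~\ref{thm:hsic_cov_1}, this is precisely what produces the factor $M_{\mathcal{F}}M_{\mathcal{G}}$ appearing in Theorem~\ref{thm:main_theorem}.
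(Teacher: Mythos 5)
Your proof is correct: the bilinearity of covariance (so that $\operatorname{Cov}[M_{\mathcal{F}}\tilde s(X),M_{\mathcal{G}}\tilde t(Z)]=M_{\mathcal{F}}M_{\mathcal{G}}\operatorname{Cov}[\tilde s(X),\tilde t(Z)]$), the bijection $(s,t)\mapsto(s/M_{\mathcal{F}},\,t/M_{\mathcal{G}})$ between $\mathcal{F}\times\mathcal{G}$ and $\tilde{\mathcal{F}}\times\tilde{\mathcal{G}}$ given by the definition in \eqref{'eq: unit_ball'}, and the fact that a finite positive constant passes through the supremum are exactly the ingredients needed, and the uniform boundedness guarantees all covariances are finite. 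Note that the paper itself offers no proof of this statement --- it is imported verbatim from Greenfeld and Shalit \citep{greenfeld2020robust} --- and your rescaling argument is precisely the standard proof of that cited result, so there is no substantive difference in approach.
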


 For simplicity in notation, we define the following sets containing functions that satisfy Assumption \ref{asm:cont}:
\begin{equation}\label{'eq: continuous'}
\begin{split}
    C_{b}(\mathcal{X}) = \left\{h \in C(\mathcal{X}) : ||h||_{\infty}  \leq M_{\mathcal{X}}\right\} \quad \text{and} \quad
    C_{b}(\mathcal{Z}) = \left\{g \in C(\mathcal{Z}) : ||g||_{\infty}  \leq M_{\mathcal{Z}}\right\}.
\end{split}
\end{equation}

In Assumption \ref{asm:universal}, we mention that functions in $\mathcal{F}$ and $\mathcal{G}$ may require appropriate rescaling to keep the universality of corresponding kernels. To make the rescaling explicit, we define the following \textit{rescaled} RKHSs:
\begin{equation}\label{'eq: rescaled_rkhs'}
\begin{split}
    \mathcal{\hat{F}} = \left\{\frac{M_{\mathcal{X}}}{M_{\mathcal{F}}} \cdot h : h\in \mathcal{F} \right\} \quad \text{and} \quad
    \mathcal{\hat{G}} = \left\{\frac{M_{\mathcal{Z}}}{M_{\mathcal{G}}} \cdot g : g\in \mathcal{G} \right\}.
\end{split}
\end{equation}
This rescaling ensures that $||\hat{h}||_{\infty} \leq M_\mathcal{X}$ for every $\hat{h}\in \mathcal{\hat{F}}$. Similarly, $||\hat{g}||_{\infty} \leq M_\mathcal{Z}$ for every $\hat{g}\in \mathcal{\hat{G}}$. 



We also want to prove $\mathcal{F}$ is convex. Given $f,g\in \mathcal{F}$, we need to show for all $0\leq \alpha \leq 1$, the function $\alpha f + (1-\alpha) g \in \mathcal{F}$.
As linear summation of RKHS functions is in the RKHS, we just need to check that $|| \alpha f + (1-\alpha) g||_{\infty} \leq M_{\mathcal{F}}$; indeed:
\begin{equation}
    || \alpha f + (1-\alpha) g||_{\infty} \leq \alpha ||f||_{\infty} + (1-\alpha) ||g||_{\infty} \leq  \alpha M_{\mathcal{F}} + (1-\alpha) M_{\mathcal{F}}
\end{equation}
We thus conclude that the bounded RKHS $\mathcal{F}$ is indeed convex. Hence any rescaling of the function, as long as it has a norm less than  $M_{\mathcal{F}} $, remains inside $\mathcal{F}$.

Indeed, the following lemma holds:


\begin{lemma} \label{'lem: hat_universal'}
 If $\mathcal{F},\mathcal{G}$ are universal with respect to $C_{b}(\mathcal{X}),C_{b}(\mathcal{Z})$, then:
 
 \begin{equation} \label{'eq:restricted universal'}
 \mathcal{\hat{F}} = C_{b}(\mathcal{X})\quad\text{and}\quad \mathcal{\hat{G}} = C_{b}(\mathcal{Z}).
 \end{equation}

\end{lemma}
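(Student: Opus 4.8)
The plan is to prove the set equality $\hat{\mathcal{F}} = C_b(\mathcal{X})$ by a double inclusion, and to note that $\hat{\mathcal{G}} = C_b(\mathcal{Z})$ follows by the identical argument after swapping the roles of $(\mathcal{X},\mathcal{F},M_{\mathcal{X}},M_{\mathcal{F}})$ and $(\mathcal{Z},\mathcal{G},M_{\mathcal{Z}},M_{\mathcal{G}})$. First I would dispatch the easy inclusion $\hat{\mathcal{F}} \subseteq C_b(\mathcal{X})$. Any $\hat{h} \in \hat{\mathcal{F}}$ has the form $\hat{h} = \frac{M_{\mathcal{X}}}{M_{\mathcal{F}}} h'$ for some $h' \in \mathcal{F}$; since the universal kernels under consideration (e.g.\ Gaussian, Laplace) are continuous, every RKHS element is continuous, so $\hat{h} \in C(\mathcal{X})$. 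The rescaling was chosen precisely so that $\|\hat{h}\|_{\infty} = \frac{M_{\mathcal{X}}}{M_{\mathcal{F}}}\|h'\|_{\infty} \le \frac{M_{\mathcal{X}}}{M_{\mathcal{F}}} M_{\mathcal{F}} = M_{\mathcal{X}}$, using the uniform bound $\|h'\|_{\infty}\le M_{\mathcal{F}}$ from Assumption~\ref{asm:universal}. Hence $\hat{h}\in C_b(\mathcal{X})$, which gives the forward inclusion immediately.

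For the reverse inclusion $C_b(\mathcal{X}) \subseteq \hat{\mathcal{F}}$, fix $h \in C_b(\mathcal{X})$. Then $\frac{M_{\mathcal{F}}}{M_{\mathcal{X}}} h$ is continuous with sup-norm at most $M_{\mathcal{F}}$, so it is a legitimate target for approximation by $\mathcal{F}$. The universality hypothesis of Assumption~\ref{asm:universal} supplies, for each $\varepsilon>0$, a function $f_\varepsilon \in \mathcal{F}$ with $\|\frac{M_{\mathcal{F}}}{M_{\mathcal{X}}} h - f_\varepsilon\|_{\infty} \le \varepsilon$; rescaling by $\frac{M_{\mathcal{X}}}{M_{\mathcal{F}}}$ produces $\hat{f}_\varepsilon \in \hat{\mathcal{F}}$ with $\|h - \hat{f}_\varepsilon\|_{\infty} \le \frac{M_{\mathcal{X}}}{M_{\mathcal{F}}}\varepsilon$. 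Thus $h$ lies in the sup-norm closure of $\hat{\mathcal{F}}$. To upgrade this density to genuine membership $h\in\hat{\mathcal{F}}$, which is what the claimed equality demands, I would invoke that $\hat{\mathcal{F}}$ is closed under uniform limits: $\mathcal{F}$ is convex (shown earlier) and uniformly bounded with its sup-norm bound $M_{\mathcal{F}} = \max_{f'\in\mathcal{F}}\|f'\|_{\infty}$ \emph{attained} as a maximum, so under the paper's convention $\mathcal{F}$ is treated as a closed bounded convex family; verifying that a uniform Cauchy sequence drawn from $\mathcal{F}$ has its limit back in $\mathcal{F}$ then gives $\hat{\mathcal{F}} = \overline{\hat{\mathcal{F}}} \supseteq C_b(\mathcal{X})$. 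Combined with the forward inclusion, this yields $\hat{\mathcal{F}} = C_b(\mathcal{X})$.

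The main obstacle is precisely this closedness step. Universality only delivers sup-norm \emph{density} of the RKHS in the space of continuous functions, and a general RKHS is merely a dense subspace of $C(\mathcal{X})$, not a sup-norm closed one; density and convexity alone do not force a convex set to equal its closure. The argument must therefore lean explicitly on the convention in Assumption~\ref{asm:universal}, where the uniform bound $M_{\mathcal{F}}$ is an attained maximum over a uniformly bounded, convex family, which is what lets us treat $\mathcal{F}$ (and hence $\hat{\mathcal{F}}$) as closed under uniform limits. I would make this convention explicit at the outset, establish closedness of $\hat{\mathcal{F}}$ under it, and only then combine closedness with the density supplied by universality to conclude the two-sided equality. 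The same three ingredients, verbatim, applied to $(\mathcal{Z},\mathcal{G})$ complete the proof of the lemma.
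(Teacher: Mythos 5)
Your proof has the same skeleton as the paper's---double inclusion, with the easy direction $\hat{\mathcal{F}} \subseteq C_{b}(\mathcal{X})$ handled identically and the reverse direction via the rescaling $h \mapsto \frac{M_{\mathcal{F}}}{M_{\mathcal{X}}}h$ together with convexity of $\mathcal{F}$---but you part ways at the one delicate step, and there your diagnosis is sharper than your cure. The paper's own proof simply writes ``From universality stated in Assumption~\ref{asm:universal}, $h \in \mathcal{F}$,'' silently promoting $\varepsilon$-approximability to exact membership; you rightly refuse to do this and observe that universality only places $h$ in the sup-norm \emph{closure} of $\hat{\mathcal{F}}$. Your patch, however, is a non sequitur: the fact that $M_{\mathcal{F}} = \max_{f'\in\mathcal{F}}\|f'\|_{\infty}$ is written with a max rather than a sup says only that the bound is attained by some element; it gives no control over limits of uniformly Cauchy sequences, and the verification you defer---that such limits land back in $\mathcal{F}$---cannot succeed. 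For exactly the kernels the paper names as examples (Gaussian, Laplace), the RKHS is a \emph{proper} dense subspace of the continuous functions and is never sup-norm closed, and convexity does not rescue this: a dense proper convex subspace is still proper. So as a literal set identity the lemma fails for every standard universal kernel, and your attempt relocates the paper's gap (from ``density equals membership'' to an unproven closedness convention) rather than closing it---though you deserve credit for being the only one of the two proofs to name the gap.

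The repair that actually works is to weaken the conclusion, not to strengthen the hypotheses: either state $\overline{\hat{\mathcal{F}}} = C_{b}(\mathcal{X})$ and $\overline{\hat{\mathcal{G}}} = C_{b}(\mathcal{Z})$ in sup norm (which your density argument essentially proves, modulo the small mismatch that you invoke Assumption~\ref{asm:universal} for the target $\frac{M_{\mathcal{F}}}{M_{\mathcal{X}}}h$, whose sup norm is $M_{\mathcal{F}}$ rather than the bound $M_{\mathcal{X}}$ for which the assumption is stated), or bypass the set identity entirely and prove what the lemma is actually used for in Lemma~\ref{'le:equality in sup'}, namely equality of covariance suprema. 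Density plus uniform boundedness suffices for that, because covariance is sup-norm continuous on uniformly bounded families: if $\|s-s'\|_{\infty} \leq \varepsilon$ and $\|t-t'\|_{\infty} \leq \varepsilon$ with $\|s\|_{\infty},\|s'\|_{\infty} \leq M_{\mathcal{X}}+\varepsilon$ and $\|t\|_{\infty},\|t'\|_{\infty} \leq M_{\mathcal{Z}}+\varepsilon$, then
\begin{equation}
\bigl|\operatorname{Cov}[s(X),t(Z)] - \operatorname{Cov}[s'(X),t'(Z)]\bigr| \leq 2\varepsilon\bigl(M_{\mathcal{X}}+M_{\mathcal{Z}}+2\varepsilon\bigr),
\end{equation}
so the supremum of the covariance over $\hat{\mathcal{F}} \times \hat{\mathcal{G}}$ equals the supremum over $C_{b}(\mathcal{X}) \times C_{b}(\mathcal{Z})$, which is all that Theorem~\ref{thm:main_theorem} needs. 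Routed this way, your correct density observation finishes the job with no closedness assumption at all.
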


\begin{proof}

We prove this by first showing $C_{b}(\mathcal{X}) \subseteq \mathcal{\hat{F}}$ and then $\mathcal{\hat{F}} \subseteq C_{b}(\mathcal{X})$, which leads to equality of the sets.

\begin{itemize}
    \item  $C_{b}(\mathcal{X}) \subseteq \mathcal{\hat{F}}$: 
    For all $h \in C_{b}(\mathcal{X})$, we show $h\in\mathcal{\hat{F}}$. Based on the definition of $C_{b}(\mathcal{X})$ in~\eqref{'eq: continuous'}, we know $\|h\|_{\infty} \leq M_{\mathcal{X}}$. From universality stated in Assumption~\ref{asm:universal}, $h \in \mathcal{F}$. Let $g = \frac{M_{\mathcal{F}}}{M_{\mathcal{X}}}h$. Then $\|g\|_{\infty} = \|\frac{M_{\mathcal{F}}}{M_{\mathcal{X}}}h\|_{\infty} = \frac{M_{\mathcal{F}}}{M_{\mathcal{X}}}||h||_{\infty} \leq  M_{\mathcal{F}}$. Based on the convexity of $\mathcal{F}$, $g \in \mathcal{F}$. We rescale every function in $\mathcal{F}$ by $\frac{M_{\mathcal{X}}}{M_{\mathcal{F}}}$ to form $\mathcal{\hat{F}}$, so $\frac{M_{\mathcal{X}}}{M_{\mathcal{F}}} g = \frac{M_{\mathcal{X}}}{M_{\mathcal{F}}} \frac{M_{\mathcal{F}}}{M_{\mathcal{X}}} h = h \in \mathcal{\hat{F}}$.

    \item $\mathcal{\hat{F}} \subseteq C_{b}(\mathcal{X})$:
    On the other hand, for all $h \in \mathcal{\hat{F}}$, $h$ is continuous and bounded by $M_{\mathcal{X}}$. So  based on the definition of $C_{b}(\mathcal{X})$ in~\eqref{'eq: continuous'}, $h\in C_{b}(\mathcal{X})$. Thus, $ \mathcal{\hat{F}} \subseteq C_{b}(\mathcal{X})$. \\

\end{itemize}

Having both side of the inclusion we conclude that $\mathcal{\hat{F}}= C_{b}(\mathcal{X})$. One can prove $\mathcal{\hat{G}}= C_{b}(\mathcal{Z})$ similarly.

\end{proof}





Applying the universality of kernels from Assumption $\ref{asm:universal}$ we can prove the following lemma:

\begin{lemma}\label{'le:equality in sup'}
Let $X$, $Z$ be random variables residing in metric spaces $\mathcal{X}$, $\mathcal{Z}$ with separable RKHSs $\mathcal{F}$, $\mathcal{G}$ induced by kernel functions $k_X$ and $k_Z$, respectively, for which Assumption~\ref{asm:universal} holds. Let $\mathcal{\hat{F}}$ and $\mathcal{\hat{G}}$ be the rescaled RKHSs  defined in~\eqref{'eq: rescaled_rkhs'}. Then: 
\begin{equation}
  \frac{M_{\mathcal{X}}M_{\mathcal{Z}}}{ M_{\mathcal{F}}M_{\mathcal{G}}}  \sup _{s \in \mathcal{F}, t \in \mathcal{G}} \operatorname{Cov}[s(X), t(Z)]= \sup _{s \in \mathcal{\hat{F}}, t \in \mathcal{\hat{G}}} \operatorname{Cov}[s(X), t(Z)] = \sup _{s \in C_{b}(\mathcal{X}), t \in C_{b}(\mathcal{Z})} \operatorname{Cov}[s(X), t(Z)],
\end{equation} 
where $C_{b}(\mathcal{X}), C_{b}(\mathcal{Z})$ are defined in \eqref{'eq: continuous'}.
\end{lemma}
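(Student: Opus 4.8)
The plan is to prove the stated chain of two equalities separately, in each case reducing the problem to results already established earlier in the appendix. Neither equality requires any deep analysis; both follow from the bilinearity of covariance together with Lemma~\ref{'lem: hat_universal'}.

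For the first equality, I would exploit the explicit definition of the rescaled RKHSs in~\eqref{'eq: rescaled_rkhs'}. The map $s\mapsto \frac{M_{\mathcal{X}}}{M_{\mathcal{F}}}s$ is a bijection from $\mathcal{F}$ onto $\hat{\mathcal{F}}$, and likewise $t\mapsto \frac{M_{\mathcal{Z}}}{M_{\mathcal{G}}}t$ is a bijection from $\mathcal{G}$ onto $\hat{\mathcal{G}}$. Hence every pair $(\hat s,\hat t)\in\hat{\mathcal{F}}\times\hat{\mathcal{G}}$ is of the form $\hat s=\frac{M_{\mathcal{X}}}{M_{\mathcal{F}}}s$, $\hat t=\frac{M_{\mathcal{Z}}}{M_{\mathcal{G}}}t$ for a unique $(s,t)\in\mathcal{F}\times\mathcal{G}$. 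By bilinearity of covariance, the constant factors pull out:
\begin{equation}
\operatorname{Cov}[\hat s(X),\hat t(Z)]=\frac{M_{\mathcal{X}}M_{\mathcal{Z}}}{M_{\mathcal{F}}M_{\mathcal{G}}}\operatorname{Cov}[s(X),t(Z)].
\end{equation}
Since this identity holds pairwise along a bijection, taking the supremum on both sides commutes with the (positive) constant factor, yielding
\begin{equation}
\sup_{s\in\hat{\mathcal{F}},\,t\in\hat{\mathcal{G}}}\operatorname{Cov}[s(X),t(Z)]=\frac{M_{\mathcal{X}}M_{\mathcal{Z}}}{M_{\mathcal{F}}M_{\mathcal{G}}}\sup_{s\in\mathcal{F},\,t\in\mathcal{G}}\operatorname{Cov}[s(X),t(Z)],
\end{equation}
which is precisely the first equality. (Note this is essentially the content of Lemma~\ref{'le:A_2'} applied with the rescaling factors made explicit.)

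For the second equality, I would invoke Lemma~\ref{'lem: hat_universal'} directly: under the universality hypothesis of Assumption~\ref{asm:universal}, that lemma establishes the set equalities $\hat{\mathcal{F}}=C_{b}(\mathcal{X})$ and $\hat{\mathcal{G}}=C_{b}(\mathcal{Z})$. Since the two suprema are taken over literally the same index sets, they are equal trivially, with no further argument needed.

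The only thing to be careful about is that the bijection in the first step preserves the supremum \emph{exactly} rather than merely up to an inequality; this is immediate because the scaling constants are strictly positive and the correspondence between pairs is a genuine bijection, so no pair is lost or gained on either side. There is no real obstacle here, as the substantive work—showing that the rescaled RKHSs coincide with the bounded continuous function classes—has already been carried out in Lemma~\ref{'lem: hat_universal'}; this lemma simply assembles that fact with the covariance-scaling identity.
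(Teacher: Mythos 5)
Your proof is correct. The second equality is handled exactly as in the paper: both you and the paper invoke Lemma~\ref{'lem: hat_universal'} to identify $\hat{\mathcal{F}}$ with $C_{b}(\mathcal{X})$ and $\hat{\mathcal{G}}$ with $C_{b}(\mathcal{Z})$, after which the two suprema are over identical index sets. Where you genuinely differ is the first equality. The paper routes through the unit-ball spaces $\tilde{\mathcal{F}}, \tilde{\mathcal{G}}$ of~\eqref{'eq: unit_ball'}: it applies Lemma~\ref{'le:A_2'} once to the pair $(\mathcal{F},\mathcal{G})$ and once to the pair $(\hat{\mathcal{F}},\hat{\mathcal{G}})$ --- after verifying in Eqs.~\eqref{eq:21} and~\eqref{eq:22} that both pairs normalize to the \emph{same} spaces $\tilde{\mathcal{F}},\tilde{\mathcal{G}}$ --- and then combines the two resulting identities. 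You instead bypass $\tilde{\mathcal{F}},\tilde{\mathcal{G}}$ and Lemma~\ref{'le:A_2'} entirely, using the explicit scaling bijections $s\mapsto \frac{M_{\mathcal{X}}}{M_{\mathcal{F}}}s$ and $t\mapsto\frac{M_{\mathcal{Z}}}{M_{\mathcal{G}}}t$ from~\eqref{'eq: rescaled_rkhs'}, bilinearity of covariance, and the fact that a strictly positive constant pulls out of a supremum. Your route is more elementary and self-contained (as you note, it is essentially Lemma~\ref{'le:A_2'} with the rescaling factors made explicit), and it avoids the intermediate consistency check and the step of combining two identities; what the paper's version buys is that it reuses the cited Greenfeld--Shalit lemma as a black box, keeping the argument aligned with the source it builds on. Both arguments are sound, and yours is arguably the cleaner of the two.
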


\begin{proof}

The right equality of Lemma  \ref{'le:equality in sup'} immediately follows by Lemma~\ref{'lem: hat_universal'}:
\begin{equation}
    \sup _{s \in \mathcal{\hat{F}}, t \in \mathcal{\hat{G}}} \operatorname{Cov}[s(X), t(Z)] = \sup _{s \in C_{b}(\mathcal{X}), t \in C_{b}(\mathcal{Z})} \operatorname{Cov}[s(X), t(Z)].
\end{equation}

Applying Lemma  \ref{'le:A_2'} on $\mathcal{F}, \mathcal{G}, \mathcal{\tilde{F}},\mathcal{\tilde{G}}$, we have: 
\begin{equation}\label{'eq:lemma3_result_1'}
    \sup _{s \in \mathcal{F}, t \in \mathcal{G}} \operatorname{Cov}[s(X), t(Z)] = 
     M_{\mathcal{F}}M_{\mathcal{G}} \sup _{s \in \tilde{\mathcal{F}}, t \in \tilde{\mathcal{G}}} \operatorname{Cov}[s(X), t(Z)].
\end{equation}

Note that from \eqref{'eq: rescaled_rkhs'} and \eqref{'eq: unit_ball'}, we have that the corresponding normalized space for $\mathcal{\hat{F}}$ is:
\begin{equation} \label{eq:21}
    \begin{split}
        \left\{  \frac{h}{M_{\mathcal{X}}} : h \in \mathcal{\hat{F}}\right\} = \left\{  \frac{M_{\mathcal{X}}}{M_{\mathcal{F}}}\frac{h}{M_{\mathcal{X}}}  : h \in \mathcal{{F}}\right\} = \left\{  \frac{h}{M_{\mathcal{F}}} : h \in \mathcal{F}\right\} = \mathcal{\tilde{F}}.
    \end{split}
\end{equation}
Similarly, the normalized space for $\mathcal{\hat{G}}$ is:
\begin{equation} \label{eq:22}
    \begin{split}
        \left\{  \frac{g}{M_{\mathcal{Z}}} : g \in \mathcal{\hat{G}}\right\}  = \left\{  \frac{g}{M_{\mathcal{G}}} : g \in \mathcal{G}\right\}= \mathcal{\tilde{G}}.
    \end{split}
\end{equation}
Equation~\eqref{eq:21} implies that the normalized space induced from $\mathcal{\hat{F}}$ coincides with the normalized space induced from $\mathcal{{F}}$. Similarly, Equation \eqref{eq:22} implies the normalized spaces for $\mathcal{G}$ and $\mathcal{\hat{G}}$ also coincide. Moreover, for all $\hat{h}\in \mathcal{\hat{F}}$, $||\hat{h}||_{\infty} \leq M_\mathcal{X}$ and for all $\hat{g}\in \mathcal{\hat{G}}$, $||\hat{g}||_{\infty} \leq M_\mathcal{Z}$. Hence, applying Lemma~\ref{'le:A_2'} on $\mathcal{\hat{F}}, \mathcal{\hat{G}}, \mathcal{\tilde{F}},\mathcal{\tilde{G}}$, we have:  
\begin{equation}\label{'eq:lemma3_result_2'}
    \sup _{s \in \mathcal{\hat{F}}, t \in \mathcal{\hat{G}}} \operatorname{Cov}[s(X), t(Z)] = 
     M_{\mathcal{X}}M_{\mathcal{Z}} \sup _{s \in \tilde{\mathcal{F}}, t \in \tilde{\mathcal{G}}} \operatorname{Cov}[s(X), t(Z)].
\end{equation}

By dividing Equation~\eqref{'eq:lemma3_result_1'} and \eqref{'eq:lemma3_result_2'}, we prove the left part of Lemma \ref{'le:equality in sup'}:
\begin{equation}
  \frac{M_{\mathcal{X}}M_{\mathcal{Z}}}{ M_{\mathcal{F}}M_{\mathcal{G}}}  \sup _{s \in \mathcal{F}, t \in \mathcal{G}} \operatorname{Cov}[s(X), t(Z)]=  \sup _{s \in \mathcal{\hat{F}}, t \in \mathcal{\hat{G}}} \operatorname{Cov}[s(X), t(Z)].
\end{equation} 
\end{proof}

By combining Theorem \ref{thm:hsic_cov_1} and Lemma \ref{'le:equality in sup'}, we have the following result: 
\begin{equation} \label{eq:last_second}
    \begin{split}
        \frac{M_{\mathcal{X}}M_{\mathcal{Z}}}{ M_{\mathcal{F}}M_{\mathcal{G}}}\operatorname{HSIC}(X, Z) \geq
     \sup _{s \in C_{b}(\mathcal{X}), t \in C_{b}(\mathcal{Z})} \operatorname{Cov}[s(X), t(Z)].
    \end{split}
\end{equation}

Recall that $h_{\theta}$ is a neural network from $\mathcal{X}$ to $\mathcal{Y}$, such that it can be written as composition of $g\circ f$, where $f:\mathcal{X}\to \mathcal{Z}$ and $g: \mathcal{Z}\to \mathcal{Y}$. Moreover, $h_{\theta} \in C_{b}(\mathcal{X})$ and  $g \in C_{b}(\mathcal{Z})$. Using the fact that the supremum on a subset of a set is smaller or equal than the supremum on the whole set, we conclude that: 
\begin{equation}
    \begin{split}
    \label{eq:hsic_var}
    \frac{M_{\mathcal{X}}M_{\mathcal{Z}}}{ M_{\mathcal{F}}M_{\mathcal{G}}}\operatorname{HSIC}(X, Z)  &\geq  \sup_{\theta} \operatorname{Cov}[h_{\theta}(X), g(Z))] \\
    &=  \sup_{\theta}
    \operatorname{Cov}[h_{\theta}(X) , g\circ f (X)] \\ 
    &=  \sup_{\theta}\operatorname{Var}[h_{\theta}(X)].
    \end{split}
\end{equation}
\end{proof}

\section{Proof of Theorem 2} \label{sec:supp-proof-thm2}



\begin{proof}
\eqcom
Let $t_i:\reals^{\dx}\to\reals$,  $i = 1, 2, ..., \dx$ be the following truncation functions:
\begin{align} \label{eq:truncation}
    \begin{split}
        t_i(X) = \begin{cases} 
          -R, & \text{if}~X_i < -R, \\
          X_i, & \text{if}~-R \leq X_i \leq R, \\
          R, &\text{if}~ X_i > R.
       \end{cases} 
    \end{split}
\end{align}

where $0 < R < \infty$ and $X_i$ is the $i$-th dimension of $X$. Functions $t_i$ are continous and bounded  in $\mathcal{X}$, and
\begin{equation}
    t_i \in C_{b'}(\mathcal{X}), \quad \text{where}\quad C_{b'}(\mathcal{X})= \{t \in C(\mathcal{X}): \|t\|_\infty \leq R\}
\end{equation}
Moreover,  $g$ satisfies Assumptions~\ref{asm:cont} and~\ref{asm:universal}.  
Similar to the proof of Theorem~\ref{thm:main_theorem}, by combining Theorem \ref{thm:hsic_cov_1} and Lemma \ref{'le:equality in sup'}, we have that:
\begin{equation} \label{eq:main_for_thm2}
    \begin{split}
         \frac{R M_{\mathcal{Z}}}{ M_{\mathcal{F}}M_{\mathcal{G}}}\operatorname{HSIC}(X, Z) 
     &\geq \sup_{t \in C_{b'(\mathcal{X})},\ g \in C_{b}(\mathcal{Z})} \operatorname{Cov}[t (X), g(Z)] \\
     &\geq \operatorname{Cov}[t_i (X), h_{\theta}(X)], \quad i = 1, \ldots, \dx.
    \end{split}
\end{equation}

Moreover, the following lemma holds:
\begin{lemma} \label{lemma: cov_diff}
Let $X \sim \mathcal{N}(0, \sigma^2 \mathbf{I})$ and $t_i(X)$ defined by~\eqref{eq:truncation}. For all $h_\theta$ that satisfy Assumption~\ref{asm:cont},  we have:
\begin{align}
        \operatorname{Cov}[X_i, h_{\theta}(X)] - \operatorname{Cov}[t_i(X), h_{\theta}(X)] \leq \frac{2 M_\mathcal{X} \sigma}{\sqrt{2 \pi}} \exp(-\frac{R^2}{2\sigma^2}), \quad \text{for all}~i = 1, 2, \ldots, \dx.
\end{align}
\end{lemma}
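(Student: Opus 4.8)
The plan is to reduce the entire statement to a one-dimensional Gaussian tail estimate. First I would use bilinearity of the covariance to combine the two terms into a single covariance of the defect $X_i - t_i(X)$:
\[
\operatorname{Cov}[X_i, h_\theta(X)] - \operatorname{Cov}[t_i(X), h_\theta(X)] = \operatorname{Cov}[X_i - t_i(X),\, h_\theta(X)].
\]
The key structural observation is that, by the definition of the truncation in~\eqref{eq:truncation}, the defect $X_i - t_i(X)$ vanishes on the slab $\{|X_i|\le R\}$ and equals $X_i - R$ on $\{X_i>R\}$ and $X_i + R$ on $\{X_i<-R\}$; in particular it is an \emph{odd} function of the single coordinate $X_i$.

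Next I would exploit that $X_i\sim\mathcal{N}(0,\sigma^2)$ has a symmetric marginal, so that $\E[X_i - t_i(X)] = 0$. This collapses the covariance to a single product-expectation,
\[
\operatorname{Cov}[X_i - t_i(X),\, h_\theta(X)] = \E[(X_i - t_i(X))\, h_\theta(X)],
\]
where I emphasize that this step uses only the marginal law of $X_i$; I do \emph{not} attempt to invoke oddness on the product itself, since $h_\theta$ depends on all coordinates of $X$. I would then bound this product-expectation by passing to absolute values and pulling out the uniform bound $\|h_\theta\|_\infty\le M_{\mathcal{X}}$ supplied by Assumption~\ref{asm:cont}, obtaining
\[
\E[(X_i - t_i(X))\, h_\theta(X)] \le M_{\mathcal{X}}\,\E[\,|X_i - t_i(X)|\,].
\]

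It then remains to estimate the scalar quantity $\E[|X_i - t_i(X)|]$. Since the defect is supported on the two tails and is symmetric about $0$, I would write it as $2\int_{R}^{\infty}(x-R)\,\phi_\sigma(x)\,dx$, where $\phi_\sigma$ denotes the $\mathcal{N}(0,\sigma^2)$ density. Using the elementary identity $\int_R^\infty x\,\phi_\sigma(x)\,dx = \sigma^2\phi_\sigma(R)$, which follows from $\phi_\sigma'(x) = -\tfrac{x}{\sigma^2}\phi_\sigma(x)$, and discarding the nonnegative term $R\int_R^\infty \phi_\sigma(x)\,dx$, I obtain $\E[|X_i - t_i(X)|]\le 2\sigma^2\phi_\sigma(R) = \tfrac{2\sigma}{\sqrt{2\pi}}\exp(-R^2/2\sigma^2)$. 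Multiplying by $M_{\mathcal{X}}$ recovers the claimed constant exactly.

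I do not expect a genuine obstacle here, as the result is a controlled truncation estimate and each step is routine once the difference is rewritten as $\operatorname{Cov}[X_i - t_i(X),\, h_\theta(X)]$. The only two points needing care are: (i) justifying $\E[X_i - t_i(X)]=0$ purely from the symmetry of the marginal, so that the high-dimensional dependence of $h_\theta$ never contaminates the mean term; and (ii) checking the sign of the discarded tail term, i.e.\ that omitting $-R\int_R^\infty\phi_\sigma$ only \emph{enlarges} the bound, which is precisely what yields the clean closed-form constant.
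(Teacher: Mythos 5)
Your proposal is correct and follows essentially the same route as the paper's proof: rewrite the difference as $\operatorname{Cov}[X_i - t_i(X),\, h_\theta(X)]$, reduce it to the product expectation, bound $h_\theta$ by $M_{\mathcal{X}}$, exploit the symmetry of the two Gaussian tails, discard the $-R\int_R^\infty \phi_\sigma$ term, and evaluate $\int_R^\infty x\,\phi_\sigma(x)\,dx = \sigma^2\phi_\sigma(R)$. If anything, you are slightly more careful than the paper, which passes from the covariance to the product expectation without explicitly noting that $\E[X_i - t_i(X)] = 0$ (by oddness of the defect and symmetry of the marginal), a point your step (i) makes explicit.
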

\begin{proof}
\begin{subequations}
\begin{align}
    \label{cov_diff_1}
    \text{LHS} &= \int_{-\infty}^{\infty} (x_i-t_i(x)) h_\theta (x) \frac{1}{\sqrt{2 \pi \sigma^2}} \exp(-\frac{x_i^2}{2\sigma^2}) dx_i \\ 
    \label{cov_diff_2}
    &=\frac{1}{\sqrt{2 \pi \sigma^2}} \left( \int_{-\infty}^{-R} (x_i+R) h_\theta (x) \exp(-\frac{x_i^2}{2\sigma^2}) dx_i + \int_{R}^{\infty} (x_i-R)
    h_\theta (x) \exp(-\frac{x_i^2}{2\sigma^2}) dx_i \right) \\
    \label{cov_diff_3}
    &\leq \frac{2 M_\mathcal{X}}{\sqrt{2 \pi \sigma^2}} \int_{R}^{\infty} (x_i-R) \exp(-\frac{x_i^2}{2\sigma^2}) dx_i \\
    \label{cov_diff_4}
    &= \frac{2 M_\mathcal{X}}{\sqrt{2 \pi \sigma^2}} \int_{R}^{\infty} x_i \exp(-\frac{x_i^2}{2\sigma^2}) dx_i - \frac{2 M_\mathcal{X} R}{\sqrt{2 \pi \sigma^2}} \int_{R}^{\infty} \exp(-\frac{x_i^2}{2\sigma^2}) dx_i \\
    \label{cov_diff_5}
    &\leq \frac{2 M_\mathcal{X}}{\sqrt{2 \pi \sigma^2}} \int_{R}^{\infty} x_i \exp(-\frac{x_i^2}{2\sigma^2}) dx_i \\
    \label{cov_diff_6}
    &=  \frac{2 M_\mathcal{X} \sigma}{\sqrt{2 \pi}} \exp(-\frac{R^2}{2\sigma^2}),
\end{align}
where \eqref{cov_diff_1}, \eqref{cov_diff_2}, \eqref{cov_diff_4}, \eqref{cov_diff_6} are direct results from definition or simple calculation, \eqref{cov_diff_3} comes from the fact that $M_\mathcal{X} = \max \| h_\theta(X) \|_\infty $ and the symmetry of two integrals, and \eqref{cov_diff_5} is due to the non-negativity of the probability density function.
\end{subequations} 
\end{proof}
Combining Lemma~\ref{lemma: cov_diff} with~\eqref{eq:main_for_thm2}, we have the following result:
\begin{equation} \label{eq:thm2_intermediate}
    \frac{R M_{\mathcal{Z}}}{ M_{\mathcal{F}}M_{\mathcal{G}}}\operatorname{HSIC}(X, Z) + \frac{2 M_\mathcal{X} \sigma}{\sqrt{2 \pi}} \exp(-\frac{R^2}{2\sigma^2}) \geq \operatorname{Cov}[ X_i, h_{\theta}(X)], \quad \text{for all}~i = 1, \ldots, \dx.
\end{equation}
We can further bridge HSIC to adversarial robustness directly by taking advantage of the following lemma:

\begin{lemma}[Stein's Identity \citep{liu1994siegel}]
\label{thm:stein}
Let $X = (X_1, X_2,  \ldots X_{\dx})$ be multivariate normally distributed with arbitrary mean vector $\mu$ and covariance matrix $\Sigma$. For any function $h(x_1, \ldots, x_{\dx})$ such that $\frac{\partial h}{\partial x_i}$ exists almost everywhere and $\E |\frac{\partial}{\partial x_i}| < \infty$, $i=1, \ldots, \dx$, we write 
$\nabla h(X) = (\frac{\partial h(X)}{\partial x_1}, \ldots, \frac{\partial h(X)}{\partial x_{\dx}})^\top$. Then the following identity is true:
\begin{equation}
    \operatorname{Cov}[X, h(X)]=\Sigma E[\nabla h(X)].
\end{equation}
Specifically,
\begin{equation}
    \operatorname{Cov}\left[X_{1}, h\left(X_{1}, \ldots, X_{\dx}\right)\right]=\sum_{i=1}^{\dx} \operatorname{Cov}\left(X_{1}, X_{i}\right) E\left[\frac{\partial}{\partial x_{i}} h\left(X_{1}, \ldots, X_{\dx}\right)\right]
\end{equation}
\end{lemma}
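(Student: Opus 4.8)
The plan is to prove Stein's identity by reducing the multivariate statement to the scalar case, where it follows from a single integration by parts against the Gaussian density. The analytic engine is the first-order differential identity satisfied by the density $\phi_\sigma$ of a scalar $\mathcal{N}(\mu,\sigma^2)$ variable, namely $\phi_\sigma'(x) = -\sigma^{-2}(x-\mu)\phi_\sigma(x)$, equivalently $(x-\mu)\phi_\sigma(x) = -\sigma^2\phi_\sigma'(x)$. First I would establish the one-dimensional case. Writing $\operatorname{Cov}[X,h(X)] = \E[(X-\mu)h(X)]$, substituting this density identity and integrating by parts gives
\begin{equation}
\operatorname{Cov}[X,h(X)] = -\sigma^2\int h(x)\phi_\sigma'(x)\,dx = \sigma^2\int h'(x)\phi_\sigma(x)\,dx = \sigma^2\,\E[h'(X)],
\end{equation}
provided the boundary term $[h(x)\phi_\sigma(x)]_{-\infty}^{\infty}$ vanishes.

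To lift this to the multivariate setting I would proceed in two steps. For the isotropic standard normal $Z\sim\mathcal{N}(0,\mathbf{I})$, the coordinates $Z_1,\ldots,Z_{\dx}$ are independent; conditioning on all coordinates but the $i$-th and applying the scalar identity (with $\sigma=1$) together with Fubini yields $\operatorname{Cov}[Z_i,\tilde h(Z)] = \E[\partial_i\tilde h(Z)]$ for every admissible $\tilde h$, i.e. $\operatorname{Cov}[Z,\tilde h(Z)] = \E[\nabla\tilde h(Z)]$. For a general mean and covariance I would write $X = \mu + AZ$ with $AA^\top = \Sigma$ and $Z\sim\mathcal{N}(0,\mathbf{I})$, set $\tilde h(Z) = h(\mu + AZ)$, and combine $\operatorname{Cov}[X,h(X)] = A\,\operatorname{Cov}[Z,\tilde h(Z)]$ (which follows since $X-\mu = AZ$) with the chain rule $\nabla_z\tilde h(Z) = A^\top\nabla_x h(X)$. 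This gives
\begin{equation}
\operatorname{Cov}[X,h(X)] = A\,\E[\nabla\tilde h(Z)] = AA^\top\,\E[\nabla h(X)] = \Sigma\,\E[\nabla h(X)],
\end{equation}
which is the claimed vector identity. The \emph{specifically} scalar-covariance formula is then just its first coordinate: reading off component one gives $\operatorname{Cov}[X_1,h(X)] = \sum_{i=1}^{\dx}\Sigma_{1i}\,\E[\partial_i h(X)] = \sum_{i=1}^{\dx}\operatorname{Cov}(X_1,X_i)\,\E[\partial_i h(X)]$.

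The main obstacle is the rigorous justification of the integration by parts under the weak regularity hypotheses actually stated — $\partial_i h$ existing almost everywhere with $\E|\partial_i h|<\infty$, rather than $h\in C^1$. The careful argument invokes absolute continuity of $h$ along each coordinate line, so that the fundamental theorem of calculus applies on finite intervals, and then controls the endpoint contributions using the super-polynomial decay of the Gaussian density as the interval is sent to $(-\infty,\infty)$; Fubini is again needed to interchange the one-dimensional integration by parts with integration over the remaining coordinates. This delicate truncation-and-limit step is precisely what is carried out in \citep{liu1994siegel}, from which the lemma is quoted, so I would either reproduce that argument or invoke it directly. I would also note that the affine reduction $X=\mu+AZ$ sidesteps any need for $\Sigma$ to be invertible, since $A$ need only satisfy $AA^\top=\Sigma$, and that the integrability condition on $\nabla\tilde h$ transfers from that on $\nabla h$ through the bounded linear map $A$.
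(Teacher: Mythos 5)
Your proof is correct, but note that the paper does not actually prove this lemma: it is quoted as a known result from Liu \citep{liu1994siegel} and used as a black box in the proof of Theorem~\ref{thm:new_theorem}, so there is no internal argument to compare against. Your derivation is the standard one and it is sound: the scalar case via the density identity $(x-\mu)\phi_\sigma(x)=-\sigma^2\phi_\sigma'(x)$ and integration by parts, tensorization to $\mathcal{N}(0,\mathbf{I})$ by conditioning on the remaining coordinates and Fubini, and the affine reduction $X=\mu+AZ$ with $AA^\top=\Sigma$ combined with the chain rule $\nabla_z \tilde h(Z)=A^\top\nabla_x h(X)$; the component-wise formula is indeed just the first row of the vector identity, since $\Sigma_{1i}=\operatorname{Cov}(X_1,X_i)$. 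Two delicate points you correctly flag, which any self-contained write-up must address: (i) the hypotheses as literally stated (a.e.\ existence of $\partial h/\partial x_i$ plus integrability) are too weak on their own --- the Cantor function has derivative zero almost everywhere yet is nonconstant --- so absolute continuity of $h$ along almost every coordinate line is the right working hypothesis; and (ii) the boundary term $[h\phi_\sigma]_{-\infty}^{\infty}$ must be shown to vanish, which requires the truncation-and-limit argument you mention rather than being automatic. Deferring these to \citep{liu1994siegel} is legitimate given that this is exactly the source the lemma is attributed to. Your observation that the factorization $AA^\top=\Sigma$ makes the identity valid for singular $\Sigma$ is a genuine bonus over proofs that manipulate the density of $X$ directly, which require $\Sigma$ to be invertible. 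One cosmetic remark: the condition ``$\E|\frac{\partial}{\partial x_i}|<\infty$'' in the statement is a typo for $\E|\frac{\partial h}{\partial x_i}|<\infty$, which is the condition your argument (correctly) uses.
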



Given that $X \sim \mathcal{N}(0, \sigma^2 \mathbf{I})$,  Lemma~\ref{thm:stein} implies:
\begin{equation} \label{eq:stein}
    \operatorname{Cov}\left[X_{i}, h_\theta\left(X\right)\right]= \sigma^2 \E\left[\frac{\partial}{\partial x_{i}} h_\theta\left(X\right)\right].
\end{equation}


Combining~\eqref{eq:thm2_intermediate} and~\eqref{eq:stein}, we have:
\begin{align}
    \frac{R M_{\mathcal{Z}}}{ M_{\mathcal{F}}M_{\mathcal{G}}}\operatorname{HSIC}(X, Z) + \frac{2 M_\mathcal{X} \sigma}{\sqrt{2 \pi}} \exp(-\frac{R^2}{2\sigma^2}) &\geq \sigma^2 \E\left[\frac{\partial}{\partial x_{k}} h_\theta\left(X\right)\right].
\end{align}

Note that a  similar derivation could be repeated exactly by replacing $h_\theta (X)$ with $-h_\theta (X)$. Thus, for every $i = 1, 2, \ldots, \dx$, we have:
\begin{align} \label{eq:abs_partial_bound}
     \frac{R M_{\mathcal{Z}}}{ M_{\mathcal{F}}M_{\mathcal{G}}}\operatorname{HSIC}(X, Z) + \frac{2 M_\mathcal{X} \sigma}{\sqrt{2 \pi}} \exp(-\frac{R^2}{2\sigma^2}) &\geq  \sigma^2 \E\left[ \left| \frac{\partial}{\partial x_{i}} h_{\theta}\left(X\right) \right |\right]. 
\end{align}

Summing up both sides in \eqref{eq:abs_partial_bound} for $i = 1, 2, \ldots, \dx$, we have:
\begin{align} \label{eq:total_abs_partial_bound}
    \frac{\dx R M_{\mathcal{Z}}}{ M_{\mathcal{F}}M_{\mathcal{G}}}\operatorname{HSIC}(X, Z) + \frac{2 \dx M_\mathcal{X} \sigma}{\sqrt{2 \pi}} \exp(-\frac{R^2}{2\sigma^2}) &\geq  \sigma^2 \E\left[ \sum_{i=1}^{\dx} \left| \frac{\partial}{\partial x_{i}} h_{\theta}\left(X\right) \right |\right] .
\end{align}

On the other hand, for   $\delta \in \mathcal{S}_r$, by Taylor's theorem:
\begin{subequations} \label{eq:adv_difference_bound}
\begin{align}
\label{eq:taylor}
    \E [|h_\theta(X+\delta) - h_\theta(X)|] &\leq \E [|\delta^\top \nabla_{X} h_\theta (X)|] + o(r) \\ 
\label{eq:holder}
    &\leq \E \left[ \| \delta \|_{\infty} \|\nabla_{X} h_\theta (X)\|_1 \right] + o(r)  \\ 
\label{eq:triangle}
    &\leq r \E \left[ \sum_{i=1}^{\dx} \left| \frac{\partial}{\partial x_{i}} h_{\theta}\left(X\right) \right | \right] + o(r), 
\end{align}
\end{subequations}
where~\eqref{eq:holder} is implied by H\"older's inequality, and~\eqref{eq:triangle} is implied by the triangle inequality.

Combining \eqref{eq:total_abs_partial_bound} and \eqref{eq:adv_difference_bound}, we have:
\begin{equation}
     \frac{ r \dx R M_{\mathcal{Z}}}{ \sigma^2 M_{\mathcal{F}}M_{\mathcal{G}}}\operatorname{HSIC}(X, Z) + \frac{2 r \dx M_\mathcal{X}}{\sqrt{2 \pi} \sigma} \exp(-\frac{R^2}{2\sigma^2}) + o(r) \geq \E [|h_\theta(X+\delta) - h_\theta(X)|].
\end{equation}

Let $R = \sigma \sqrt{-2\log o(1)}$ where, here, $o(1)$ stands for an arbitrary function $w:\reals\to\reals$ s.t. \begin{align}\lim_{r\to 0} w(r) = 0.\end{align}
Then, we have $\frac{2 r \dx M_\mathcal{X}}{\sqrt{2 \pi} \sigma} \exp(-\frac{R^2}{2\sigma^2}) = o(r)$, because:
\begin{align}
    \begin{split}
        \lim_{r \to 0} \frac{2 r \dx M_\mathcal{X}}{\sqrt{2 \pi} \sigma} \exp(-\frac{R^2}{2\sigma^2}) / r &= \lim_{r \to 0} \frac{2 \dx M_\mathcal{X}}{\sqrt{2 \pi} \sigma} \exp( \log o(1)) \\
        &= \lim_{r \to 0} \frac{2 \dx M_\mathcal{X}}{\sqrt{2 \pi} \sigma} o(1) \\ &= 0
    \end{split}
\end{align}
Thus, we conclude that:
\begin{align}
     \frac{ r \sqrt{-2 \log o(1)} \dx M_{\mathcal{Z}}}{ \sigma M_{\mathcal{F}}M_{\mathcal{G}}}\operatorname{HSIC}(X, Z) + o(r) \geq \E [|h_\theta(X+\delta) - h_\theta(X)|].
\end{align}
\end{proof}



\section{Licensing of Existing Assets} \label{sec:licensing}
We provide the licensing information of each existing asset below:

\textbf{Datasets.}
\begin{packeditemize}
    \item \emph{MNIST} \emph{mnist} is licensed under the Creative Commons Attribution-Share Alike 3.0 license.
    \item \emph{CIFAR-10}~and~\emph{CIFAR-100}~\citep{cifar} are licensed under the MIT license.
\end{packeditemize}
\textbf{Models.}
\begin{packeditemize}
    \item The implementations of \emph{LeNet} \citep{madry2017towards} and \emph{ResNet-18} \citep{he2016deep} in our paper are licensed under BSD 3-Clause License. 
    \item The implementation of \emph{WideResNet-28-10} \citep{wideresnet} is licensed under the MIT license.
\end{packeditemize}
\textbf{Algorithms.}
\begin{packeditemize}
    \item The implementations of \emph{SWHB} \citep{ma2020hsic}, \emph{PGD} \citep{madry2017towards}, \emph{TRADES} \citep{zhang2019theoretically} are licensed under the MIT license.  
    \item The implementation of \emph{VIB} \citep{alemi2016deep} is licensed under the Apache License 2.0. 
    \item There are no licenses for \emph{MART}~\citep{wang2019improving} and \emph{XIC} \citep{greenfeld2020robust}.
\end{packeditemize}

\textbf{Adversarial Attacks.} The implementations of \emph{FGSM}~\citep{goodfellow2014explaining}, \emph{PGD}~\citep{madry2017towards}, \emph{CW}~\citep{cw} and \emph{AutoAttack}~\citep{autoattack} are all licensed under the MIT license.

\section{Algorithm Details and Hyperparameter Tuning} \label{sec:supp-algorithms}
Non-adversarial learning, information bottleneck based methds:
\begin{packeditemize}
    \item \emph{Cross-Entropy (CE)}, which includes only loss $\loss$.
    \item \emph{Stage-Wise HSIC Bottleneck (SWHB)} \citep{ma2020hsic}: This is the original HSIC bottleneck. It does not include full backpropagation over the HSIC objective: early layers are fixed stage-wise, and gradients are computed only for the current layer.
    \item \emph{XIC} \citep{greenfeld2020robust}: To enhance generalization over distributional shifts, this penalty includes inputs and residuals (i.e., $\HSIC(X, Y-h(X))$).
    \item \emph{Variational Information Bottleneck (VIB)} \citep{alemi2016deep}: this is a variational autoencoder that includes a mutual information bottleneck penalty. 
\end{packeditemize}

Adversarial learning methods:
\begin{packeditemize}
    \item \emph{Projected Gradient Descent (PGD)} \citep{madry2017towards}: This optimizes $\aloss$, given by \eqref{eq:adv_robustness} via projected gradient ascent over $\mathcal{S}_r$  .
    \item \emph{TRADES} \citep{zhang2019theoretically}: This uses a regularization term that minimizes the difference between the predictions of natural and adversarial examples to get a smooth decision boundary.
    \item \emph{MART} \citep{wang2019improving}: Compared to TRADES, MART pays more attention to adversarial examples from misclassified natural examples and add a KL-divergence term between natural and adversarial examples to the binary cross-entropy loss.
\end{packeditemize}
We use code provided by authors, including the recommended hyperparameter settings and tuning strategies.
In both SWHB and \hb, we apply Gaussian kernels for $X$ and $Z$ and a linear kernel for $Y$. For Gaussian kernels, we set $\sigma=5\sqrt{d}$, where $d$ is the dimension of the corresponding random variable. 

We report all tuning parameters in Table~\ref{tab:adv-competing-params}.
In particular, we report the parameter settings on the 4-layer LeNet \citep{madry2017towards} for MNIST, ResNet-18 \citep{he2016deep} and WideResNet-28-10 \citep{wideresnet} for CIFAR-10, and WideResNet-28-10 \citep{wideresnet} for CIFAR-100 with the basic \hb and when combining \hb with state-of-the-art (i.e., PGD, TRADES, MART) adversarial learning.

For \hb, to make a fair comparison with SWHB \citep{ma2020hsic}, we build our code, along with the implementation of PGD and PGD+\hb, upon their framework. When combining \hb with other state-of-the-art adversarial learning (i.e., TRADES and MART), we add our \hb implemention to the MART framework and use recommended hyperparameter settings/tuning strategies from MART and TRADES. To make a fair comparison, we use the same network architectures among all methods with the same random weight initialization and report last epoch results.

\begin{table}[hbt!]
    \centering
    \setlength{\extrarowheight}{.2em}
    \setlength{\tabcolsep}{5pt}
    \small
    \caption{Parameter Summary for MNIST, CIFAR-10, and CIFAR-100. $\lambda_x$ and $\lambda_y$ are balancing hyperparameters for \hb; $\lambda$ is balancing hyperparameter for TRADES and MART.}
    \vspace{2pt}
    \label{tab:adv-competing-params}
    \resizebox{1\textwidth}{!}{
    \begin{tabular}{||c || c || c | c c || c c | c c ||}
        \hline
        Dataset & param. & HBaR & PGD & PGD+HBaR & TRADES & TRADES+HBaR & MART & MART+HBaR \\
        \hline
        \hline
        \multirow{8}{*}{MNIST}
        & $\lambda_x$  & 1  & - & 0.003 & - & 0.001 & - & 0.001 \\
        & $\lambda_y$  & 50 & - & 0.001 & - & 0.005  & - & 0.005 \\
        \cline{3-9}
        & $\lambda$& \multicolumn{3}{c||}{-} & 5 & 5 & 5 & 5 \\
        \cline{3-9}
        & batch size & \multicolumn{3}{c||}{256} & \multicolumn{4}{c||}{256} \\
        & optimizer & \multicolumn{3}{c||}{adam} & \multicolumn{4}{c||}{sgd} \\
        & learning rate & \multicolumn{3}{c||}{0.0001} & \multicolumn{4}{c||}{0.01} \\
        & lr scheduler  & \multicolumn{3}{c||}{divided by 2 at the 65-th and 90-th epoch} & \multicolumn{4}{c||}{divided by 10 at the 20-th and 40-th epoch} \\
        & \# epochs & \multicolumn{3}{c||}{100} & \multicolumn{4}{c||}{50} \\
        \cline{3-9}
        
        \hline
        \hline
        \multirow{8}{*}{CIFAR-10/100}
        & $\lambda_x$  & 0.006 & - & 0.0005 & - & 0.0001 & - & 0.0001 \\
        & $\lambda_y$  & 0.05 & - & 0.005  & - & 0.0005  & - & 0.0005 \\
        \cline{3-9}
        & $\lambda$& \multicolumn{3}{c||}{-} & 5 & 5 & 5 & 5 \\
        \cline{3-9}
        & batch size & \multicolumn{3}{c||}{128} & \multicolumn{4}{c||}{128} \\
        & optimizer & \multicolumn{3}{c||}{adam} & \multicolumn{4}{c||}{sgd} \\
        & learning rate & \multicolumn{3}{c||}{0.01} & \multicolumn{4}{c||}{0.01} \\
        & lr scheduler  & \multicolumn{3}{c||}{cosine annealing} & \multicolumn{4}{c||}{divided by 10 at the 75-th and 90-th epoch} \\
        \cline{3-9}
        & \# epochs & 300 & \multicolumn{2}{c||}{95} & \multicolumn{4}{c||}{95} \\
        
        \hline
    \end{tabular}}
\end{table}

\newpage
\section{Sensitivity of Regularization Hyperparameters $\lambda_x$ and $\lambda_y$}\label{sec:supp-ablation}
\fcom{We provide a comprehensive ablation study on the sensitivity of $\lambda_x$ and $\lambda_y$ on MNIST and CIFAR-10 dataset with (Table \ref{tab:mnist-weight-adv} and \ref{tab:cifar-weight-adv}) and without (Table \ref{tab:mnist-weight} and 
\ref{tab:cifar-weight}) adversarial training. As a conclusion, (a) we set the weight of cross-entropy loss as $1$, and empirically set $\lambda_x$ and $\lambda_y$ according to the performance on a validation set. (b) For MNIST with adversarial training, we empirically discover that $\lambda_x : \lambda_y$ ranging around $5:1$ provides better performance; for MNIST without adversarial training, $\lambda_x=1$ and $\lambda_y=50$, inspired by SWHB (Ma et al., 2020), provide the best performance. (c) for CIFAR-10 (and CIFAR-100), with and without adversarial training, $\lambda_x : \lambda_y$ ranging from $1:5$ to $1:10$ provides better performance.}

\begin{table}[hbt!]
    \centering
    \setlength{\extrarowheight}{.2em}
    \setlength{\tabcolsep}{7pt}
    \small
    \caption{\textbf{MNIST by LeNet with adversarial training}: Ablation study on HBaR regularization hyperparameters $\lambda_x$ and $\lambda_y$ trained by \hb+TRADES over the metric of natural test accuracy (\%) and adversarial test robustness (PGD$^{40}$ and AA, \%).}
    \label{tab:mnist-weight-adv}
    \resizebox{0.5\textwidth}{!}{
    \begin{tabular}{||c c|| c c c ||}
        \hline
        $\lambda_x$ & $\lambda_y$ & Natural & PGD$^{40}$ & AA \\
        
        \hline
        \hline
        0.003 & 0.001 & 98.66 & 94.35 & 91.57 \\
        \hline
        0.003 & 0     & 98.92 & 93.05 & 90.95 \\
        0  & 0.001    & 98.86 & 91.77 & 88.21 \\
        
        \hline
        0.0025  & 0.0005  & 98.96 & 94.52 & 91.42 \\
        0.002  & 0.0005  & 98.92 & 94.13 & 91.33 \\
        0.0015  & 0.0005 & 98.93 & 94.06 & 91.43 \\
        0.001 & 0.0005 & 98.95 & 93.76 & 91.14 \\
        
        \hline
        0.001   & 0.0002 & 98.92 & 94.61 & 91.37 \\
        0.0008  & 0.0002 & 98.94 & 94.15 & 91.07 \\
        0.0006  & 0.0002 & 98.91 & 94.13 & 90.72 \\
        0.0004  & 0.0002 & 98.90 & 93.96 & 90.56 \\
        \hline
        
    \end{tabular}}
\end{table}

\begin{table}[hbt!]
    \centering
    \setlength{\extrarowheight}{.2em}
    \setlength{\tabcolsep}{7pt}
    \small
    \caption{\textbf{CIFAR-10 by WideResNet-28-10 with adversarial training}: Ablation study on HBaR regularization hyperparameters $\lambda_x$ and $\lambda_y$ trained by \hb+TRADES over the metric of natural test accuracy (\%), and adversarial test robustness (PGD$^{20}$ and AA, \%).}
    \label{tab:cifar-weight-adv}
    \resizebox{0.5\textwidth}{!}{
    \begin{tabular}{||c c|| c c c ||}
        \hline
        $\lambda_x$ & $\lambda_y$ & Natural & PGD$^{20}$ & AA \\
        
        \hline
        \hline
        0.0001 & 0.0005 & 85.61 & 56.51 & 53.53 \\
        \hline
        0.0001  & 0  & 80.19 & 49.49 & 45.33 \\
        0  & 0.0005  & 84.74 & 55.00 & 51.50 \\
        \hline
        0.001 & 0.005   & 85.70 & 55.74 & 52.78 \\
        0.0005  & 0.005  & 84.42 & 55.95 & 52.66 \\
        0.00005 & 0.0005  & 85.37 & 56.43 & 53.40 \\
        \hline
        
    \end{tabular}}
\end{table}

\begin{table}[hbt!]
    \centering
    \setlength{\extrarowheight}{.2em}
    \setlength{\tabcolsep}{7pt}
    \small
    \caption{\textbf{MNIST by LeNet without adversarial training}: Ablation study on HBaR regularization hyperparameters $\lambda_x$ and $\lambda_y$ over the metric of $\operatorname{HSIC}(X,Z_M)$, $\operatorname{HSIC}(Y,Z_M)$, natural test accuracy (\%), and adversarial test robustness (PGD$^{40}$, \%).}
    \label{tab:mnist-weight}
    \resizebox{0.52\textwidth}{!}{
    \begin{tabular}{||c c|| c c | c c ||}
        \hline
        \multirow{2}{*}{$\lambda_x$} & \multirow{2}{*}{$\lambda_y$} &
        \multicolumn{2}{c|}{HSIC} &
        \multirow{2}{*}{Natural} & \multirow{2}{*}{PGD$^{40}$}\\
        &&$(X,Z_M)$&$(Y,Z_M)$& &  \\
        \hline
        \hline
        \multicolumn{2}{||c||}{CE only}   & 45.29 & 8.73 & 99.23 & 0.00\\
        \hline
        0.0001 & 0     & 21.71 & 8.01 & 99.28 & 0.00 \\
        0.001  & 0     & 5.82  & 6.57 & 99.36 & 0.00 \\
        0.01   & 0     & 3.22  & 4.28 & 99.13 & 0.00 \\
        \hline
        0      & 1     & 56.45 & 9.00 & 98.92 & 0.00 \\
        \hline
        0.001  & 0.05  & 53.70 & 8.99 & 99.13 & 0.03 \\
        0.001  & 0.01  & 10.44 & 8.51 & 99.37 & 0.00 \\
        0.001  & 0.005 & 8.86  & 8.24 & 99.38 & 0.00 \\
        \hline
        0.01   & 0.5   & 16.13 & 8.90 & 99.14 & 5.00 \\
        0.1    & 5     & 15.81 & 8.90 & 98.96 & 7.72 \\
        1      & 50    & 15.68 & 8.89 & 98.90 & 8.33 \\
        1.1    & 55    & 15.90 & 8.88 & 98.88 & 6.99 \\
        1.2    & 60    & 15.76 & 8.89 & 98.95 & 7.24 \\
        1.5    & 75    & 15.62 & 8.89 & 98.94 & 8.23 \\
        2      & 100   & 15.41 & 8.89 & 98.91 & 7.00 \\
        \hline
        
    \end{tabular}}
\end{table}

\begin{table}[hbt!]
    \centering
    \setlength{\extrarowheight}{.2em}
    \setlength{\tabcolsep}{7pt}
    \small
    \caption{\textbf{CIFAR-10 by ResNet-18 without adversarial training}: Ablation study on HBaR regularization hyperparameters $\lambda_x$ and $\lambda_y$ over the metric of $\operatorname{HSIC}(X,Z_M)$, $\operatorname{HSIC}(Y,Z_M)$, natural test accuracy (\%), and adversarial test robustness (PGD$^{20}$, \%).}
    \label{tab:cifar-weight}
    \resizebox{0.52\textwidth}{!}{
    \begin{tabular}{||c c || c c | c c ||}
        \hline
        \multirow{2}{*}{$\lambda_x$} & \multirow{2}{*}{$\lambda_y$} &
        \multicolumn{2}{c|}{HSIC} &
        \multirow{2}{*}{Natural} & \multirow{2}{*}{PGD$^{20}$}\\
        &&$(X,Z_L)$&$(Y,Z_L)$& &  \\
        \hline
        \hline
        \multicolumn{2}{||c||}{CE only}   & 3.45  & 4.76 & 95.32 & 8.57 \\
        \hline
        0.001 & 0.05 & 43.48 & 8.93 & 95.36 & 2.91 \\
        0.002 & 0.05 & 43.15 & 8.92 & 95.55 & 2.29 \\
        0.003 & 0.05 & 41.95 & 8.90 & 95.51 & 3.98 \\
        0.004 & 0.05 & 30.12 & 8.77 & 95.45 & 5.23 \\
        0.005 & 0.05 & 11.56 & 8.45 & 95.44 & 23.73\\
        0.006 & 0.05 & 6.07  & 8.30 & 95.35 & 34.85\\
        0.007 & 0.05 & 4.81  & 8.24 & 95.13 & 15.80 \\
        0.008 & 0.05 & 4.44  & 8.21 & 95.13 & 8.43 \\
        0.009 & 0.05 & 3.96  & 8.14 & 94.70 & 10.83\\
        0.01  & 0.05 & 4.09  & 7.87 & 92.33 & 2.90 \\
        \hline
    \end{tabular}}
\end{table}



\begin{figure*}[!t]
\begin{tabular}{c c}
   \centering
   \small
   \includegraphics[width=0.49\columnwidth]{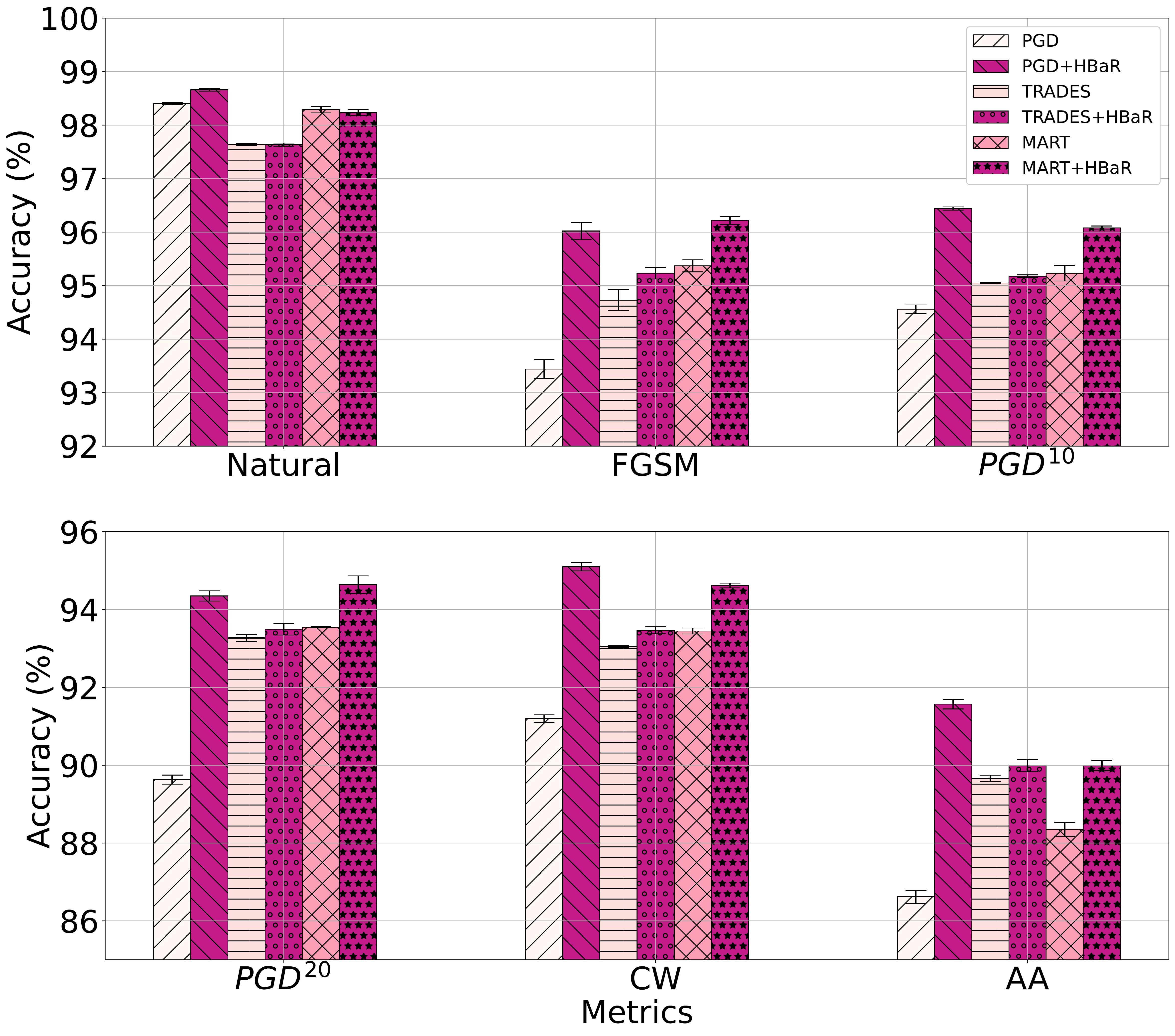} &
   \includegraphics[width=0.49\columnwidth]{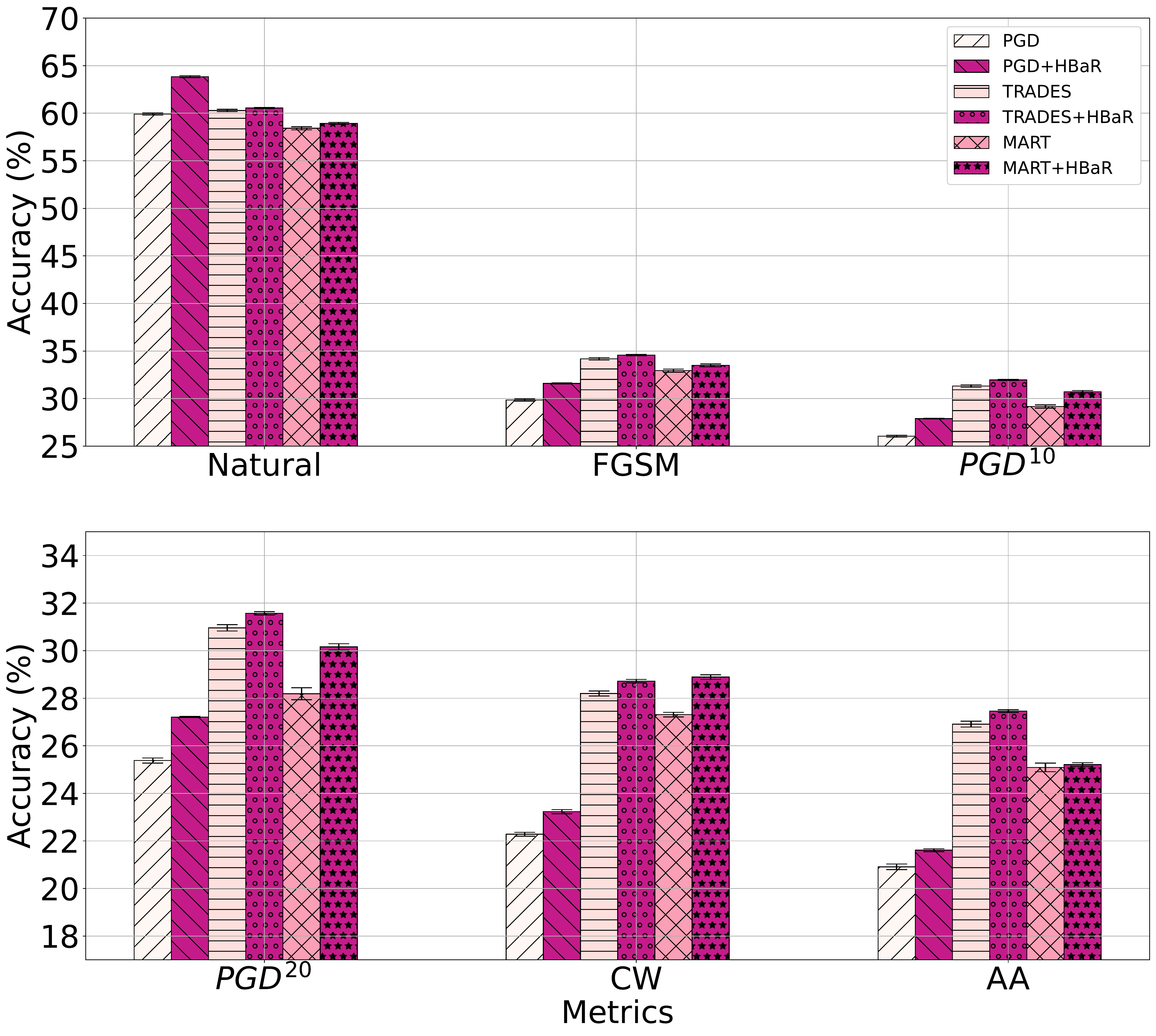} \\
   (a) MNIST by LeNet & (b) CIFAR-100 by WideResNet-28-10 \\
   \includegraphics[width=0.49\columnwidth]{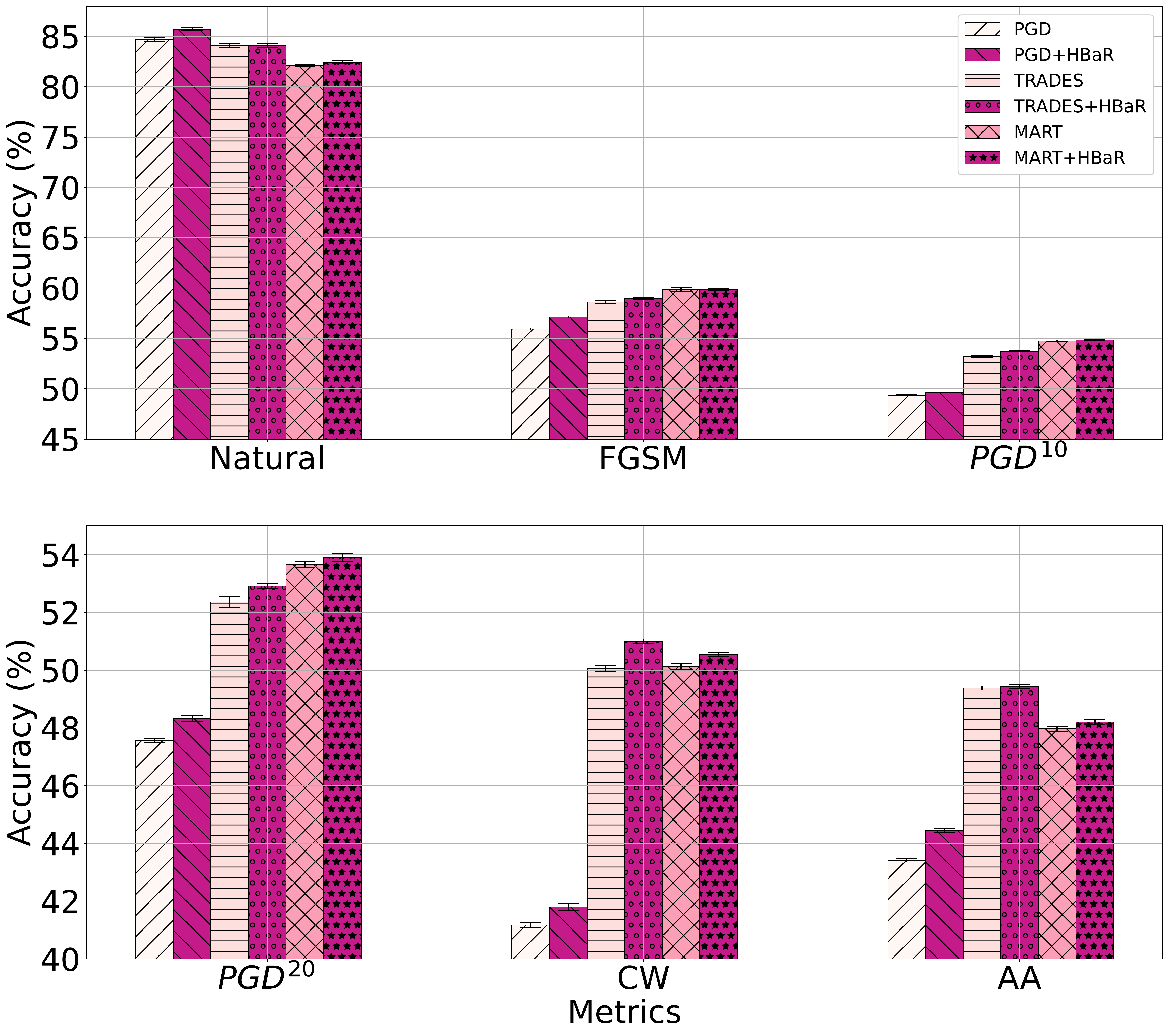} &
   \includegraphics[width=0.49\columnwidth]{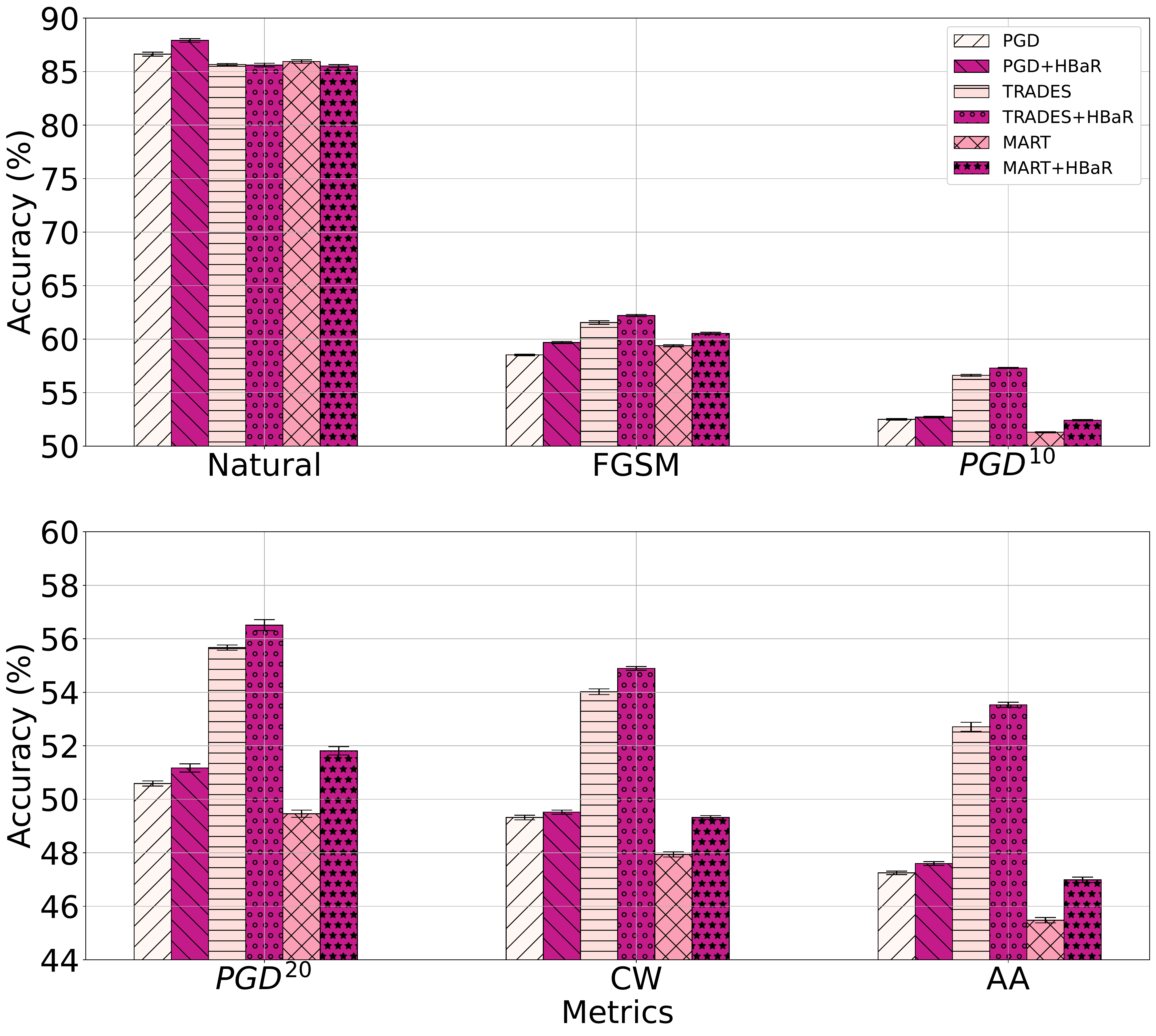} \\
   (c) CIFAR-10 by ResNet-18 & (d) CIFAR-10 by WideResNet-28-10 \\
\end{tabular}
	\caption{Error bar of natural test accuracy (in \%) and adversarial robustness ((in \%) on FGSM, PGD, CW, and AA attacked test examples) on MNIST by LeNet, CIFAR-100 by WideResNet-28-10, CIFAR-10 by ResNet-18 and WideResNet-28-10 of adversarial learning baselines and combining HBaR with each correspondingly.}
	\label{fig:hsic-adv-variance}
\end{figure*}

\begin{table*}[!t]
    \centering
    \setlength{\extrarowheight}{.2em}
    \setlength{\tabcolsep}{3pt}
    \small
    \caption{\textbf{MNIST by LeNet}: Mean and Standard deviation of natural test accuracy (in \%) and adversarial robustness ((in \%) on FGSM, PGD, CW, and AA attacked test examples) of adversarial learning baselines and combining HBaR with each correspondingly.}
    \label{tab:hsic-adv-variance-mnist}
    \resizebox{0.9\textwidth}{!}{
    \begin{tabular}{||c || c c c c c c||}
        \hline
        \multirow{2}{*}{Methods} & \multicolumn{6}{c||}{MNIST by LeNet} \\
        \cline{2-7}
        & Natural & FGSM & PGD$^{20}$ & PGD$^{40}$ & CW & AA \\
        \hline
        \hline
         PGD            & 98.40 $\pm$ 0.018 & 93.44 $\pm$ 0.177 & 94.56 $\pm$ 0.079 & 89.63 $\pm$ 0.117 & 91.20 $\pm$ 0.097 & 86.62 $\pm$ 0.166 \\
        \hb + PGD       & \textbf{98.66} $\pm$ 0.026 & \textbf{96.02} $\pm$ 0.161 & \textbf{96.44}$\pm$0.030 & \textbf{94.35}$\pm$0.130 & \textbf{95.10}$\pm$0.106 & \textbf{91.57}$\pm0.123$ \\
        \hline
        \hline
        TRADES          & \textbf{97.64}$\pm$0.017 & 94.73$\pm$0.196 & 95.05$\pm$0.006 & 93.27$\pm$0.088 & 93.05$\pm$0.025 & 89.66$\pm$0.085\\
        
        \hb + TRADES    & \textbf{97.64}$\pm$0.030 & \textbf{95.23}$\pm$0.106 & \textbf{95.17}$\pm$0.023 & \textbf{93.49}$\pm$0.147 & \textbf{93.47}$\pm$0.089 & \textbf{89.99}$\pm$0.155\\
        \hline
        \hline
        MART            & \textbf{98.29}$\pm$0.059 & 95.57$\pm$0.113 & 95.23$\pm$0.144 & 93.55$\pm$0.018 & 93.45$\pm$0.077 & 88.36$\pm$0.179 \\
        \hb + MART      & 98.23$\pm$0.054 & \textbf{96.09}$\pm$0.074 & \textbf{96.08}$\pm$0.035 & \textbf{94.64}$\pm$0.125 & \textbf{94.62}$\pm$0.06 & \textbf{89.99}$\pm$0.13 \\
        \hline
    \end{tabular}}
\end{table*}

\begin{table*}[!t]
    \centering
    \setlength{\extrarowheight}{.2em}
    \setlength{\tabcolsep}{3pt}
    \small
    \caption{\textbf{CIFAR-10 by ResNet-18}: Mean and Standard deviation of natural test accuracy (in \%) and adversarial robustness ((in \%) on FGSM, PGD, CW, and AA attacked test examples) of adversarial learning baselines and combining HBaR with each correspondingly.}
    \label{tab:hsic-adv-variance-cifar10}
    \resizebox{0.9\textwidth}{!}{
    \begin{tabular}{||c || c c c c c c ||}
        \hline
        \multirow{2}{*}{Methods} & \multicolumn{6}{c||}{CIFAR-10 by ResNet-18} \\
        \cline{2-7}
        & Natural & FGSM & PGD$^{10}$ & PGD$^{20}$ & CW & AA \\
        \hline
        \hline
         PGD            & 84.71$\pm$0.16 & 55.95$\pm$0.097 & 49.37$\pm$0.075 & 47.54$\pm$0.080 & 41.17$\pm$0.086 & 43.42$\pm$0.064 \\
         
        \hb + PGD       & \textbf{85.73}$\pm$0.166 & \textbf{57.13}$\pm$0.099 & \textbf{49.63}$\pm$0.058 & \textbf{48.32}$\pm$0.103 & \textbf{41.80}$\pm$0.116 & \textbf{44.46}$\pm$0.169 \\
        \hline
        \hline
        
        TRADES          & 84.07$\pm$0.201 & 58.63$\pm$0.167 & 53.21$\pm$0.118 & 52.36$\pm$0.189 & 50.07$\pm$0.106 & 49.38$\pm$0.069 \\
        
        \hb + TRADES    & \textbf{84.10}$\pm$0.104 & \textbf{58.97}$\pm$0.093 & \textbf{53.76}$\pm$0.080 & \textbf{52.92}$\pm$0.175 & \textbf{51.00}$\pm$0.085 & \textbf{49.43}$\pm$0.064 \\
        \hline
        \hline
        MART            & 82.15$\pm$0.117 & 59.85$\pm$0.154 & 54.75$\pm$0.089 & 53.67$\pm$0.088 & 50.12$\pm$0.106 & 47.97$\pm$0.156 \\
        \hb + MART      & \textbf{82.44}$\pm$0.156 & \textbf{59.86}$\pm$0.132 & \textbf{54.84}$\pm$0.051 & \textbf{53.89}$\pm$0.135 & \textbf{50.53}$\pm$0.069 & \textbf{48.21}$\pm$0.100 \\
        \hline
    \end{tabular}}
    \vspace{50pt}
\end{table*}

\begin{table*}[!t]
    \centering
    \setlength{\extrarowheight}{.2em}
    \setlength{\tabcolsep}{3pt}
    \small
    \caption{\textbf{CIFAR-10 by WideResNet-28-10}: Mean and Standard deviation of natural test accuracy (in \%) and adversarial robustness ((in \%) on FGSM, PGD, CW, and AA attacked test examples) of adversarial learning baselines and combining HBaR with each correspondingly.}
    \label{tab:hsic-adv-variance-cifar10-wrn}
    \resizebox{0.9\textwidth}{!}{
    \begin{tabular}{||c || c c c c c c ||}
        \hline
        \multirow{2}{*}{Methods} & \multicolumn{6}{c||}{CIFAR-10 by WideResNet-28-10} \\
        \cline{2-7}
        & Natural & FGSM & PGD$^{10}$ & PGD$^{20}$ & CW & AA \\
        \hline
        \hline
         PGD            & 86.63$\pm$0.186 & 58.53$\pm$0.073 & 52.21$\pm$0.084 & 50.59$\pm$0.096 & 49.32$\pm$0.089 & 47.25$\pm$0.124 \\
         
        \hb + PGD       & \textbf{87.91}$\pm$0.102 & \textbf{59.69}$\pm$0.097 & \textbf{52.72}$\pm$0.081 & \textbf{51.17}$\pm$0.152 & \textbf{49.52}$\pm$0.174 & \textbf{47.60}$\pm$0.131 \\
        \hline
        \hline
        
        TRADES          & \textbf{85.66}$\pm$0.103 & 61.55$\pm$0.134 & 56.62$\pm$0.097 & 55.67$\pm$0.098 & 54.02$\pm$0.106 & 52.71$\pm$0.169 \\
        
        \hb + TRADES    & 85.61$\pm$0.0133 & \textbf{62.20}$\pm$0.102 & \textbf{57.30}$\pm$0.059 & \textbf{56.51}$\pm$0.136 & \textbf{54.89}$\pm$0.098 & \textbf{53.53}$\pm$0.127 \\
        \hline
        \hline
        MART            & \textbf{85.94}$\pm$0.156 & 59.39$\pm$0.109 & 51.30$\pm$0.052 & 49.46$\pm$0.136 & 47.94$\pm$0.098 & 45.48$\pm$0.100 \\
        \hb + MART      & 85.52$\pm$0.136 & \textbf{60.54}$\pm$0.071 & \textbf{53.42}$\pm$0.142 & \textbf{51.81}$\pm$0.177 & \textbf{49.32}$\pm$0.131 & \textbf{46.99}$\pm$0.137 \\
        \hline
    \end{tabular}}
\end{table*}

\begin{table*}[!t]
    \centering
    \setlength{\extrarowheight}{.2em}
    \setlength{\tabcolsep}{3pt}
    \small
    \caption{\textbf{CIFAR-100 by WideResNet-28-10}: Mean and Standard deviation of natural test accuracy (in \%) and adversarial robustness ((in \%) on FGSM, PGD, CW, and AA attacked test examples) of adversarial learning baselines and combining HBaR with each correspondingly.}
    \label{tab:hsic-adv-variance-cifar100}
    \resizebox{0.9\textwidth}{!}{
    \begin{tabular}{||c || c c c c c c||}
        \hline
        \multirow{2}{*}{Methods} & \multicolumn{6}{c||}{CIFAR-100 by WideResNet-28-10} \\
        \cline{2-7}
        & Natural & FGSM & PGD$^{20}$ & PGD$^{40}$ & CW & AA \\
        \hline
        \hline
         PGD            & 59.91$\pm$0.116 & 29.85$\pm$0.117 & 26.05$\pm$0.106 & 25.38$\pm$0.129 & 22.28$\pm$0.079 & 20.91$\pm$0.133 \\
        \hb + PGD       & \textbf{63.84}$\pm$0.105 & \textbf{31.59}$\pm$0.054 & \textbf{27.90}$\pm$0.030 & \textbf{27.21}$\pm$0.025 & \textbf{23.23}$\pm$0.088 & \textbf{21.61}$\pm$0.061 \\
        \hline
        \hline
        TRADES          & 60.29$\pm$0.122 & 34.19$\pm$0.132 & 31.32$\pm$0.134 & 30.96$\pm$0.135 & 28.20$\pm$0.097 & 26.91$\pm$0.172 \\
        
        \hb + TRADES    & \textbf{60.55}$\pm$0.065 & \textbf{34.57}$\pm$0.068 & \textbf{31.96}$\pm$0.067 & \textbf{31.57}$\pm$0.079 & \textbf{28.72}$\pm$0.071 & \textbf{27.46}$\pm$0.098\\
        \hline
        \hline
        MART            & 58.42$\pm$0.164 & 32.94$\pm$0.160 & 29.17$\pm$0.166 & 28.19$\pm$0.252 & 27.31$\pm$0.096 & 25.09$\pm$0.179 \\
        \hb + MART      & \textbf{58.93}$\pm$0.102 & \textbf{33.49}$\pm$0.144 & \textbf{30.72}$\pm$0.130 & \textbf{30.16}$\pm$0.133 & \textbf{28.89}$\pm$0.118 & \textbf{25.21}$\pm$0.111 \\
        \hline
    \end{tabular}}
\end{table*}

\newpage
\section{Error Bar for Combining \hb with Adversarial Examples}\label{sec:supp-errorbar}
We show how \hb can be used to improve robustness when used as a regularizer, as described in \Cref{sec:combine-hb}, along with state-of-the-art adversarial learning methods. We run each experiment by five times. \fcom{Figure \ref{fig:hsic-adv-variance} illustrates mean and standard deviation of the natural test accuracy and adversarial robustness against various attacks on CIFAR-10 by ResNet-18 and WideResNet-28-10. Table \ref{tab:hsic-adv-variance-mnist}, \ref{tab:hsic-adv-variance-cifar10}, \ref{tab:hsic-adv-variance-cifar10-wrn}, and \ref{tab:hsic-adv-variance-cifar100} show the detailed standard deviation.} Combined with the adversarial training baselines, \hb consistently improves adversarial robustness against all types of attacks with small variance.

\newpage
\section{Limitations} \label{sec:supp-limit}
\com{
One limitation of our method is that the robustness gain, though beating other IB-based methods, is modest when training with only natural examples. However, the potential of getting adversarial robustness \emph{without} adversarial training is interesting and worth further exploration in the future. Another limitation of our method, as well as many proposed adversarial defense methods, is the uncertain performance to new attack methods. Although we have established concrete theories and conducted comprehensive experiments, there is no guarantee that our method is able to handle novel, well-designed attacks. Finally, in our theoretical analysis in Section~\ref{sec:HBAR_theorem}, we have made several assumptions for Theorem~\ref{thm:new_theorem}. While Assumptions~\ref{asm:cont} and~\ref{asm:universal} hold in practice, the distribution of input feature is not guaranteed to be standard Gaussian. Although the empirical evaluation supports the correctness of the theorem, we admit that the claim is not general enough. We aim to proof a more general version of Theorem~\ref{thm:new_theorem} in the future, hopefully agnostic to input distributions.
We will keep track of the advances in the adversarial robustness field and further improve our work correspondingly.}

\section{Potential Societal Negative Impact} \label{sec:supp-impact}
\com{
Although \hb has great potential as a general strategy to enhance the robustness for various machine learning systems, we still need to be aware of the potential negative societal impacts it might result in. For example, over-confidence in the \emph{adversarially-robust} models produced by \hb as well as other defense methods may lead to overlooking their potential failure on newly-invented attack methods; this should be taken into account in safety-critical applications like healthcare~\citep{adv_health} or security~\citep{adv_surveillance}. Another example is that, one might get insights from the theoretical analysis of our method to design stronger adversarial attacks. These attacks, if fall into the wrong hands, might cause severe societal problems. Thus, we encourage our machine learning community to further explore this field and be judicious to avoid misunderstanding or misusing of our method. Moreover, we propose to establish more reliable adversarial robustness checking routines for machine learning models deployed in safety-critical applications. For example, we should test these models with the latest adversarial attacks and make corresponding updates to them annually.
}

\end{document}